\documentclass{article}

\usepackage[preprint]{neurips_2026}

\usepackage[utf8]{inputenc}
\usepackage[T1]{fontenc}
\usepackage[hypertexnames=false]{hyperref}
\hypersetup{hidelinks}
\usepackage{url}
\usepackage{booktabs}
\usepackage{amsfonts}
\usepackage{amsmath}
\usepackage{amssymb}
\usepackage{amsthm}
\usepackage{mathtools}
\usepackage{nicefrac}
\usepackage{microtype}
\usepackage{xcolor}
\usepackage{enumitem}
\usepackage{graphicx}
\usepackage{multirow}
\usepackage{algorithm}
\usepackage{algpseudocode}
\usepackage{tikz}
\usetikzlibrary{positioning, arrows.meta, decorations.pathreplacing, calc, fit, backgrounds, shadows.blur, shapes.geometric}

\newtheorem{theorem}{Theorem}[section]
\newtheorem{proposition}[theorem]{Proposition}
\newtheorem{lemma}[theorem]{Lemma}
\newtheorem{corollary}[theorem]{Corollary}

\DeclareMathOperator{\softplus}{softplus}
\DeclareMathOperator{\sigmoid}{sigmoid}

\title{Cumulative-Goodness Free-Riding in Forward-Forward Networks:\\
Real, Repairable, but Not Accuracy-Dominant}

\author{%
  Amirhossein Yousefiramandi
}

\date{}

\begin{document}

\maketitle

\begin{abstract}
Forward-Forward (FF) training allows each layer to learn from a local
goodness criterion. In cumulative-goodness variants, however, later
layers can inherit a task that earlier layers have already partially
separated. We formalize this phenomenon as \emph{layer free-riding}:
under the softplus FF criterion, the class-discrimination gradient
reaching block $d$ decays exponentially with the positive margin
accumulated by preceding blocks
(Theorem~\ref{thm:attenuation}). We then study three local
remedies---per-block, hardness-gated, and depth-scaled---that recover
current-layer separation measures without relying on backpropagated
gradients. On CIFAR-10 and CIFAR-100, these remedies dramatically
improve layer-separation statistics, with $4{\times}$--$45{\times}$
gains in deeper layers, while changing accuracy by less than one
percentage point for non-degenerate training procedures. Tiny
ImageNet provides a tougher cross-dataset check for our selected
block-wise configuration and reveals the same qualitative gap between
layer-health diagnostics and final accuracy. Calibration experiments
further show that architecture and augmentation choices have a larger
effect on final accuracy than the training-rule modifications studied
here. Cumulative free-riding is therefore a real and repairable
optimization pathology. Nonetheless, for the FF training rules,
architectures, and datasets we study, it is not the dominant factor
limiting achievable accuracy.
\end{abstract}

\section{Introduction}

The Forward-Forward (FF) algorithm~\citep{hinton2022forward} replaces the
backward pass of backpropagation (BP) with a second forward pass on
``negative'' data, training each layer greedily to maximise a scalar
\emph{goodness} on positive examples and minimise it on negative ones.
This local, layer-wise learning rule is appealing for local-learning
and memory-efficiency reasons (no activations need be stored across
layers during training); we treat the resulting system as an
engineering testbed for local objectives, not as a biological model. Yet FF methods have consistently underperformed BP on image
classification: the original algorithm achieves roughly 59--75\% on
CIFAR-10~\citep{hinton2022forward}, and subsequent improvements have
narrowed but not closed the gap~\citep{scff2025,trifecta2024,distanceforward2024,deeperforward2025}.

\paragraph{A tested hypothesis for FF underperformance.}
We test a concrete optimisation hypothesis: cumulative goodness can
make later blocks inherit an already-separated problem, causing
\textbf{layer free-riding}. We show this pathology is real and
repairable, and then show that repairing it is not, by itself, enough
to close the accuracy gap in our controlled setting. We provide three
independent, quantitative signatures of the failure mode:
\begin{itemize}[noitemsep, leftmargin=*]
  \item \emph{Flat separation.} Per-layer wrong-label separation
        ($\text{sep}_\text{nl}$) changes little with depth in vanilla
        variants; current-block diagnostics
        ($\text{sep}^\text{cur}_\text{nl}$) and loss-collapse metrics
        reveal substantially larger allocation differences in later
        blocks.
  \item \emph{Loss collapse.} The block-level discrimination loss vanishes at
        deeper layers in vanilla settings, indicating those layers receive no
        meaningful training signal.
  \item \emph{Depth saturation.} Two blocks capture $\geq$98\% of four-block
        accuracy across all 13 variants evaluated, regardless of architecture
        or training schedule.
\end{itemize}

\paragraph{Architecture matters---but so does augmentation.}
A second obstacle is architectural mismatch. Table~\ref{tab:bp_ff_ablation}
shows that naive FF applied to a standard CNN yields only $29.4\%$
accuracy---far below BP and far below the co-designed FF backbone. Architecture co-design
is therefore necessary: FF on our co-designed backbone reaches $89.03\%$.
However, the relationship between FF and BP on this architecture is nuanced.
The FF-pipeline result ($89.03\%$) exceeds the weak-augmentation BP
baseline ($86.90\%$), but this is \emph{not} an
augmentation-controlled comparison: FF and weak-aug BP use different
augmentation recipes (App.~\ref{sec:full_config}). When BP is given a
strong augmentation pipeline closer to the FF setting, it reaches
$93.90\%$ on the same stripped backbone---substantially above FF. This
indicates augmentation/architecture as confounders in any FF-vs-BP
comparison; the controlled finding is that, holding the backbone
fixed, eliminating cumulative accumulation ($\gamma{=}0$) removes
free-riding and yields the highest FF accuracy ($91.98\%$ val).

This paper is primarily a mechanistic analysis with a negative-result
conclusion: cumulative layer free-riding is real and repairable, yet
repairing it does not materially close the accuracy gap in our
controlled setting. We do not claim a universal limitation of FF, nor
that our architecture is optimal: the claim is scoped to the
cumulative-goodness softplus objectives, architectures, and datasets
evaluated here. A natural hypothesis is that deeper FF networks
underperform because later layers free-ride on early-layer separation.
If this were the dominant bottleneck, enforcing independent per-block
discrimination should improve final accuracy. Our experiments falsify
that implication: block health improves dramatically, but accuracy
barely moves.

\paragraph{Contributions.}
\begin{enumerate}[noitemsep, leftmargin=*]
  \item \textbf{Mechanism.} We formalise cumulative-goodness
        free-riding and prove exact softplus parameter-gradient
        attenuation under positive upstream margin
        (Theorem~\ref{thm:attenuation}).
  \item \textbf{Repairs.} We introduce block-local ($\gamma{=}0$),
        hardness-gated, and depth-scaled current-block repairs that
        restore per-layer discrimination diagnostics with no
        end-to-end or cross-block backpropagation
        (Eq.~\ref{eq:depth_scaled_main}, \S\ref{sec:method}).
  \item \textbf{Negative result.} Across CIFAR-10, CIFAR-100, and Tiny
        ImageNet, large layer-health improvements produce sub-1\,pp
        accuracy changes among non-degenerate variants
        (Table~\ref{tab:main_dissociation}). Architecture and
        augmentation calibrations move accuracy by tens of
        percentage points; contextual prior-FF comparisons
        (Tab.~\ref{tab:related_work_3ds}) are listed but not
        architecture-controlled. The result narrows future FF
        research toward objectives, representations, and inference
        rules that exploit redistributed layer-wise evidence.
\end{enumerate}


\section{Related Work}
\label{sec:related}

\paragraph{Forward-Forward algorithm.}
\citet{hinton2022forward} introduced the FF algorithm, training each layer
with a local goodness objective using a second ``negative'' forward pass.
Subsequent work has improved FF in several directions, but the comparisons
across papers are not apples-to-apples: they differ in architecture size,
supervision signal, whether they are strictly forward-only (no
cross-block gradient propagation), augmentation, and whether results
include test-time augmentation. We compare against
\citet{hinton2022forward,scff2025,trifecta2024,distanceforward2024,deeperforward2025,cffm2025,asge2025}
along these axes; the full comparison table is in
Appendix~\ref{sec:related_work_full} (Table~\ref{tab:related_work}). To our knowledge, our work is the first to isolate cumulative-goodness
free-riding as a softplus attenuation mechanism and to empirically
separate layer-health repair from final accuracy. Closely related are
\citet{deeperforward2025}, who address depth scaling for FF on CNNs,
and \citet{asge2025}, who report the strongest reported
binary-goodness FF CIFAR single-crop numbers we are aware of
(90.62\% / 65.42\%); we treat both as contemporaneous architecture-and-augmentation contributions.
Recent FF variants
(\citealt{scff2025,trifecta2024,distanceforward2024,deeperforward2025,asge2025})
improve scalability and accuracy primarily through architectural
changes, goodness-function redesign, and deeper training recipes. Our
contribution is mechanistically orthogonal: we isolate a specific
layer-wise free-riding pathology and show that repairing it---while
producing the expected large changes in per-block diagnostics---does
not, by itself, close the remaining accuracy gap. Architecture and
augmentation choices similar to those used by these recent FF variants
account for substantially larger accuracy changes within our
intervention space.

\paragraph{Layer collaboration and free-riding.}
\citet{layercollab2023} study a related but distinct failure mode in
FF: independently trained layers can fail to communicate useful
information, limiting collaboration across depth. Our finding is
complementary. In cumulative-goodness FF, earlier layers communicate
\emph{too much} positive margin: later blocks see a saturated softplus
objective and receive an exponentially attenuated direct incentive to
improve their own current-block margin
(Theorem~\ref{thm:attenuation}). The pathology we isolate is
therefore not lack of collaboration, but attenuation caused by
cumulative-margin aggregation, and the repairs we test are removal or
gating of that aggregation.

Concretely, Hinton FF~\citep{hinton2022forward} introduces
positive/negative local goodness; Trifecta~\citep{trifecta2024}
adds three techniques for deeper FF training;
DeeperForward~\citep{deeperforward2025} contributes goodness-mean
normalisation; Distance-Forward~\citep{distanceforward2024} replaces
binary goodness with metric/N-pair losses;
SCFF~\citep{scff2025} introduces self-contrastive negatives;
ASGE~\citep{asge2025} scales FF on CNNs via spatial goodness
encoding; HFF~\citep{hff2026} introduces a hyperspherical
class-prototype local objective. None of these analyse cumulative
free-riding as a softplus attenuation mechanism, and none separate
layer-health repair from final accuracy as their primary
contribution. The full numeric comparison and shared-axis
classification (architecture size, supervision, strict/local
gradient regime, augmentation, TTA) is in
Appendix~\ref{sec:related_work_full} (Table~\ref{tab:related_work}).

\paragraph{Local and greedy learning.}
Our work relates to a broader family of layer-wise training methods. Deeply
Supervised Nets~\citep{lee2015dsn} attach intermediate objectives to hidden
layers to make internal features directly discriminative; Difference Target
Propagation (DTP)~\citep{lee2015difference} trains layers with local
reconstruction targets; Predictive Coding networks~\citep{rao1999predictive}
use local prediction errors; PEPITA~\citep{dellaferrera2022pepita} combines
forward passes with random feedback;
\citet{forwardlayer2024} train each layer with a local triplet loss
maximising the per-layer separation index, using local backprop within
each layer. These methods share FF's preference for local signals but
differ in how the per-layer objective is constructed and, in
particular, do not use the FF goodness function---we therefore
exclude them from the strict-FF comparison in
Table~\ref{tab:related_work}. Two further FF-related works fall into
adjacent categories: Channel-wise Competitive learning
(CwC;~\citealt{cwc2024}) replaces FF's binary goodness with a
class-aligned channel-group competition rule (we report it for
context---$78.11\%$ on CIFAR-10, $51.23\%$ on CIFAR-100); and the
Scalable Forward-Forward algorithm (SFF;~\citealt{sff2025}) is a hybrid
that applies FF across blocks and backpropagation \emph{within} each
block, so it is not strict FF and we exclude it from
Table~\ref{tab:related_work} on the same grounds. Our diagnosis of
free-riding applies in principle to any multi-layer local-learning
scheme in which later modules can inherit already-solved subproblems
from earlier ones.

\paragraph{Mixture-of-Experts.}
We incorporate Mixture-of-Experts (MoE) FFN layers~\citep{shazeer2017outrageously}
with Switch-Transformer load balancing~\citep{fedus2022switch} and ST-MoE
z-loss~\citep{zoph2022stmoe}. These components are used as architectural
building blocks and are not algorithmic contributions of this paper.

\paragraph{Contrastive learning.}
We use Supervised Contrastive (SupCon) loss~\citep{khosla2020supcon} as an
auxiliary objective alongside the FF goodness loss. We find that the
contrastive weight can be reduced 3$\times$ with $<$0.1\% accuracy cost,
confirming it is a regulariser rather than a primary driver.

\paragraph{Understanding deep learning.}
Our work is in the tradition of analysis papers that reveal \emph{why}
architectural and training choices succeed or
fail:~\citet{zhang2017understanding} showed that deep networks can memorise
random labels, calling into question classical generalisation
theory;~\citet{frankle2019lottery} identified sparse subnetworks (lottery
tickets) that account for most of a network's performance;
and~\citet{raghu2021vision} compared vision transformer representations to
CNNs, finding qualitatively different attention structures. Our contribution
is in this spirit: we identify a failure mode (free-riding), prove its
mechanism (gradient attenuation), demonstrate that fixing it is possible
(adaptive gating), and show that fixing it is insufficient---under
our controls, architecture and augmentation choices move accuracy
more than free-riding repair, though we do not claim they are the
only remaining bottlenecks.


\section{Method}
\label{sec:method}

We use block-local gradient training with no end-to-end or cross-block
backpropagation. Within each block, autograd computes parameter
gradients from the local block loss (below); between blocks,
activations and previous-block goodness summaries are computed under
\texttt{torch.no\_grad} and detached before they enter the block-$d$
loss, so the block-$d$ computational graph contains parameters of
block $d$ only (Lemma~\ref{lem:locality}, Appendix~\ref{app:locality}).
A verification harness asserts zero gradients on blocks
$0{\ldots}\ell{-}1$ during block-$\ell$'s backward across all four
image-domain trainers.

\subsection{Architecture}

FF learning requires architectures with different inductive biases than
those designed for backpropagation. Our model is a convolutional stem +
patch embedding feeding $L{=}4$ \emph{FF Hybrid Blocks}; each block
takes the previous block's tokens (no upstream gradient) and emits a
scalar goodness value, trained high for the correct label and low for
incorrect ones. Each block contains
(a)~multi-head self-attention with rotary position embeddings (RoPE,~\citealt{su2024roformer});
(b)~a Mixture-of-Experts FFN (32 experts, top-4,~\citealt{shazeer2017outrageously,fedus2022switch}; with a GEGLU activation,~\citealt{shazeer2020glu});
(c)~an attention-pooling head;
(d)~a four-slot goodness head (prototype alignment, activation
energy, attention sharpness, learned; with memory disabled---our
default---the attention-sharpness slot is zero, leaving three active
aspects, see App.~\ref{app:loss_formulas});
(e)~$L_2$ token normalisation. The $L_2$ normalisation prevents
magnitude explosion across blocks and outperforms memory
cross-attention in our ablations despite $5{\times}$ smaller goodness
magnitudes. Labels are injected at every
block~\citep{hinton2022forward}, which beats first-block-only injection
(Section~\ref{sec:ablation}).

\subsection{The Free-Riding Failure Mode}

For one training example $x$, its correct label $y$, and one negative
hypothesis $y^{-}$ (either a wrong label or a wrong image), let
\begin{equation}
  m^{(d)}(x,y,y^{-})
  \;=\;
  g^{(d)}_{\text{curr},+}(x,y)
  \;-\;
  g^{(d)}_{\text{curr},-}(x,y^{-})
\end{equation}
denote the \emph{current-block margin} at depth $d$. In our implementation,
the cumulative margin used by the block-level FF loss is
\begin{equation}
  M^{(d)}(x,y,y^{-})
  \;=\;
  m^{(d)}(x,y,y^{-})
  \;+\;
  \gamma \sum_{j<d} m^{(j)}(x,y,y^{-}),
  \label{eq:cum_margin}
\end{equation}
where $\gamma \in [0,1]$ is the history weight
($\gamma{=}0.7$ in the code). The barrier used by both the cumulative and the
current-block objectives is
\begin{equation}
  \ell_\beta(u)
  \;=\;
  \softplus(-\beta u)
  \;=\;
  \log(1 + e^{-\beta u}),
  \qquad \beta > 0.
  \label{eq:barrier}
\end{equation}
The cumulative block loss averages this barrier over the two negative types:
wrong-label negatives (NL) and wrong-image negatives (NI). Since this average
is linear, the analysis below is stated for one generic negative stream and
extends immediately to the full loss.

This notation makes the free-riding pathology explicit. If
$\sum_{j<d} m^{(j)}$ is already large because earlier blocks have separated the
example well, then the cumulative loss seen by block $d$ becomes almost flat in
its own margin $m^{(d)}$.

\begin{theorem}[Per-example attenuation of cumulative FF discrimination]
\label{thm:attenuation}
Let $P^{(d-1)}{=}\sum_{j<d} m^{(j)}$ and
$M_\gamma^{(d)}\!=\!m^{(d)}+\gamma P^{(d-1)}$, where $P^{(d-1)}$ is
detached from the parameters $\theta_d$ of block $d$. For the
softplus barrier $\ell_\beta(u)=\softplus(-\beta u)$, $\beta>0$,
define the attenuation ratio
\begin{equation}\label{eq:attenuation_ratio}
  R_\gamma(m,P)
  \;=\;
  \frac{|\ell_\beta'(m+\gamma P)|}{|\ell_\beta'(m)|}
  \;=\;
  \frac{1+e^{\beta m}}{1+e^{\beta(m+\gamma P)}}.
\end{equation}
Then, for every finite $m=m^{(d)}(\theta_d)$, the parameter gradient
of the cumulative loss equals that of the strictly-local loss, scaled
by $R_\gamma$:
\begin{equation}\label{eq:param_attenuation}
  \nabla_{\theta_d}\ell_\beta\!\bigl(M_\gamma^{(d)}\bigr)
  \;=\;
  R_\gamma(m^{(d)},P^{(d-1)})\,
  \nabla_{\theta_d}\ell_\beta\!\bigl(m^{(d)}\bigr).
\end{equation}
If $m^{(d)}\!\ge\!0$, $P^{(d-1)}\!\ge\!0$, and $\gamma\!\ge\!0$, then
\begin{equation}
  e^{-\beta\gamma P^{(d-1)}}
  \;\le\;
  R_\gamma(m^{(d)},P^{(d-1)})
  \;\le\;
  \min\bigl\{1,\; 2\,e^{-\beta\gamma P^{(d-1)}}\bigr\}.
\end{equation}
Thus the cumulative FF discrimination gradient reaching block $d$ is
exponentially attenuated in the already-accumulated positive upstream
margin. If $P^{(d-1)}\!<\!0$ the ratio can exceed one, so the claim is
specifically about the already-separated regime.
\end{theorem}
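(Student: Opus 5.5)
The argument is a direct chain-rule computation followed by an elementary two-sided estimate of a ratio of exponentials.

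\textbf{Step 1: derivative of the barrier.} Differentiating $\ell_\beta(u)=\log(1+e^{-\beta u})$ gives
\[
\ell_\beta'(u)=-\beta\,\frac{e^{-\beta u}}{1+e^{-\beta u}}=-\frac{\beta}{1+e^{\beta u}}.
\]
This is strictly negative and finite for every finite $u$. Hence
\[
\frac{|\ell_\beta'(m+\gamma P)|}{|\ell_\beta'(m)|}=\frac{1+e^{\beta m}}{1+e^{\beta(m+\gamma P)}},
\]
which confirms the closed form in \eqref{eq:attenuation_ratio}. Because both derivatives are negative, the signed ratio $\ell_\beta'(m+\gamma P)/\ell_\beta'(m)$ equals $R_\gamma$ itself, with no sign flip.

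\textbf{Step 2: the parameter-gradient identity.} Since $P^{(d-1)}$ is detached from $\theta_d$ (as guaranteed by Lemma~\ref{lem:locality}), we have $\nabla_{\theta_d}M_\gamma^{(d)}=\nabla_{\theta_d}m^{(d)}$. The chain rule then gives
\[
\nabla_{\theta_d}\ell_\beta(M_\gamma^{(d)})=\ell_\beta'(m+\gamma P)\,\nabla_{\theta_d}m^{(d)},
\qquad
\nabla_{\theta_d}\ell_\beta(m^{(d)})=\ell_\beta'(m)\,\nabla_{\theta_d}m^{(d)}.
\]
Because $\ell_\beta'(m)\neq 0$, we can write $\ell_\beta'(m+\gamma P)=R_\gamma\,\ell_\beta'(m)$, and \eqref{eq:param_attenuation} follows. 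This identity holds for every finite $m$ and every real $P$; the identity itself needs no sign assumptions.

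\textbf{Step 3: the bounds.} Set $a=e^{\beta m}\ge 1$ (since $m\ge0$) and $t=e^{\beta\gamma P}\ge 1$ (since $\gamma,P\ge0$). Then $R=(1+a)/(1+at)$, and there are three inequalities to check.
\begin{itemize}
\item \emph{$R\le 1$.} This is immediate because $t\ge1$.
\item \emph{Lower bound $R\ge 1/t$.} This is equivalent to $t(1+a)\ge 1+at$, i.e.\ $t\ge 1$, which holds.
\item \emph{Upper bound $R\le 2/t$.} This is equivalent to $t(1+a)\le 2(1+at)$, i.e.\ $t+at\le 2+2at$. That inequality holds because $t\le at$ (using $a\ge1$) and $at\le 2at$.
\end{itemize}
Combining these gives $e^{-\beta\gamma P}\le R\le\min\{1,2e^{-\beta\gamma P}\}$. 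The upper bound is the step most sensitive to the hypotheses: it requires $a\ge1$, which is exactly where $m^{(d)}\ge0$ enters. Without that assumption, $t(1+a)$ could approach $t$ while $2(1+at)$ approaches $2$, and the bound fails for large $t$.

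\textbf{Step 4: the closing remark.} For $P<0$ (with $\gamma>0$) we have $t<1$, so $1+at<1+a$ and therefore $R>1$. This shows the attenuation claim genuinely depends on the already-separated regime. Finally, since the full loss is a linear average over the NL and NI streams, the identity and bounds apply to each stream separately and extend to the full loss, with each stream using its own $R_\gamma$.
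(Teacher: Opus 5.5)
Your proposal is correct and follows essentially the same route as the paper's proof: the chain rule with $P^{(d-1)}$ detached, the explicit derivative $\ell_\beta'(u)=-\beta/(1+e^{\beta u})$, and elementary bounds on the resulting ratio. The differences are cosmetic. You verify the bounds by cross-multiplying in $a=e^{\beta m}$ and $t=e^{\beta\gamma P}$, whereas the paper factors out $e^{-b}$ and drops the $1$ from the denominator. You also make explicit that both derivatives are negative, so the signed ratio equals $R_\gamma$; the paper's ``dividing the two expressions'' step leaves this implicit.
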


\noindent\emph{Remark.} Equation~\eqref{eq:param_attenuation} is a
parameter-gradient statement: the same scalar $R_\gamma$ multiplies
the local update direction, so layer free-riding is not an
artefact of restricting attention to the scalar margin derivative.
Proof and a corollary form $R_d(m,P;\gamma)$ as a measurable
diagnostic are in Appendix~\ref{app:proofs}.
The original generic exponential-tail form,
$|\phi'(u)|\!\le\!Ce^{-au}$, applies to any decreasing barrier with
exponentially decaying derivative and is recovered as a corollary;
we use the exact softplus statement above because it matches our
implementation.

Theorem~\ref{thm:attenuation} is a precise statement of layer
free-riding: later blocks inherit a progressively easier problem and
therefore receive a progressively weaker incentive to improve their
own margin (a surrogate-loss statement on the softplus
cumulative-goodness objective, not a global-convergence or universal
FF-underperformance claim). The assumptions ($m^{(d)}{\ge}0$,
$P^{(d-1)}{\ge}0$) hold on average at every block of a trained
CIFAR-10 L4/D128 cumulative model
(Appendix~\ref{app:proofs}, Table~\ref{tab:assumption_coverage}), and
the batch-mean attenuation ratio $R_d$ collapses to $\le 10^{-2}$
from block~1 onward under $\gamma{=}0.7$.

FF prediction $\hat{y}{=}\arg\max_y S_y(x)$ with $S_y{=}\sum_d g_d(\cdot,y)$
is correct iff the \emph{cumulative class margin}
$\Delta(x){=}S_{y^\star}{-}\max_{y\neq y^\star}\!S_y$ is positive.
\begin{proposition}[Compensating redistribution leaves FF inference unchanged]
\label{prop:redistribution}
Fix any two depths $a\ne b$ and any class-dependent function
$q(x,y)$. Define $\tilde g_a(x,y){=}g_a(x,y)+q(x,y)$,
$\tilde g_b(x,y){=}g_b(x,y)-q(x,y)$, and
$\tilde g_d{=}g_d$ for $d\notin\{a,b\}$. Then
$\sum_d \tilde g_d(x,y) = \sum_d g_d(x,y)$ for every $y$, so all
cumulative FF predictions are unchanged. For any $y'\ne y$, the
current-block wrong-label margin at depth $a$,
$\tilde m^{(a)}(x,y,y') = \tilde g_a(x,y) - \tilde g_a(x,y')$,
shifts by $q(x,y) - q(x,y')$ relative to the original
$m^{(a)}(x,y,y')$, and at depth $b$ by its negative. Per-block margin
diagnostics are therefore not identifiable from cumulative
predictions alone.
\end{proposition}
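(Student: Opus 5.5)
The argument is a direct algebraic verification in three steps, each following from the definitions. No earlier result is needed, except that Theorem~\ref{thm:attenuation} motivates why per-block margins are of interest.

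\textbf{Step 1: the cumulative score is invariant.} For each fixed $x$ and $y$, split $\sum_d \tilde g_d(x,y)$ into the terms $d\notin\{a,b\}$ and the two terms $d=a$ and $d=b$. The first group equals the corresponding sum of $g_d$ by definition. The second group is
\[
\bigl(g_a(x,y)+q(x,y)\bigr)+\bigl(g_b(x,y)-q(x,y)\bigr) = g_a(x,y)+g_b(x,y).
\]
Hence $\tilde S_y(x)=S_y(x)$ for every class $y$. It follows that $\arg\max_y \tilde S_y(x)=\arg\max_y S_y(x)$, with ties included, because the whole score vector is identical. The cumulative class margin $\Delta(x)$ is also unchanged, so every FF prediction and its correctness is preserved.

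\textbf{Step 2: the per-block margins shift.} At depth $a$ the added $q$ does not cancel within the block:
\[
\tilde m^{(a)}(x,y,y')=g_a(x,y)+q(x,y)-g_a(x,y')-q(x,y') = m^{(a)}(x,y,y')+\bigl(q(x,y)-q(x,y')\bigr).
\]
The same computation with $-q$ at depth $b$ gives the negated shift. Blocks $d\notin\{a,b\}$ keep their margins unchanged.

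\textbf{Step 3: non-identifiability.} Exhibit a $q$ with $q(x,y)\neq q(x,y')$ for some pair, for example $q(x,y)=c\,\mathbf{1}[y=y_0]$ with $c\neq 0$. This produces two goodness assignments that yield identical cumulative predictions, and indeed identical scores $S_y$, yet have different current-block margins at depths $a$ and $b$. Any functional of the cumulative predictions therefore takes the same value on both assignments, so it cannot recover the per-block margin diagnostics.

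The only point needing care is the scope of the claim. It concerns functions of the goodness values, not realizability by actual network parameters. The proposition only asserts non-identifiability from cumulative outputs, so no realizability argument is required. If one were wanted, one could take $q$ to be a class-dependent bias added to the goodness heads, which the architecture can express.
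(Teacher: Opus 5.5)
Your proposal is correct and follows the same route as the paper. The paper gives no separate proof beyond the inline observation that the $\pm q$ terms cancel in $\sum_d \tilde g_d$, so the score vector and hence every cumulative prediction is unchanged, together with the direct computation of the margin shifts at depths $a$ and $b$; your explicit witness $q(x,y)=c\,\mathbf{1}[y=y_0]$ with $c\neq 0$ just makes the non-identifiability claim concrete.
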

Free-riding repair shifts \emph{which} block contributes to $\Delta$
but not $\Delta$ itself; per-block health is a diagnostic of learning
allocation, not a sufficient cause of accuracy. The exact-zero-sum
condition above generalises to a small-norm condition on
cumulative-score perturbations:

\begin{proposition}[Prediction and accuracy stability under cumulative-score perturbations]
\label{prop:prediction_stability}
Let $S^A_y(x)$ and $S^B_y(x)$ be two FF classifiers' cumulative class
scores, with $\hat y_A(x)=\arg\max_y S^A_y(x)$,
\(
  \Delta_A(x) = S^A_{\hat y_A(x)}(x) - \max_{y\ne \hat y_A(x)} S^A_y(x)
\),
and \(E(x) = \max_y |S^A_y(x) - S^B_y(x)|\). For any $t\ge 0$,
\[
  \Pr\!\bigl[\hat y_A(x)\ne \hat y_B(x)\bigr]
  \;\le\;
  \Pr\!\bigl[\Delta_A(x)\le 2t\bigr]
  \;+\;
  \Pr\!\bigl[E(x)>t\bigr],
\]
and consequently, for true labels $Y$,
\[
  |\mathrm{Acc}(A)-\mathrm{Acc}(B)|
  \;\le\;
  \Pr\!\bigl[\hat y_A(x)\ne \hat y_B(x)\bigr]
  \;\le\;
  \Pr\!\bigl[\Delta_A(x)\le 2t\bigr]
  \;+\;
  \Pr\!\bigl[E(x)>t\bigr].
\]
\end{proposition}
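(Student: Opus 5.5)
The plan is to prove a deterministic, pointwise implication first and then take probabilities. The implication is: if $E(x)\le t$ and $\Delta_A(x)>2t$, then $\hat y_B(x)=\hat y_A(x)$. This is the contrapositive of the event inclusion
\[
\{\hat y_A(x)\ne \hat y_B(x)\}\subseteq\{\Delta_A(x)\le 2t\}\cup\{E(x)>t\}.
\]
A union bound then gives the first display. Ties in the argmax are handled by fixing any deterministic tie-breaking rule. The argument below does not depend on that rule, because under the hypotheses the maximiser of $S^B$ turns out to be unique.

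To prove the pointwise claim, fix $x$ and write $\hat y=\hat y_A(x)$. Take any $y\ne\hat y$. By the definition of $E$, each score moves by at most $t$ from $A$ to $B$:
\[
S^B_{\hat y}\ge S^A_{\hat y}-t, \qquad S^B_y\le S^A_y+t.
\]
Subtracting,
\[
S^B_{\hat y}-S^B_y\ \ge\ S^A_{\hat y}-S^A_y-2t\ \ge\ \Delta_A(x)-2t\ >\ 0.
\]
So $\hat y$ strictly beats every other label under $S^B$, which forces $\hat y_B(x)=\hat y$. The factor $2$ in the threshold $2t$ comes exactly from this step, since the two scores can move toward each other by $t$ each.

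For the accuracy statement, write
\[
\mathrm{Acc}(A)-\mathrm{Acc}(B)=\mathbb E\bigl[\mathbf 1\{\hat y_A=Y\}-\mathbf 1\{\hat y_B=Y\}\bigr].
\]
The integrand vanishes whenever $\hat y_A=\hat y_B$, and it always has absolute value at most $1$. Hence
\[
|\mathrm{Acc}(A)-\mathrm{Acc}(B)|\le \Pr[\hat y_A\ne\hat y_B],
\]
and chaining this with the first bound gives the second display.

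There is no real obstacle here. The only points needing care are the tie convention and the edge case $t=0$. When $t=0$, the event $\{E\le 0\}$ means the two score vectors are identical, and the inequality still holds. Measurability of $\Delta_A$ and $E$ is routine, since both are finite maxima of measurable functions.
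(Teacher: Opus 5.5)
Your proof is correct and takes essentially the same route as the paper. Both arguments rest on the pointwise fact that $E(x)\le t$ and $\Delta_A(x)>2t$ force $\hat y_B(x)=\hat y_A(x)$, via the same chain losing $t$ on each side, and then split on $\{E(x)\le t\}$ versus $\{E(x)>t\}$, which is exactly your event inclusion plus union bound. You also spell out the step $|\mathrm{Acc}(A)-\mathrm{Acc}(B)|\le\Pr[\hat y_A\ne\hat y_B]$, which the paper asserts but does not argue, and your bound $|\mathbf 1\{\hat y_A=Y\}-\mathbf 1\{\hat y_B=Y\}|\le\mathbf 1\{\hat y_A\ne\hat y_B\}$ closes it.
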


\noindent The proof is in Appendix~\ref{app:proofs}. The proposition
bounds the maximum accuracy effect of any local intervention by the
mass of a thin cumulative-margin band $\{\Delta(x)\le 2t\}$ plus the
tail of $E$. The paired-bootstrap disagreement analysis in
Appendix~\ref{app:gated_full} is an empirical proxy for this bound:
the CIFAR-100 variants disagree on ${\sim}13\%$ of test examples, but
correct$\to$wrong and wrong$\to$correct flips are nearly balanced
(within $\sim$10\% of each other on every seed), yielding sub-1\,pp
net accuracy changes despite $4$--$45{\times}$ swings in per-block
diagnostics
(Tables~\ref{tab:dissociation},~\ref{tab:cifar100_dissociation_multiseed}).

\begin{figure}[t]
  \centering
  \includegraphics[width=\textwidth]{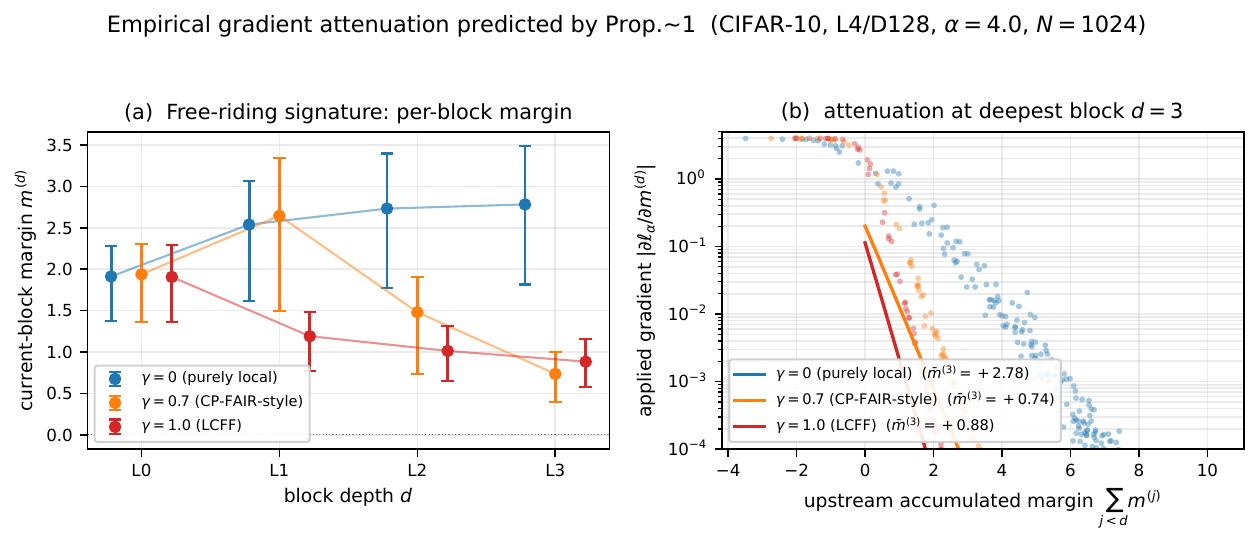}
  \caption{Empirical verification of Theorem~\ref{thm:attenuation}
  on three trained CIFAR-10 checkpoints (L4/D128, $\beta{=}4$,
  $N{=}1024$ test examples; all three variants share architecture and
  training schedule, only $\gamma$ differs). \textbf{(a)}~Per-block
  current-block margin $m^{(d)}$. Under $\gamma{=}0$ the margin grows
  monotonically across depth ($1.9\!\to\!2.8$); under $\gamma{=}0.7$
  it grows then collapses ($1.9\!\to\!2.65\!\to\!1.5\!\to\!0.7$);
  under $\gamma{=}1.0$ it collapses immediately
  ($1.9\!\to\!1.2\!\to\!1.0\!\to\!0.85$)---the predicted free-riding.
  \textbf{(b)}~At the deepest block ($d{=}3$) we plot per-example
  $|\partial\ell_\beta/\partial m^{(d)}|$ against the upstream
  accumulated margin $\sum_{j<d} m^{(j)}$. The solid lines show the
  analytical envelope $\beta\sigma(-\beta(\bar m^{(d)}+\gamma\sum_{j<d}m^{(j)}))$.
  For $\gamma{=}0$ the envelope is flat (no upstream dependence); for
  $\gamma{>}0$ the gradient decays exponentially with upstream margin
  exactly as Theorem~\ref{thm:attenuation} predicts. The
  attenuation bound applies in the already-separated regime
  ($m^{(d)}\!\ge\!0$, $P^{(d-1)}\!\ge\!0$); outside this regime
  cumulative history can amplify rather than attenuate the local
  gradient.}
  \label{fig:grad_attenuation}
\end{figure}

\paragraph{Block-health diagnostics.} The empirical signatures of
free-riding we report throughout the paper are summarised by four
quantities, defined here once for reference:
\begin{equation}
\label{eq:diagnostics}
\begin{aligned}
  \text{sep}^\text{cur}_\text{nl}(d) &\;:=\; \mathbb{E}_{(x,y)}\!\left[
    g^{(d)}_{\text{curr},+}(x,y)
    \;-\; \max_{y'\neq y}\, g^{(d)}_{\text{curr},-}(x,y')
  \right], \\[2pt]
  \text{sep}_\text{nl}(d) &\;:=\; \mathbb{E}_{(x,y)}\!\left[
    G^{(d)}_{+}(x,y)
    \;-\; \max_{y'\neq y}\, G^{(d)}_{-}(x,y')
  \right], \\[2pt]
  \text{DS}(d) &\;:=\; \frac{\mathrm{Acc}(\text{using only blocks } 0..d)}{\mathrm{Acc}(\text{full } L\text{-block model})}, \\[2pt]
  \text{LC}(d) &\;:=\; \mathbb{E}_{(x,y,y^-)}\!\left[
    \ell_\beta\!\bigl(\bar M^{(d)}(x,y,y^-)\bigr)
  \right]_{\text{end-of-Stage-1}},
\end{aligned}
\end{equation}
where $G^{(d)}_{\pm}=\sum_{j\le d} g^{(j)}_{\pm}$ are cumulative
goodness scores and $\bar M^{(d)}\!=\!\sum_{j\le d}m^{(j)}$ is the
cumulative \emph{inference} margin (distinct from the cumulative
\emph{loss} margin $M^{(d)}_\gamma\!=\!m^{(d)}+\gamma P^{(d-1)}$ used
inside the FF training objective). We do not treat a small
$\text{LC}(d)$ as healthy by itself: low $\text{LC}(d)$ together with
collapsed $\text{sep}^\text{cur}_\text{nl}(d)$ is our diagnostic of
free-riding---the cumulative classifier is already separating
positives from negatives, so block $d$'s own discrimination pressure
vanishes. A well-allocated solution should keep
$\text{sep}^\text{cur}_\text{nl}(d)$ non-collapsed even when the
cumulative score is confident. The \emph{attenuation ratio}
$R_d(x){=}|\ell_\beta'(M^{(d)}_\gamma)|/|\ell_\beta'(m^{(d)})| \le 2
e^{-\beta\gamma P^{(d-1)}}$ for $m^{(d)}\!\ge\!0$ measures the
fraction of the local-only gradient surviving cumulative aggregation;
it is the empirical primitive in Figure~\ref{fig:grad_attenuation}.

\subsection{Depth-Scaled Current-Block Discrimination Loss}

To counteract free-riding, we add a loss that evaluates block $d$
\emph{in isolation}, on its current-block margin $m^{(d)}$, and scale its
weight with depth:
\begin{equation}\label{eq:depth_scaled_main}
  \mathcal{L}_\text{curr}^{(d)}
  \;=\;
  \frac{1}{B}\sum_{i=1}^B w_i^{(d)} \, \ell_\beta\!\bigl(m_i^{(d)}\bigr),
  \qquad
  \lambda_\text{curr}(d)
  \;=\;
  \lambda_0 \cdot \Bigl(1 + \rho \cdot \tfrac{d}{L-1}\Bigr).
\end{equation}
With $\lambda_0{=}0.25$ and $\rho{=}3.0$, the effective weights are
$[0.25, 0.50, 0.75, 1.00]$ for blocks $0$--$3$. The deepest block therefore
faces $4\times$ the direct discrimination pressure of the shallowest.

For $d{>}0$ examples are reweighted by
$w_i^{(d)} \propto \sigmoid(-\beta P_i^{(d-1)})$
where $P_i^{(d-1)} = \sum_{j<d} m_i^{(j)}$, so block $d$ focuses on
examples earlier blocks have not yet separated. Appendix~\ref{app:proofs}
proves these weights preserve the mean batch-loss scale and states
Theorem~\ref{thm:gradient_floor} (the current-block term supplies a
gradient floor of $\lambda_\text{curr}(d)\, w^{(d)} \beta / 2$ on
unresolved examples, independent of upstream margin).

\paragraph{A simpler alternative: $\gamma{=}0$.}
Setting $\gamma{=}0$ eliminates cumulative-loss attenuation
(Theorem~\ref{thm:attenuation}) by construction; it does
\emph{not} guarantee monotone per-block utility under all datasets
(harder data can still show peak-then-decline under $\gamma{=}0$ via
mechanisms outside our theorem; see Section~\ref{sec:tinyimagenet}).
Section~\ref{sec:ablation} shows $\gamma{=}0$ matches or beats every
other variant on CIFAR-10; the depth-scaled loss remains useful when
cumulative scoring ($\gamma{>}0$) is required.

\paragraph{Total loss per block.}
The implemented block loss is
\begin{equation}
\begin{aligned}
  \mathcal{L}^{(d)} = {} &
    \mathcal{L}_\text{aspects}^{(d)}
    + \lambda_\text{block} \, \mathcal{L}_\text{block}^{(d)}
    + \lambda_\text{curr}(d) \, \mathcal{L}_\text{curr}^{(d)}
    + \lambda_\text{depth} \, \mathcal{L}_\text{depth}^{(d)} \\
  & {} + \lambda_\text{con} \, \mathcal{L}_\text{SupCon}^{(d)}
    + \lambda_\text{recon} \, \mathcal{L}_\text{recon}^{(d)}
    + \lambda_\text{bal} \, \mathcal{L}_\text{bal}^{(d)}
    + \lambda_z \, \mathcal{L}_z^{(d)},
\end{aligned}
\label{eq:total_loss}
\end{equation}
where $\mathcal{L}_\text{block}^{(d)}$ is the cumulative-goodness
discrimination loss
(softplus separation between $G_+^{(d)}$ and the wrong-label /
wrong-image cumulative goodness, weighted by $\eta$;
explicit form in Eq.~\ref{eq:Lblock}, App.~\ref{app:loss_formulas}),
$\mathcal{L}_\text{depth}$ penalises violations of the margin
increments
$G^{(d)}_{+} - G^{(d-1)}_{+} \ge \delta_{+}$ and
$G^{(d-1)}_{-} - G^{(d)}_{-} \ge \delta_{-}$,
$\mathcal{L}_\text{recon}$ is the token reconstruction loss,
and $\mathcal{L}_\text{bal}, \mathcal{L}_z$ are the Switch/ST-MoE router
regularisers (full forms in App.~\ref{app:loss_formulas}). Appendix~\ref{app:proofs} proves two additional facts used by
our implementation: (i) the residual weights in~\eqref{eq:residual_weights}
preserve the mean batch loss scale while monotonically emphasising unresolved
examples, and (ii) the depth-order inequalities imply monotone growth of the
cumulative separation margin with depth.

The full architecture diagram (Figure~\ref{fig:architecture}) and the
per-block loss decomposition (Figure~\ref{fig:loss_decomposition}) are
in Appendix~\ref{sec:arch_diagrams}.

\paragraph{Ablation hierarchy.}
We separate three experimental questions. (i)~The \emph{causal
free-riding test} varies only how much cumulative history enters the
discrimination objective (block-local $\gamma{=}0$, adaptive/gated
history, or cumulative $\gamma{>}0$); architecture, hard-negative
mining, augmentation, and the Stage-1 schedule are otherwise held
fixed within each dataset-specific sweep. (ii)~\emph{Architecture and
augmentation calibrations} (Tab.~\ref{tab:bp_ff_ablation}) compare
co-designed vs.\ plain backbones and weak vs.\ BP-strong
augmentation; these are calibrations for effect-size, not the basis
of the free-riding causal claim. (iii)~\emph{Component and contextual
ablations} (App.~\ref{sec:hnm_deep_dive},~\ref{app:components}) report
leave-one-out and aspect-isolation sweeps separately and are not used
to support the free-riding causal claim.

\paragraph{Training protocol.}
\textbf{Stage~1} (362 epochs): all four blocks are trained jointly with
the loss above using SAM~\citep{foret2021sam} + AdamW~\citep{loshchilov2019adamw}
($\eta_\text{max}{=}10^{-3}$, cosine schedule, warmup 15 epochs, batch
size 512; RandAugment~\citep{cubuk2020randaugment}, no
Mixup~\citep{zhang2018mixup}/CutMix~\citep{yun2019cutmix} in Stage-1
since FF needs discrete labels). Each image is paired with a
hard-mined incorrect label: we draw $k$ candidate wrong labels
\emph{with replacement} from $\{0,\dots,C{-}1\}\setminus\{y\}$, score
them with an EMA teacher, and keep the hardest ($k$ counts
\emph{candidate draws}, not unique labels; CIFAR-10 ramps
$8\!\to\!16$, Tiny $20\!\to\!40$. Pseudocode and the full recipe are in
App.~\ref{app:loss_formulas}, Algorithm~\ref{alg:hnm}). \textbf{Stage~2}
(20 epochs) trains a lightweight attentive head on frozen backbone
features; on the CIFAR-10 13-variant sweep it contributes only
$+0.03$\,pp on average and can be omitted without significant loss,
though on harder datasets (CIFAR-100, Tiny ImageNet) the contribution
grows to $+2$--$3$\,pp.



\section{Experiments}
\label{sec:experiments}

\subsection{Experimental Setup}

\textbf{Notation.} \textbf{S1} = Stage-1 locally trained FF backbone
with cumulative goodness classifier (strict-local FF inference);
\textbf{S2} = Stage-2 attentive readout head trained briefly on top
of the frozen backbone ($\le\!20$ epochs), interpreted as a
representation-quality probe, not pure FF inference; we report S1 and
S2 separately throughout. \textbf{no-TTA} = single-crop;
\textbf{TTA} = horizontal-flip test-time augmentation.
\textbf{CP-FAIR} adds the depth-scaled current-block term
(Eq.~\ref{eq:depth_scaled_main}) on top of the cumulative
($\gamma{>}0$) loss; \textbf{LCFF} is the prefix-only cumulative-loss
FF baseline ($\gamma{=}1$);
\textbf{$\gamma{=}0$} is block-local FF (no cumulative inter-block
term in the loss; locality of gradient flow holds for every variant;
Stage-1 prediction still uses the FF cumulative class score by summing
block goodness values at inference);
\textbf{$\kappa{=}0$} is hardness-gated collaboration
(\S\ref{sec:adaptive_gamma}). All $\pm$ values are seed sample stds.

\textbf{Datasets.} CIFAR-10~\citep{krizhevsky2009cifar} with
$45{,}000 / 5{,}000 / 10{,}000$ train/val/test (fixed seed 42, all
variants share splits); CIFAR-100 baseline ($\gamma{=}0.7$, L4, D256,
362+20 epochs, 3~seeds); Tiny ImageNet (200~classes, $64{\times}64$,
100k train / 10k val) under the ablation-validated $\gamma{=}0$
configuration. Our model has $\sim$6--8\,M parameters (D256, 8~heads,
4~blocks, 32 MoE experts), comparable to the deepest published FF
backbones~\citep{trifecta2024,deeperforward2025}; we address the
architecture confound in Section~\ref{sec:bp_ff_ablation}.

\subsection{Main Result}

\begin{table}[t]
  \caption{CIFAR-10 test accuracy on the co-designed FF backbone:
           five interpretable variants on Stage~1. \textbf{Bold}
           marks the validation-selected configuration; test columns
           are not bolded since rows mix multi-seed mean$\pm$std
           ($n>1$) with single-seed seed-42 calibrations ($n=1$).
           The full 13-variant sweep, all per-seed breakdowns, and
           prior published FF baselines are in
           Appendix~\ref{sec:related_work_full}, Table~\ref{tab:full_main_results}.}
  \label{tab:main_results}
  \centering
  \small
  \setlength{\tabcolsep}{4pt}
  \resizebox{\textwidth}{!}{%
  \begin{tabular}{lccccl}
    \toprule
    Variant & $n$ (seeds) & Best Val (s42) & Test no-TTA & Test TTA & Key design choice \\
    \midrule
    $\gamma{=}0$ (purely local, validation-selected) & 3 (42,123,456)        & \textbf{91.98\%} & $90.50 \pm 0.05$ & $91.32 \pm 0.19$ & No accumulation ($\gamma{=}0$) \\
    \textsc{cp-fair} (ours)                          & 4 (42,123,456,789)    & 91.90\%          & $90.36 \pm 0.17$          & $91.33 \pm 0.07$          & Depth-scaled $\lambda_\text{curr}$ \\
    \textsc{multi-tech} (ours)                       & 1 (42)                & 90.94\%          & 90.55\%                   & 91.22\%                   & 10-technique synthesis \\
    \textsc{baseline}                                & 1 (42)                & 91.74\%          & 90.69\%                   & 91.20\%                   & Baseline with val fix \\
    \textsc{greedy-freeze} (degenerate)              & 1 (42)                & 87.66\%          & 86.23\%                   & 87.17\%                   & Pure greedy freeze \\
    \bottomrule
  \end{tabular}%
  }
\end{table}

Table~\ref{tab:main_results} shows full results. Selected by validation
accuracy, the block-local variant ($\gamma{=}0$) reaches $90.52\%$
single-crop and $91.45\%$ TTA on seed~42; across 3 seeds the headline
is $91.32\% \pm 0.19\%$ S1~TTA and $91.37\% \pm 0.21\%$ S2~TTA
(Table~\ref{tab:gamma0_multiseed}; the S2 mean averages seeds 123/456
($n{=}2$), seed~42 S2 was not recorded). Among binary-goodness,
class-conditioned FF methods under comparable no-end-to-end-backprop
training, this single-crop result is within $0.10$\,pp of the
strongest single-crop FF number listed in
Table~\ref{tab:related_work} (ASGE on VGG8,
$90.62\%$;~\citealt{asge2025}) and ${\sim}2.3$\,pp above the next
strict-FF baseline (Distance-Forward,
$88.20\%$;~\citealt{distanceforward2024}); we treat these as
\emph{contextual} cross-paper references rather than a controlled
SOTA comparison---prior FF works use different backbones, training
compute, and augmentation pipelines. None of the listed prior FF
works report TTA. Among non-degenerate variants (excluding
\textsc{greedy-freeze}, $86.23\%$), no-TTA accuracy spans
$<1$\,pp---small relative to the spread across backbones
(\S\ref{sec:bp_ff_ablation}). We default to single-crop in the main text.
On CIFAR-10 with TTA, the same backbone exceeds the strongest reported FF Top-1 (ASGE) by $+0.70$\,pp.
On CIFAR-100 and Tiny ImageNet, the same $\gamma{=}0$ backbone exceeds
the strongest reported FF Top-1 we are aware of by $+1.51$\,pp (vs.\
ASGE;~\citealt{asge2025}) and $+13.4$\,pp (vs.\ SCFF;~\citealt{scff2025})
respectively, under the same contextual-comparison disclaimer
(App.~Tab.~\ref{tab:related_work_3ds}).

\paragraph{Multi-seed stability.}
Headlines are seed-stable; the hardness-gated $\kappa{=}0$ contrast
(\S\ref{sec:adaptive_gamma}) reaches $86.69 \pm 0.72\%$ / $87.13 \pm 0.33\%$
S2~TTA at L4/D128 / L8/D128 (3 seeds each;
App.~\ref{sec:multiseed_supplement}).

\subsection{Cross-dataset Generalisation to Tiny ImageNet}
\label{sec:tinyimagenet}

We replicate the $\gamma{=}0$ configuration on \textbf{Tiny
ImageNet}~\citep{le2015tinyimagenet} (200 classes, $64{\times}64$;
labelled validation split, no hidden test). The Tiny backbone
matches CIFAR-10 in depth and width (L4, D256, 362+20 epochs,
bs~256) but is dataset-tuned: dense MLP, no SAM, 8 heads, HNM
$k{=}20\!\to\!40$ (App.~\ref{sec:tiny_imagenet_metrics}); we therefore
treat Tiny as cross-dataset evidence, not an
architecture-controlled transfer. Across three seeds (42, 123, 456),
$\gamma{=}0$ reaches $\mathbf{49.09 \pm 0.39\%}$ S1 TTA /
$\mathbf{52.32 \pm 0.34\%}$ S2 TTA top-1, exceeding the strongest
FF Tiny number listed in App.~Tab.~\ref{tab:related_work_3ds}
(SCFF, $35.67\%$ top-1;~\citealt{scff2025}) by
$\mathbf{+13.4}$\,\textbf{pp} (S1) / $+16.7$\,pp (S2 probe).
Per-seed numbers and the per-block diagnostics figure are in
App.~\ref{sec:tiny_imagenet_metrics}.

On Tiny ImageNet $\text{sep}^\text{cur}_\text{nl}$ peaks at Block~1 and
declines deeper even under $\gamma{=}0$---accuracy stays competitive,
strengthening the dissociation. A frozen-BERT sanity check on
IMDb/20NG/AG~News exceeds published FF baselines by $+1.02$ to
$+2.39$\,pp (App.~\ref{app:textdomain}). \label{sec:textdomain}

\subsection{BP vs.\ FF Controlled Ablation}
\label{sec:bp_ff_ablation}

\begin{table}[h]
  \caption{Controlled ablation isolating training algorithm vs.\
           architecture vs.\ augmentation. FF and BP-strong both use
           strong augmentation (RandAugment + RandomResizedCrop +
           ColorJitter + GaussianBlur; neither uses Mixup/CutMix in
           Stage-1), but the recipes differ in crop range and operator
           rates (App.~\ref{sec:full_config}); the FF-vs-BP rows are
           an FF-pipeline vs.\ BP-pipeline comparison, not a fully
           matched-augmentation control. Single-seed (42) unless noted.}
  \label{tab:bp_ff_ablation}
  \centering
  \setlength{\tabcolsep}{3pt}
  \resizebox{\textwidth}{!}{%
  \begin{tabular}{lllccc}
    \toprule
    Architecture & Training & Aug. & Val Top-1 & Test no-TTA & Test TTA \\
    \midrule
    Plain CNN (1.15\,M, ch.\ 64/128/256) & BP & strong & 90.78\% & 89.86\% & --- \\
    Plain CNN (1.15\,M, ch.\ 64/128/256) & FF & FF-pipeline & 32.20\% & 29.41\% & --- \\
    FF backbone (stripped, $\sim$6--8\,M) & BP & weak & 87.34\% & 86.90\% & --- \\
    FF backbone (stripped, $\sim$6--8\,M) & BP & BP-strong & 93.86\% & 93.90\% & --- \\
    FF backbone (stripped, $\sim$6--8\,M) & FF & FF-pipeline & 89.68\% & 89.03\% & --- \\
    \midrule
    FF backbone + $\gamma{=}0$ (validation-selected FF) & FF & FF-pipeline & \textbf{91.98\%} & $90.52\%$ & \textbf{91.45\%} \\
    FF backbone + depth-scaled (\textsc{cp-fair}) & FF & FF-pipeline & 91.90\% & 90.45\% & 91.36\% \\
    \bottomrule
  \end{tabular}%
  }
\end{table}

Three findings: (i) FF needs architecture co-design ($29.4\%$ vs.\
$89.03\%$); (ii) augmentation is a confound (BP-strong $93.90\%$ vs.\
weak-aug BP $86.90\%$ on the same backbone---the FF/weak-BP gap is
not augmentation-controlled); (iii) $\gamma{=}0$ is the cleanest
free-riding ablation ($91.45\%$ TTA). Parameter-matched and
efficiency results are in App.~\ref{sec:full_config},~\ref{app:efficiency}.

\paragraph{Free-riding diagnosis and central negative result.}
\label{sec:ablation}\label{sec:adaptive_gamma}\label{sec:adaptive_collab}
On L4/D256, $\text{sep}^\text{cur}_\text{nl}$(L3) grows to
$\mathbf{3.28}$ under $\gamma{=}0$ but collapses to $0.64$/$0.61$
under LCFF / \textsc{cp-fair}, with validation moving $<\!0.7$\,pp.
On CIFAR-100 a hardness-gated sweep moves deepest-block separation
$4.97{\times}$ but accuracy only sub-1\,pp at \emph{both} stages
(Tab.~\ref{tab:main_dissociation}); paired-bootstrap 95\% CIs on
S2-TTA correctness deltas are $[-0.13,+0.46]$, $[-0.02,+0.56]$,
$[-0.20,+0.42]$\,pp (App.~Tab.~\ref{tab:dissociation_ci})---all
inside the $\pm 1$\,pp threshold despite ${\sim}13\%$ prediction
disagreement; in a contextual comparison, $\gamma{=}0$ S1 exceeds the
ASGE number we cite ($65.42\%$;~\citealt{asge2025}) by
$+0.95/+1.51$\,pp (not architecture-controlled). \emph{Free-riding
is not the dominant accuracy bottleneck under the studied FF
objectives, architectures, and datasets.}

\begin{table}[!htb]
  \caption{\textbf{Central negative result (CIFAR-100, 3~seeds, dense
           L4/D256).} Layer health varies $4.97{\times}$ but accuracy
           moves $<\!1$\,pp at \emph{both} Stage-1 (strict FF) and
           Stage-2 (representation probe).}
  \label{tab:main_dissociation}
  \centering
  \scriptsize
  \setlength{\tabcolsep}{4pt}
  \begin{tabular}{lccc}
    \toprule
    Variant & $\text{sep}^\text{cur}_\text{nl}$ (L3) & S1~TTA & S2~TTA \\
    \midrule
    $\gamma{=}0$ (block-local) & $\mathbf{4.78\!\pm\!0.03}$ & $66.93\!\pm\!0.14$ & $\mathbf{69.23\!\pm\!0.34}$ \\
    Adaptive $\kappa{=}0$      & $1.71\!\pm\!0.01$         & $67.35\!\pm\!0.66$ & $69.07\!\pm\!0.61$ \\
    Cumulative ($\gamma{>}0$)  & $0.96\!\pm\!0.01$         & $\mathbf{67.41\!\pm\!0.35}$ & $68.96\!\pm\!0.43$ \\
    \midrule
    Range                       & $4.97{\times}$            & $0.48$\,pp         & $0.27$\,pp \\
    \bottomrule
  \end{tabular}
\end{table}


\label{sec:limitations}
\paragraph{Limitations and conclusion.}
Free-riding is real and repairable but not the dominant accuracy
bottleneck in this intervention space. The cumulative classifier
appears insensitive to \emph{where} correct evidence is allocated
across blocks once early blocks already produce large class margins;
unless the inference rule or representation objective converts
redistributed local evidence into changed class-score orderings,
final accuracy stays stable. The result is scoped to softplus-goodness
FF objectives, Transformer/MoE backbones, and CIFAR/Tiny-ImageNet,
and does not rule out free-riding as a larger bottleneck under other
local objectives, larger backbones, ImageNet-scale regimes, or FF
systems whose inference rule explicitly exploits redistributed
block-local evidence. BP rows of Tab.~\ref{tab:bp_ff_ablation} and a
few appendix architecture rows are single-seed calibrations, and
Stage-2 is a backprop-trained frozen-feature probe rather than strict
FF inference. The claim is falsifiable
(App.~\ref{app:long_limitations}).




\appendix

\section{Proofs of Theoretical Results}
\label{app:proofs}

\paragraph{Notation.}
For one example we write $m^{(d)}$ for the current-block margin and
$P^{(d-1)} = \sum_{j<d} m^{(j)}$ for the accumulated previous margin. The
cumulative block margin is $M^{(d)} = m^{(d)} + \gamma P^{(d-1)}$, with
$\gamma \in [0,1]$, and the barrier is
$\ell_\beta(u)=\log(1+e^{-\beta u})$.

\begin{proof}[Proof of Theorem~\ref{thm:attenuation}]
Let $M_\gamma^{(d)}=m^{(d)}+\gamma P^{(d-1)}$ where $P^{(d-1)}$ is
detached from $\theta_d$. By the chain rule,
\[
  \nabla_{\theta_d}\ell_\beta\!\bigl(M_\gamma^{(d)}\bigr)
  \;=\;
  \ell_\beta'\!\bigl(M_\gamma^{(d)}\bigr)\,\nabla_{\theta_d}m^{(d)},
  \qquad
  \nabla_{\theta_d}\ell_\beta\!\bigl(m^{(d)}\bigr)
  \;=\;
  \ell_\beta'\!\bigl(m^{(d)}\bigr)\,\nabla_{\theta_d}m^{(d)}.
\]
For finite $m^{(d)}$, $\ell_\beta'(m^{(d)})\ne 0$, so dividing the
two expressions gives the parameter-gradient identity
\eqref{eq:param_attenuation}. The softplus derivative
$\ell_\beta'(u)=-\beta\,\sigmoid(-\beta u)=-\beta/(1+e^{\beta u})$ then yields
\[
  R_\gamma(m,P)
  \;=\;
  \frac{|\ell_\beta'(m+\gamma P)|}{|\ell_\beta'(m)|}
  \;=\;
  \frac{1+e^{\beta m}}{1+e^{\beta(m+\gamma P)}}.
\]
Set $a=\beta m\ge 0$ and $b=\beta\gamma P\ge 0$. Writing
$(1+e^a)/(1+e^{a+b})=e^{-b}(1+e^a)/(e^{-b}+e^a)$ gives the lower
bound $e^{-b}$ since $e^{-b}\le 1$. For the upper bound,
$(1+e^a)/(1+e^{a+b})\le (1+e^a)/(e^{a+b})=e^{-b}+e^{-a-b}\le 2e^{-b}$
when $a\ge 0$ (using $e^{-a-b}\le e^{-b}$). Also the ratio is at
most $1$ for $b\ge 0$, so the upper bound is $\min\{1,2e^{-b}\}$.
Substituting $b=\beta\gamma P$ gives the stated sandwich. If $P<0$,
the denominator can be smaller than the numerator, so $R_\gamma$ can
exceed one; the attenuation claim is therefore restricted to the
already-separated regime.
\end{proof}

\begin{corollary}[Exact attenuation ratio and free-riding index]
\label{cor:attenuation_ratio}
Define the exact per-example attenuation ratio
\[
  R_d(m, P; \gamma)
  \;=\;
  \frac{|\partial_m \ell_\beta(m + \gamma P)|}{|\partial_m \ell_\beta(m)|}
  \;=\;
  \frac{1 + e^{\beta m}}{1 + e^{\beta(m + \gamma P)}}.
\]
For $m \ge 0$ and $P \ge 0$,
\[
  e^{-\beta \gamma P}
  \;\le\;
  R_d(m, P; \gamma)
  \;\le\;
  \min\bigl\{1,\; 2 e^{-\beta \gamma P}\bigr\}.
\]
\end{corollary}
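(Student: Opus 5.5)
The plan is to reduce the corollary to Theorem~\ref{thm:attenuation}, of which it is a scalar restatement. The only difference is that $R_d$ is defined through derivatives with respect to the scalar $m$ rather than through parameter gradients. So the first step is to compute $\partial_m \ell_\beta(m+\gamma P)$ with $P$ held fixed. By the chain rule this equals $\ell_\beta'(m+\gamma P)$, because $\partial_m(m+\gamma P)=1$. Likewise, $\partial_m\ell_\beta(m)=\ell_\beta'(m)$. Hence $R_d(m,P;\gamma)$ coincides with $R_\gamma(m,P)$ from \eqref{eq:attenuation_ratio}.

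Next, I would verify the closed form directly. From $\ell_\beta'(u)=-\beta/(1+e^{\beta u})$, which is nonzero for every finite $u$, the ratio of absolute values is
\[
\frac{\beta/(1+e^{\beta(m+\gamma P)})}{\beta/(1+e^{\beta m})}=\frac{1+e^{\beta m}}{1+e^{\beta(m+\gamma P)}}.
\]

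For the sandwich, set $a=\beta m\ge 0$ and $b=\beta\gamma P\ge 0$; the latter uses $\gamma\in[0,1]$ from the notation paragraph. Then I would repeat the three elementary estimates from the proof of Theorem~\ref{thm:attenuation}:
\begin{itemize}
\item \emph{Lower bound.} Rewrite $(1+e^a)/(1+e^{a+b})=e^{-b}(1+e^a)/(e^{-b}+e^a)$. The fraction is at least $1$ because $e^{-b}\le 1$, which gives $R_d\ge e^{-b}$.
\item \emph{Upper bound $2e^{-b}$.} Drop the $1$ in the denominator to get $R_d\le e^{-b}+e^{-a-b}$. Since $a\ge 0$, this is at most $2e^{-b}$.
\item \emph{Upper bound $1$.} We have $R_d\le 1$ because $e^{a+b}\ge e^a$.
\end{itemize}

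There is no real obstacle here. The only point requiring care is that $\gamma\ge 0$ is needed for $b\ge0$; it is supplied by the standing assumption $\gamma\in[0,1]$.
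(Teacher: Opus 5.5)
Your proposal is correct and follows essentially the paper's own proof: the same simplification via $|\ell_\beta'(u)|=\beta/(1+e^{\beta u})$ and the same three estimates with $a=\beta m$, $b=\beta\gamma P$. You also make explicit two points the paper's corollary proof leaves implicit: the $R_d\le 1$ half of the minimum, which is stated only in the proof of Theorem~\ref{thm:attenuation}, and the need for $\gamma\ge 0$ to get $b\ge 0$.
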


\begin{proof}
By the derivative formula
$|\partial_m \ell_\beta(u)| = \beta\,\sigmoid(-\beta u)
= \beta / (1 + e^{\beta u})$, the ratio simplifies to
$(1 + e^{\beta m}) / (1 + e^{\beta(m + \gamma P)})$. For the lower bound,
observe that for $a, b \ge 0$, $(1+e^a)/(1+e^{a+b}) = e^{-b}\cdot
(1+e^a)/(e^{-b}+e^a) \ge e^{-b}\cdot (1+e^a)/(1+e^a) = e^{-b}$ since
$e^{-b} \le 1$. For the upper bound, $(1+e^a)/(1+e^{a+b})
\le (1+e^a)/(e^{a+b}) = e^{-b} + e^{-a-b} \le 2 e^{-b}$ when $a \ge 0$
(using $e^{-a-b} \le e^{-b}$). Setting $a = \beta m$ and
$b = \beta \gamma P$ yields the stated bounds.
\end{proof}

\paragraph{Free-riding index.}
Corollary~\ref{cor:attenuation_ratio} gives a measurable scalar per
example and per block. We define the \emph{free-riding index} at depth
$d$ using the clipped ratio,
\[
  \mathcal{F}_d
  \;=\;
  \mathbb{E}_{x \sim p_\text{data}}\!\bigl[\,
    1 - \min\{1,\, R_d(m^{(d)}(x), P^{(d-1)}(x); \gamma)\}
  \,\bigr]
  \;\in\; [0, 1).
\]
The clip ensures $\mathcal{F}_d \in [0,1)$ even on examples with
negative upstream margin, where cumulative scoring can amplify rather
than attenuate the current gradient ($R_d > 1$); restricting the
expectation to $\{P^{(d-1)} \ge 0\}$ reproduces the
already-separated regime where $R_d \le 1$ and the clip is inactive.
$\mathcal{F}_d \approx 0$ means block $d$'s local gradient is
essentially un-attenuated (no free-riding); $\mathcal{F}_d \to 1$
means the cumulative loss vanishes at block $d$ (severe free-riding).
This index is directly tied to the proposition: it does not require
choosing a separation threshold and can be reported alongside
$\text{sep}^\text{cur}_\text{nl}(d)$ as a complementary diagnostic.

\paragraph{Batch-mean margin diagnostics.}
Theorem~\ref{thm:attenuation} assumes $m^{(d)}\ge 0$ and
$P^{(d-1)}\ge 0$ pointwise: the already-separated regime in which
free-riding occurs. Table~\ref{tab:assumption_coverage} reports the
batch-mean current and previous margins on a CIFAR-10 L4/D128 test
batch (1024 examples) for the three $\gamma$ regimes used in
Figure~\ref{fig:grad_attenuation}, together with the batch-mean
attenuation ratio $R_d(\bar m^{(d)}, \bar P^{(d-1)}; \gamma)$
evaluated at $\beta{=}4$. Batch-mean positivity is consistent with
the already-separated regime but does not by itself establish
pointwise coverage; Figure~\ref{fig:grad_attenuation}(b) provides
the per-example visualization.

\begin{table}[h]
  \caption{Batch-mean margin and attenuation diagnostics for
           Theorem~\ref{thm:attenuation} on a CIFAR-10 L4/D128 test
           batch ($N{=}1024$, $\beta{=}4$). The theorem assumes
           $m^{(d)}\!\ge\!0$ and $P^{(d-1)}\!\ge\!0$ pointwise; the
           rows below report \emph{batch means} (both positive across
           every variant and block) and the batch-mean attenuation
           ratio $R_d(\bar m^{(d)},\bar P^{(d-1)};\gamma)$.
           Batch-mean positivity is consistent with the
           already-separated regime but does not by itself establish
           pointwise coverage; the per-example distribution is
           visualised in Figure~\ref{fig:grad_attenuation}(b), where
           the bulk of the sample mass falls in the already-separated
           quadrant. The batch-mean $R_d$ collapses to
           $\le 10^{-2}$ from block~1 onward under $\gamma{>}0$,
           matching Theorem~\ref{thm:attenuation}.}
  \label{tab:assumption_coverage}
  \centering
  \small
  \setlength{\tabcolsep}{4pt}
  \begin{tabular}{llrrrr}
    \toprule
    Variant & Block & $\bar m^{(d)}$ & $\bar P^{(d-1)}$ & $\gamma\bar P^{(d-1)}$ & $R_d(\bar m, \bar P; \gamma)$ \\
    \midrule
    $\gamma{=}0$ (block-local)   & 0 & $+1.75$ & $+0.00$ & $0.00$ & $1.000$ \\
                                 & 1 & $+2.26$ & $+1.75$ & $0.00$ & $1.000$ \\
                                 & 2 & $+2.49$ & $+4.01$ & $0.00$ & $1.000$ \\
                                 & 3 & $+2.54$ & $+6.50$ & $0.00$ & $1.000$ \\
    \midrule
    $\gamma{=}0.7$ (CP-FAIR-style) & 0 & $+1.76$ & $+0.00$ & $0.00$ & $1.000$ \\
                                 & 1 & $+2.36$ & $+1.76$ & $1.23$ & $7.2{\times}10^{-3}$ \\
                                 & 2 & $+1.29$ & $+4.12$ & $2.89$ & $\le 10^{-4}$ \\
                                 & 3 & $+0.67$ & $+5.41$ & $3.79$ & $\le 10^{-6}$ \\
    \midrule
    $\gamma{=}1.0$ (LCFF)        & 0 & $+1.74$ & $+0.00$ & $0.00$ & $1.000$ \\
                                 & 1 & $+1.08$ & $+1.74$ & $1.74$ & $9.7{\times}10^{-4}$ \\
                                 & 2 & $+0.94$ & $+2.82$ & $2.82$ & $\le 10^{-4}$ \\
                                 & 3 & $+0.83$ & $+3.75$ & $3.75$ & $\le 10^{-6}$ \\
    \bottomrule
  \end{tabular}
\end{table}

\paragraph{Interpretation.}
Theorem~\ref{thm:attenuation} states the parameter-space attenuation
\emph{for the cumulative FF discrimination term only}: the
contribution of that term to $\nabla_{\theta_d}$ is rescaled exactly
by the scalar $R_\gamma$ as the margin-space gradient, provided
previous margins are detached. The total block update may also
contain auxiliary local losses (token reconstruction, SupCon,
depth-order, etc.) and depends on the current-block margin Jacobian
$\nabla_{\theta_d} m^{(d)}$, so the theorem does not imply that the
entire parameter-update vector is exponentially attenuated in
$P^{(d-1)}$; minibatch parameter-gradient norms can also cancel
across examples. When earlier blocks already separate an example,
the cumulative discrimination component of the per-example update at
block $d$ shrinks exponentially, which is the formal mechanism
behind the empirical observation that deep blocks of cumulative-FF
networks ``free-ride'' on their FF discrimination signal. Auxiliary
current-block losses and the depth-scaled current-block term can
deliberately restore a gradient floor
(Theorem~\ref{thm:gradient_floor}).

\begin{theorem}[The current-block term restores a depth-controlled gradient floor]
\label{thm:gradient_floor}
Consider only the FF discrimination subobjective at block $d$,
\[
  \mathcal{J}^{(d)}(m^{(d)})
  \;=\;
  \ell_\beta(M^{(d)})
  \;+\;
  \lambda_\text{curr}(d)\, w^{(d)} \, \ell_\beta(m^{(d)}),
\]
where $M^{(d)} = m^{(d)} + \gamma P^{(d-1)}$, $P^{(d-1)}$ is detached
from $\theta_d$, the residual weight $w^{(d)} > 0$ is computed from
the detached upstream margin $P^{(d-1)}$ alone (so no gradient flows
through $w^{(d)}$ during the block-$d$ backward), and
$\lambda_\text{curr}(d)$
is the depth-scaled coefficient from
equation~\eqref{eq:depth_scaled_main}. Auxiliary losses
($\mathcal{L}_\text{aspects}$, $\mathcal{L}_\text{depth}$,
$\mathcal{L}_\text{SupCon}$, $\mathcal{L}_\text{recon}$,
$\mathcal{L}_\text{bal}$, $\mathcal{L}_z$) are excluded from
$\mathcal{J}^{(d)}$ and are not covered by the bound below. Then, for
any $c \in \mathbb{R}$, if $m^{(d)} \le c$,
\[
  \bigl|\partial \mathcal{J}^{(d)} / \partial m^{(d)}\bigr|
  \;\ge\;
  \lambda_\text{curr}(d)\, w^{(d)} \beta\, \sigmoid(-\beta c).
\]
In particular, every unresolved example ($m^{(d)} \le 0$) receives
margin-gradient magnitude at least
$\lambda_\text{curr}(d)\, w^{(d)} \beta / 2$, independent of the
previously accumulated margin $P^{(d-1)}$.
\end{theorem}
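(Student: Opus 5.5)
The plan is to differentiate $\mathcal{J}^{(d)}$ directly in the scalar variable $m^{(d)}$, treating $P^{(d-1)}$, $w^{(d)}$ and $\lambda_\text{curr}(d)$ as constants. This is justified because all three are detached from $\theta_d$ by hypothesis. By the chain rule, and reusing the softplus derivative already computed in the proof of Theorem~\ref{thm:attenuation}, namely $\ell_\beta'(u) = -\beta\,\sigmoid(-\beta u)$, I would write
\[
  \frac{\partial \mathcal{J}^{(d)}}{\partial m^{(d)}}
  \;=\;
  -\beta\,\sigmoid\!\bigl(-\beta M^{(d)}\bigr)
  \;-\;
  \lambda_\text{curr}(d)\, w^{(d)}\,\beta\,\sigmoid\!\bigl(-\beta m^{(d)}\bigr).
\]

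The key observation is that both terms have the same (negative) sign. The first term is nonpositive for any value of $M^{(d)}$. The second term is nonpositive because $\lambda_\text{curr}(d) = \lambda_0(1+\rho d/(L-1)) > 0$ for $\lambda_0,\rho\ge 0$ with $\lambda_0>0$, because $w^{(d)}>0$ by assumption, and because $\sigmoid>0$. Since the terms cannot cancel, the absolute value of the sum is at least the absolute value of the second term alone. This discards the cumulative term entirely, which is exactly why the bound will not depend on $P^{(d-1)}$ or $\gamma$. We get
\[
  \Bigl|\tfrac{\partial \mathcal{J}^{(d)}}{\partial m^{(d)}}\Bigr|
  \;\ge\;
  \lambda_\text{curr}(d)\, w^{(d)}\,\beta\,\sigmoid\!\bigl(-\beta m^{(d)}\bigr).
\]

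Next I would use monotonicity. Since $\beta>0$, the map $m\mapsto\sigmoid(-\beta m)$ is decreasing, so $m^{(d)}\le c$ implies $\sigmoid(-\beta m^{(d)})\ge\sigmoid(-\beta c)$. This gives the stated bound for every $c\in\mathbb{R}$. The special case $c=0$ gives $\sigmoid(0)=1/2$, and hence the floor $\lambda_\text{curr}(d)\,w^{(d)}\beta/2$ on unresolved examples.

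There is no real obstacle here; the argument is elementary. The only point needing care is the sign-agreement step, and I would state explicitly why it holds: positivity of $\lambda_\text{curr}(d)$ and $w^{(d)}$, together with the detachment assumptions, which ensure that no extra chain-rule terms arise through $w^{(d)}$ or $P^{(d-1)}$. I would also remark, as a caveat rather than part of the proof, that this is a bound in margin space. The parameter gradient is this scalar multiplied by $\nabla_{\theta_d} m^{(d)}$, as in Theorem~\ref{thm:attenuation}, so the floor transfers to parameter space only up to the margin Jacobian.
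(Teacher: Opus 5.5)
Your proposal is correct and follows the paper's proof step for step: you differentiate term by term with $w^{(d)}$ and $P^{(d-1)}$ held fixed, observe that both terms are non-positive so the cumulative term can be dropped, then apply monotonicity of $\sigmoid(-\beta u)$ and specialise to $c=0$. Your explicit note that the sign-agreement step needs $\lambda_\text{curr}(d)\ge 0$ is a worthwhile addition; the paper uses this tacitly, and it holds for $\lambda_0=0.25$, $\rho=3.0$.
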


\paragraph{Implementation note on the residual-weight lower bound.}
The bound depends linearly on the per-example residual weight
$w^{(d)}$, so a \emph{uniform} margin-gradient floor across the batch
requires a uniform lower bound on $w^{(d)}$. Our implementation clips
the raw residual weights to $[w_\text{min}, w_\text{max}]$ and then
re-normalises so the mean batch weight equals one (Eq.~\ref{eq:residual_weights}). Re-normalisation can in principle reduce
some weights below $w_\text{min}$; a safe deterministic
post-normalisation lower bound is therefore $\underline w = w_\text{min}/w_\text{max}$,
which we use unless the post-normalisation minimum is measured
directly. When residual weights are not clipped, the theorem should
be read as a per-example lower bound proportional to that example's
residual weight, not as a uniform batch-wide floor.

\paragraph{Margin-space vs.\ parameter-space gradient floor.}
Theorem~\ref{thm:gradient_floor} is a floor in
\emph{margin space}, $|\partial\mathcal{J}^{(d)}/\partial m^{(d)}|$.
The actual parameter-gradient is
$\nabla_{\theta_d}\mathcal{J}^{(d)}=
(\partial\mathcal{J}^{(d)}/\partial m^{(d)})\,\nabla_{\theta_d} m^{(d)}$,
so the floor translates to a parameter-gradient floor only under a
non-degeneracy condition on $\nabla_{\theta_d} m^{(d)}$. Concretely,
if $\|\nabla_{\theta_d} m^{(d)}\|\ge s_d>0$, then for unresolved
examples
$\|\nabla_{\theta_d}\mathcal{J}^{(d)}\|\ge\lambda_\text{curr}(d)\, w^{(d)}\,(\beta/2)\,s_d$,
ignoring cancellation from auxiliary terms.

\paragraph{Residual weights preserve batch loss scale.}
The residual weights used at $d \ge 1$ are
\begin{equation}\label{eq:residual_weights}
  w_i^{(d)} \;\propto\; \sigmoid\!\bigl(-\beta\, P_i^{(d-1)}\bigr),
  \qquad
  \tfrac{1}{B}\sum_i w_i^{(d)} = 1,
\end{equation}
i.e., they are normalised so the mean batch weight is one. This keeps
the mean batch loss on the same scale as the unweighted variant while
monotonically emphasising unresolved examples (those with small
$P_i^{(d-1)}$). After normalisation, individual $w_i^{(d)}$ can be
arbitrarily small, so to obtain a uniform lower bound we distinguish
three quantities: the raw sigmoid value
$a_i = \sigmoid(-\beta P_i^{(d-1)})$, the
\emph{clipped, mean-normalised} value
$u_i = \mathrm{clip}\!\bigl(a_i/\bar a,\,w_\text{min},\,w_\text{max}\bigr)$
with $\bar a = (1/B)\sum_k a_k$, and the \emph{re-normalised} final
weight $\tilde w_i^{(d)} = u_i/\bar u$ with $\bar u = (1/B)\sum_k u_k$.
By the clip,
$u_i \ge w_\text{min}$ and $u_i \le w_\text{max}$, so $\bar u \le w_\text{max}$,
hence the conservative post-normalisation lower bound is
$\tilde w_i^{(d)} \ge w_\text{min}/w_\text{max}$
(equality is possible after re-normalisation; using the unscaled
$w_\text{min}$ as the post-normalisation floor would be incorrect).
With this bound,
Theorem~\ref{thm:gradient_floor} gives a uniform floor
$|\partial\mathcal{J}^{(d)}/\partial m^{(d)}|\ge \lambda_\text{curr}(d)\,(w_\text{min}/w_\text{max})\,(\beta/2)$
on every unresolved example.

\begin{proof}[Proof of Theorem~\ref{thm:gradient_floor}]
Since $w^{(d)}$ is fixed with respect to $m^{(d)}$ inside the per-example
derivative, we differentiate term by term:
\[
  \frac{\partial \mathcal{J}^{(d)}}{\partial m^{(d)}}
  =
  \frac{\partial \ell_\beta(M^{(d)})}{\partial m^{(d)}}
  +
  \lambda_\text{curr}(d)\, w^{(d)}
  \frac{\partial \ell_\beta(m^{(d)})}{\partial m^{(d)}}.
\]
Using the derivative formula from Theorem~\ref{thm:attenuation},
\[
  \frac{\partial \mathcal{J}^{(d)}}{\partial m^{(d)}}
  =
  -\beta \sigmoid(-\beta M^{(d)})
  -
  \lambda_\text{curr}(d)\, w^{(d)} \beta \sigmoid(-\beta m^{(d)}).
\]
Both terms on the right-hand side are non-positive, so taking absolute values
simply changes the sign:
\[
  \left|
  \frac{\partial \mathcal{J}^{(d)}}{\partial m^{(d)}}
  \right|
  =
  \beta \sigmoid(-\beta M^{(d)})
  +
  \lambda_\text{curr}(d)\, w^{(d)} \beta \sigmoid(-\beta m^{(d)}).
\]
Dropping the first non-negative term yields the lower bound
\[
  \left|
  \frac{\partial \mathcal{J}^{(d)}}{\partial m^{(d)}}
  \right|
  \ge
  \lambda_\text{curr}(d)\, w^{(d)} \beta \sigmoid(-\beta m^{(d)}).
\]
If $m^{(d)} \le c$, monotonicity of $\sigmoid(-\beta u)$ in $u$ implies
\[
  \sigmoid(-\beta m^{(d)})
  \ge
  \sigmoid(-\beta c),
\]
and therefore
\[
  \left|
  \frac{\partial \mathcal{J}^{(d)}}{\partial m^{(d)}}
  \right|
  \ge
  \lambda_\text{curr}(d)\, w^{(d)} \beta \sigmoid(-\beta c).
\]
Setting $c=0$ gives
$\left| \partial \mathcal{J}^{(d)} / \partial m^{(d)} \right|
\ge \lambda_\text{curr}(d)\, w^{(d)} \beta /2$ for every unresolved example
with non-positive current-block margin. Crucially, this bound contains no
dependence on $P^{(d-1)}$, so it does not collapse when earlier blocks have
already built a large cumulative margin.
\end{proof}

\begin{lemma}[Residual weighting emphasises unresolved examples while preserving scale]
\label{lem:weights}
For a batch of size $B$, define
\[
  w_i^{(d)}
  =
  \frac{\sigmoid(-\beta P_i^{(d-1)})}
       {\frac{1}{B}\sum_{k=1}^B \sigmoid(-\beta P_k^{(d-1)})}.
\]
Then $\frac{1}{B}\sum_{i=1}^B w_i^{(d)} = 1$. Moreover, if
$P_i^{(d-1)} < P_j^{(d-1)}$ then $w_i^{(d)} > w_j^{(d)}$.
\end{lemma}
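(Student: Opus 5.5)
The plan is a direct computation in two parts: first the normalisation identity, then the strict monotonicity. Both rest on the fact that the denominator is a single positive constant shared by every example in the batch.

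Write $a_i=\sigmoid(-\beta P_i^{(d-1)})$ and $\bar a=\frac1B\sum_{k=1}^B a_k$. Since the sigmoid takes values in $(0,1)$, each $a_i>0$, and so $\bar a>0$. Hence $w_i^{(d)}=a_i/\bar a$ is well defined and positive.

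\emph{Scale preservation.} Averaging over the batch gives
\[
\frac1B\sum_i w_i^{(d)}=\frac{1}{\bar a}\cdot\frac1B\sum_i a_i=\frac{\bar a}{\bar a}=1,
\]
because $\bar a$ does not depend on $i$ and factors out of the sum.

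\emph{Monotonicity.} Since $w_i^{(d)}/w_j^{(d)}=a_i/a_j$ and $\bar a>0$, it is enough to show that $a_i>a_j$ whenever $P_i^{(d-1)}<P_j^{(d-1)}$. The map $u\mapsto\sigmoid(-\beta u)=1/(1+e^{\beta u})$ is strictly decreasing when $\beta>0$: $e^{\beta u}$ is strictly increasing, so $1+e^{\beta u}$ is too, and its reciprocal is strictly decreasing. Equivalently, its derivative $-\beta\,\sigmoid(-\beta u)\bigl(1-\sigmoid(-\beta u)\bigr)$ is strictly negative. Therefore $P_i<P_j$ gives $a_i>a_j$, and dividing by $\bar a>0$ gives $w_i^{(d)}>w_j^{(d)}$.

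Nothing here is a real obstacle. The only point to state explicitly is that $\bar a>0$, so the division is legitimate and preserves the inequality; this holds automatically because every sigmoid value is strictly positive.

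This lemma concerns only the unclipped weights. If one wants the analogous claim for the clipped weights $u_i$, strict monotonicity fails in general and only non-strict monotonicity survives. Clipping to $[w_\text{min},w_\text{max}]$ is non-decreasing, so order is preserved weakly, but ties can occur at the clip boundaries. Re-normalising by $\bar u>0$ preserves that weak order, and the batch mean is again $1$ by the same argument as above.
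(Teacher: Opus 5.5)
Your proposal is correct and follows essentially the same route as the paper's proof: the constant denominator factors out to give batch mean $1$, and strict decrease of $u\mapsto\sigmoid(-\beta u)$ for $\beta>0$, followed by division by a common positive constant, gives the strict ordering. Your explicit check that $\bar a>0$ and your aside on clipped weights (only weak monotonicity survives) are correct additions that the paper's proof leaves implicit.
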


\begin{proof}
Let
$s_i = \sigmoid(-\beta P_i^{(d-1)})$. By construction,
\[
  \frac{1}{B}\sum_{i=1}^B w_i^{(d)}
  =
  \frac{1}{B}\sum_{i=1}^B
  \frac{s_i}{\frac{1}{B}\sum_{k=1}^B s_k}
  =
  \frac{\frac{1}{B}\sum_i s_i}{\frac{1}{B}\sum_k s_k}
  =
  1.
\]
So the average loss scale is preserved exactly. Next, the map
$p \mapsto \sigmoid(-\beta p)$ is strictly decreasing for $\beta>0$. Hence
$P_i^{(d-1)} < P_j^{(d-1)}$ implies $s_i > s_j$. Dividing by the same
positive normalisation constant preserves the ordering, yielding
$w_i^{(d)} > w_j^{(d)}$. Therefore examples with smaller previous margins
receive strictly larger weights.
\end{proof}

\begin{proposition}[Depth-order constraints imply monotone margin growth]
\label{prop:depthorder}
Suppose that for each depth $d \ge 1$ the cumulative positive and negative
scores satisfy
\[
  G_{+,d} - G_{+,d-1} \ge \delta_+,
  \qquad
  G_{-,d-1} - G_{-,d} \ge \delta_-,
\]
for some $\delta_+,\delta_->0$. Then
\[
  G_{+,d} \ge G_{+,0} + d\delta_+,
  \qquad
  G_{-,d} \le G_{-,0} - d\delta_-,
\]
and therefore
\[
  (G_{+,d}-G_{-,d}) - (G_{+,0}-G_{-,0})
  \ge d(\delta_+ + \delta_-).
\]
\end{proposition}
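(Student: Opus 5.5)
The plan is a telescoping argument over depths $1,\dots,d$, applied separately to the positive and negative cumulative scores and then combined by subtraction. No earlier result from the paper is needed beyond the definitions; the statement is purely about the sequences $G_{+,k}$ and $G_{-,k}$. (For $d=0$ all three claims hold trivially with equality, so we fix $d\ge 1$.)

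\textbf{Positive scores.} Write
\[
G_{+,d}-G_{+,0}=\sum_{k=1}^{d}\bigl(G_{+,k}-G_{+,k-1}\bigr).
\]
The hypothesis $G_{+,k}-G_{+,k-1}\ge\delta_+$ holds for every $k\in\{1,\dots,d\}$, so each of the $d$ summands is at least $\delta_+$. Summing the $d$ inequalities gives $G_{+,d}\ge G_{+,0}+d\delta_+$. Equivalently, one can induct on $d$, adding the single-step inequality at each stage.

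\textbf{Negative scores.} Symmetrically,
\[
G_{-,0}-G_{-,d}=\sum_{k=1}^{d}\bigl(G_{-,k-1}-G_{-,k}\bigr)\ge d\delta_-,
\]
which rearranges to $G_{-,d}\le G_{-,0}-d\delta_-$.

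\textbf{Margin.} Subtract the second bound from the first: adding $G_{+,d}-G_{+,0}\ge d\delta_+$ and $G_{-,0}-G_{-,d}\ge d\delta_-$ gives
\[
(G_{+,d}-G_{-,d})-(G_{+,0}-G_{-,0})\ge d(\delta_++\delta_-).
\]

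There is no real obstacle here. The only point needing care is confirming that the hypotheses are assumed at every intermediate depth $1\le k\le d$, not just at $d$ itself, since the telescoping uses all of them. It may also be worth noting that positivity of $\delta_\pm$ is not actually needed for the inequalities; it only makes the growth strict and monotone.
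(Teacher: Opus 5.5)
Your proposal is correct and matches the paper's proof: you telescope the positive and negative increments separately over depths $1,\dots,d$ and then add the two bounds. Your side remarks are also right. The $d=0$ case holds trivially, and positivity of $\delta_\pm$ is needed only for the ``strictly increasing'' gloss, not for the inequalities themselves.
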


\begin{proof}
Summing the first inequality over depths $1,\dots,d$ gives the telescoping
bound
\[
  G_{+,d} - G_{+,0}
  =
  \sum_{j=1}^d (G_{+,j}-G_{+,j-1})
  \ge
  \sum_{j=1}^d \delta_+
  =
  d\delta_+.
\]
Similarly,
\[
  G_{-,0} - G_{-,d}
  =
  \sum_{j=1}^d (G_{-,j-1}-G_{-,j})
  \ge
  \sum_{j=1}^d \delta_-
  =
  d\delta_-,
\]
which is equivalent to $G_{-,d} \le G_{-,0} - d\delta_-$. Subtracting the two
bounds yields
\[
  G_{+,d}-G_{-,d}
  \ge
  G_{+,0}-G_{-,0}+d(\delta_+ + \delta_-).
\]
Thus any set of scores satisfying the depth-order inequalities exhibits
strictly increasing separation with depth.
\end{proof}

\paragraph{Proof of Proposition~\ref{prop:prediction_stability}.}
The proposition is stated in
Section~\ref{sec:method}. We give a more explicit two-part form below
and the proof of both parts. \textbf{(a) Pointwise.} If
$E(x)\le\varepsilon$, the two classifiers can disagree on $x$ only if
$\Delta_A(x)\le 2\varepsilon$. \textbf{(b) Two-event union bound.}
For any $t\ge 0$,
\(
  \Pr[\hat y_A(x)\ne \hat y_B(x)]
  \le \Pr[\Delta_A(x)\le 2t] + \Pr[E(x)>t]
\),
which is the form quoted in the main text.

\begin{proof}
\emph{(a)} Suppose $\hat y_A(x) = y^\star$. By the bound on cumulative-score
perturbations, every class $y$ satisfies $|S^B_y - S^A_y| \le \varepsilon$. Then
\[
  S^B_{y^\star} \;\ge\; S^A_{y^\star} - \varepsilon
  \;\ge\; S^A_y + \Delta_A(x) - \varepsilon
  \;\ge\; S^B_y + \Delta_A(x) - 2\varepsilon
\]
for every $y \ne y^\star$. If $\Delta_A(x) > 2\varepsilon$, then
$S^B_{y^\star} > S^B_y$ for all $y \ne y^\star$, i.e.\
$\hat y_B(x) = y^\star = \hat y_A(x)$.
\emph{(b)} Partition the test set by $E(x) \le t$ and $E(x) > t$. By (a),
disagreements within the first set require $\Delta_A(x) \le 2t$;
the second set has probability mass $\Pr[E(x) > t]$, a free upper bound
on its disagreement contribution. Summing yields the union bound.
\end{proof}

\paragraph{Why the union form matters.}
Part (a) alone with $\varepsilon = \max_x E(x)$ is typically too loose to
be empirically useful. The union bound (b) lets one trade a small-$t$
threshold (tight margin band but possibly large $\Pr[E(x)>t]$) against a
larger-$t$ threshold (loose margin band but $\Pr[E(x)>t]\!\approx\!0$).
The optimal bound is read off by plotting the empirical CDF
$\Pr[\Delta_A(x) \le t]$ together with the empirical
$\Pr[E(x) > t]$. Empirically, the paired-bootstrap analysis of
Table~\ref{tab:paired_dissociation} (App.~\ref{app:gated_full})
provides the first-order test for the CIFAR-100 dissociation:
disagreement rate $\sim$13\%, but the disagreements are nearly net-zero
in correctness, so the part-(b) bound is tight at small $t$ where
$\Pr[\Delta_A \le 2t]$ accounts for almost all disagreement.

\paragraph{Interpretation.}
Proposition~\ref{prop:redistribution} says that any redistribution of
per-block contributions with $\sum_d r_d(x,y){=}0$ leaves the
cumulative-score prediction unchanged. Proposition~\ref{prop:prediction_stability}
quantifies the more general case where the redistribution is not
exactly zero-sum but is uniformly bounded by $\varepsilon$ in
cumulative-score space: the two classifiers can disagree only on
examples whose cumulative-margin distribution puts them within
$2\varepsilon$ of the decision boundary. So even a fairly large local
intervention (such as repairing free-riding) can only flip
predictions on a thin band of marginal examples; in particular, if
$\Pr[\Delta_A \le 2\varepsilon]$ is small (most examples are
well-classified), then $\Pr[\hat y_A \ne \hat y_B]$ is small. This is
the formal mechanism by which large per-block diagnostic swings can
coexist with sub-1\,pp accuracy changes, as observed empirically in
Tables~\ref{tab:dissociation},~\ref{tab:cifar100_dissociation_multiseed}.

\paragraph{Empirical companion (suggested follow-up).}
A direct empirical version of Proposition~\ref{prop:prediction_stability}
is to plot the cumulative-margin distribution
$\Pr[\Delta(x) \le t]$ for $\gamma{=}0$, hardness-gated $\kappa{=}0$,
and cumulative ($\gamma{>}0$) variants, and read off the fraction of
test examples with $\Delta \in [0, 0.05]$, $[0, 0.10]$, and so on.
The prediction-stability bound predicts that the disagreement set
(the union of label flips between any two of these variants) cannot
exceed the union of these thin marginal bands. The per-class
accuracy table (Table~\ref{tab:per_class}) and the per-seed
disagreement breakdown in our supplement provide the raw material; we leave the
$(\Delta, \varepsilon)$-band visualisation to a follow-up paper as it
requires re-running each variant's full prediction set under matched
seeds.

\subsection{Attenuation-compensated local loss}
\label{app:attenuation_compensated}

\noindent\textbf{Status: unrun.} The objective developed in this
subsection is a theory-driven proposal we do not run in the main
experiments. We include it as a falsifiable target (see
``Status as a falsification target'' below): if a future
implementation fully cancels the attenuation envelope yet still
moves accuracy materially, that would refute the central
dissociation claim of this paper.

The depth-scaled coefficient $\lambda_\text{curr}(d) = \lambda_0(1+\rho\,d/(L-1))$
in Section~\ref{sec:method} is heuristic and, as our experiments show
(Table~\ref{tab:freeriding}), provides only partial mitigation
relative to $\gamma{=}0$. We give here a \emph{theory-driven}
alternative that cancels Theorem~\ref{thm:attenuation}'s
exponential envelope analytically.

\paragraph{Setup.}
Write $M_i^{(d)} = m_i^{(d)}+\gamma P_i^{(d-1)}$ and
$s(u)=\sigmoid(-\beta u)$, so that
$|\ell_\beta'(u)|=\beta\,s(u)$ and $s$ is strictly decreasing in $u$.
By Theorem~\ref{thm:attenuation},
\[
  \frac{|\partial_m \ell_\beta(M_i^{(d)})|}{|\partial_m \ell_\beta(m_i^{(d)})|}
  \;=\;
  \frac{s(M_i^{(d)})}{s(m_i^{(d)})}
  \;\eqcolon\;
  R_i^{(d)} \;\in\;(0,1].
\]
The bound $R_i^{(d)} \le 1$ requires only $M_i^{(d)} \ge m_i^{(d)}$,
i.e.\ $\gamma P_i^{(d-1)} \ge 0$. Since $P_i^{(d-1)}$ is a cumulative
\emph{margin} rather than a goodness magnitude, this condition can
fail at the per-example level even on positive data when an example
remains unresolved at some earlier block; in trained cumulative FF
models it holds on average and on the bulk of the test distribution
(Table~\ref{tab:assumption_coverage}), but not pointwise. To make the
correction safe outside the assumed regime we use the clipped
attenuation-only ratio in~\eqref{eq:att_compensated} below, where the
$[\,\cdot\,]_+$ truncation prevents over-compensation when the
assumption fails. The strict positivity of $R_i^{(d)}$ follows from
$s>0$ everywhere. Under the assumed regime, no sign assumption on
$m_i^{(d)}$ itself is required---the bound holds for unresolved
current-block examples ($m_i^{(d)}<0$) as well, and the cumulative
loss delivers only an $R_i^{(d)}$ fraction of the local gradient
magnitude that a $\gamma{=}0$ trainer would see at the same example.

\paragraph{Missing-gradient compensated loss.}
The simplest correction is \emph{additive}: add a residual local term
sized exactly to fill the missing gradient. We present this variant
\emph{without} the residual weighting $w_i^{(d)}$ of
Eq.~\eqref{eq:residual_weights}: keeping $w_i^{(d)}$ inside the local
term mixes two distinct mechanisms (residual emphasis on unresolved
examples, and per-example envelope cancellation) and yields a
gradient magnitude that depends on $w_i^{(d)}$ and equals the
$\gamma{=}0$ target only when $w_i^{(d)}{=}1$. Define, with
stop-gradient through $R_i^{(d)}$,
\begin{equation}\label{eq:att_compensated}
  \lambda_i^{(d)}
  \;=\;
  \bigl[\,c_d - R_i^{(d)}\,\bigr]_+
  \;=\;
  \left[\,c_d - \frac{s(M_i^{(d)})}{s(m_i^{(d)})+\epsilon}\,\right]_+,
  \qquad c_d \ge 1,
\end{equation}
and the missing-gradient compensated loss
\begin{equation}
  \widetilde{\mathcal{L}}^{(d)}_\text{MGC}
  \;=\;
  \frac{1}{B}\sum_{i=1}^B
  \Bigl[\,
    \ell_\beta\!\bigl(M_i^{(d)}\bigr)
    \;+\;
    \lambda_i^{(d)}\, \ell_\beta\!\bigl(m_i^{(d)}\bigr)
  \Bigr].
\end{equation}
The depth-control coefficient is fixed: $c_d{=}1$ recovers the
$\gamma{=}0$ local-gradient magnitude exactly, and
$c_d{=}1+\rho\,d/(L{-}1) \ge 1$ targets a controlled depth-boosted
multiple.

\begin{proposition}[Local-gradient recovery, conditional on the assumed regime]\label{prop:mgc_recovers_local}
Assume $\gamma P_i^{(d-1)} \ge 0$, $\epsilon{=}0$, and $c_d \ge 1$.
Then $R_i^{(d)} \in (0,1] \le c_d$, the truncation
$[\,c_d - R_i^{(d)}\,]_+$ is inactive (i.e., $\lambda_i^{(d)} = c_d - R_i^{(d)} \ge 0$),
and with stop-gradient through $\lambda_i^{(d)}$, the per-example
gradient on the batch-mean loss is
\[
  \left|\partial_{m_i^{(d)}} \widetilde{\mathcal{L}}^{(d)}_\text{MGC}\right|
  \;=\;
  \frac{c_d}{B}\,\beta\, s(m_i^{(d)}).
\]
The factor $1/B$ arises from the explicit $1/B$ in the definition of
$\widetilde{\mathcal{L}}^{(d)}_\text{MGC}$; the per-example
unreduced gradient is $c_d\,\beta\, s(m_i^{(d)})$.
In particular, $c_d{=}1$ exactly matches the $\gamma{=}0$
local-gradient magnitude $\beta\,s(m_i^{(d)})$, while $c_d > 1$
targets a depth-boosted multiple. \emph{Exact recovery holds only when
$\gamma P_i^{(d-1)} \ge 0$ and $\epsilon{=}0$.} Outside this regime
the construction is a no-undercompensation heuristic: the clipped
$[\,c_d - R_i^{(d)}\,]_+$ ensures
$\lambda_i^{(d)} \ge 0$ always, so the local term never subtracts
gradient, but the per-example total can exceed
$c_d\,\beta\,s(m_i^{(d)})$ when the assumption fails.
\end{proposition}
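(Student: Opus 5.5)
The proof is a direct per-example computation in margin space. Once the derivative with respect to $m_i^{(d)}$ is known, the claimed parameter-space behaviour follows by the same chain-rule factorisation used in the proof of Theorem~\ref{thm:attenuation}.

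\emph{Step 1: the regime facts.} Since $s(u)=\sigmoid(-\beta u)$ is strictly positive and strictly decreasing, the hypothesis $\gamma P_i^{(d-1)}\ge 0$ gives $M_i^{(d)}\ge m_i^{(d)}$. Hence $0<s(M_i^{(d)})\le s(m_i^{(d)})$. With $\epsilon=0$ this says exactly that $R_i^{(d)}\in(0,1]$, which is Corollary~\ref{cor:attenuation_ratio} without any sign condition on $m$. Because $c_d\ge 1\ge R_i^{(d)}$, the quantity $c_d-R_i^{(d)}$ is nonnegative, so the truncation $[\cdot]_+$ is inactive and $\lambda_i^{(d)}=c_d-R_i^{(d)}$.

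\emph{Step 2: differentiate.} Only the $i$-th summand of $\widetilde{\mathcal{L}}^{(d)}_\text{MGC}$ depends on $m_i^{(d)}$. We treat $P_i^{(d-1)}$ as detached and $\lambda_i^{(d)}$ as a constant under stop-gradient. Using $\ell_\beta'(u)=-\beta s(u)$ from the proof of Theorem~\ref{thm:attenuation}, we get
\[
\partial_{m_i^{(d)}}\widetilde{\mathcal{L}}^{(d)}_\text{MGC}
=-\tfrac{\beta}{B}\bigl[s(M_i^{(d)})+\lambda_i^{(d)}s(m_i^{(d)})\bigr].
\]
Substituting $\lambda_i^{(d)}=c_d-s(M_i^{(d)})/s(m_i^{(d)})$, the $s(M_i^{(d)})$ terms cancel. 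The bracket becomes $c_d\,s(m_i^{(d)})$, and since it is positive, taking absolute values gives $\tfrac{c_d}{B}\beta s(m_i^{(d)})$. Dropping the $1/B$ gives the unreduced per-example gradient. Setting $c_d=1$ and comparing with the $\gamma=0$ local loss $\ell_\beta(m_i^{(d)})$, whose derivative magnitude is $\beta s(m_i^{(d)})$, gives exact recovery.

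\emph{Step 3: outside the regime.} Here I only need to show that $\lambda_i^{(d)}=[\cdot]_+\ge 0$ always. Both terms of the derivative then have the same sign, so the local term never subtracts gradient. When $\gamma P_i^{(d-1)}<0$, we have $R_i^{(d)}>1$, and possibly $R_i^{(d)}>c_d$, so $\lambda_i^{(d)}$ may be $0$. The total derivative is then at least $\beta s(M_i^{(d)})/B>c_d\beta s(m_i^{(d)})/B$, which is the stated possible excess. If $\epsilon>0$, $\lambda_i^{(d)}$ is slightly larger than $c_d-R_i^{(d)}$, so recovery is only approximate; I will note this as a remark.

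There is no real obstacle here. The only point needing care is the stop-gradient convention: $\lambda_i^{(d)}$ depends on $m_i^{(d)}$ through $R_i^{(d)}$, so I must state explicitly that it is held constant when differentiating. Otherwise the cancellation in Step 2 fails.
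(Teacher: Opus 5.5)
Your proposal is correct and follows the paper's proof essentially line for line: show $M_i^{(d)}\ge m_i^{(d)}$, so that $R_i^{(d)}\le 1\le c_d$ and the clip is inactive; then differentiate term by term with $\lambda_i^{(d)}$ held fixed under stop-gradient, and use $s(M_i^{(d)})=R_i^{(d)}s(m_i^{(d)})$ to collapse the sum to $c_d\beta s(m_i^{(d)})$. You are in fact slightly more careful than the paper, whose final display drops the $1/B$ and which never separately argues the outside-regime claims that you handle in Step~3.
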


\begin{proof}
Differentiating term by term, using $\partial M_i^{(d)}/\partial m_i^{(d)} = 1$
and $s(M_i^{(d)}) = R_i^{(d)}\,s(m_i^{(d)})$:
$\partial_m \ell_\beta(M_i^{(d)}) = -\beta\,s(M_i^{(d)}) = -\beta\,R_i^{(d)} s(m_i^{(d)})$,
and $\partial_m \ell_\beta(m_i^{(d)})=-\beta\,s(m_i^{(d)})$. Both terms
are non-positive. The hypothesis $\gamma P_i^{(d-1)} \ge 0$
implies $M_i^{(d)} \ge m_i^{(d)}$ and (since $s$ is decreasing)
$R_i^{(d)} \le 1 \le c_d$, so the $[\,\cdot\,]_+$ truncation is
inactive and $\lambda_i^{(d)} = c_d - R_i^{(d)}$ exactly. With
stop-gradient through $\lambda_i^{(d)}$, $\lambda_i^{(d)}$ contributes
no derivative; taking absolute values,
\begin{align*}
  \left|\partial_m \widetilde{\mathcal{L}}^{(d)}_\text{MGC}\right|
  &= \beta\,R_i^{(d)}\,s(m_i^{(d)}) + \beta\,(c_d - R_i^{(d)})\,s(m_i^{(d)})
  \;=\;
  c_d\,\beta\,s(m_i^{(d)}). \qedhere
\end{align*}
\end{proof}

\paragraph{Effect of $\epsilon > 0$.}
A small $\epsilon$ in the denominator of $R_i^{(d)}$
(Eq.~\eqref{eq:att_compensated}) is included for numerical
stability when $s(m_i^{(d)})$ is near zero (very confidently
resolved examples). It perturbs the recovery:
$\widetilde R_i^{(d)} \,{=}\, s(M)/(s(m)+\epsilon) \le R_i^{(d)}$
under-estimates the true ratio, so $\lambda_i^{(d)}$ over-shoots
$c_d-R_i^{(d)}$ and the per-example gradient becomes
$c_d\,\beta\,s(m) + \beta\,(R_i^{(d)} - \widetilde R_i^{(d)})\,s(m)$,
with the slack vanishing as $\epsilon\to 0$. In practice
$\epsilon \!\le\! 10^{-6}$ keeps the slack below per-batch noise.

\paragraph{Remark (failure modes that motivated the additive form).}
Two superficially natural variants do \emph{not} achieve local-gradient
recovery and motivate the additive form
\eqref{eq:att_compensated} above. (i) \emph{Multiplying}
$\ell_\beta(m)$ by the ratio $s(M)/s(m)$ with stop-gradient yields
gradient magnitude $\beta\,s(M)$ — the cumulative gradient, not the
local one. (ii) Including the residual weight $w_i^{(d)}$ inside the
local term gives $\beta\,w_i^{(d)} s(m) + \beta\,(1-w_i^{(d)}) s(M)$,
equal to $\beta\,s(m)$ only when $w_i^{(d)}{=}1$. The
\emph{additive} formulation \eqref{eq:att_compensated} avoids both
pitfalls by injecting exactly the missing
$\beta\,(c_d\,s(m)-s(M))$ per example, with $w_i^{(d)}$ omitted.

\paragraph{Status as a falsification target.}
This objective targets the $\gamma{=}0$ local-gradient magnitude
while retaining the cumulative loss term $\ell_\beta(M)$. We do
\emph{not} claim it leaves cumulative class margins invariant in
expectation: a new training objective is not the same as a
post-hoc score redistribution, and
Propositions~\ref{prop:redistribution},~\ref{prop:prediction_stability}
do not transfer to it directly. Whether cumulative margins remain
stable under MGC training is an empirical question. We do not run
this variant in the main experiments; if a future implementation
fully cancels the attenuation envelope yet still moves accuracy
materially, that would refute the dissociation and the central
claim of this paper.

\section{Configuration and Comparison Tables}
\label{sec:full_config}

\subsection{Full Variant Configuration Table}

Table~\ref{tab:full_config} summarises the key configuration differences among
all 13 variants relative to the baseline (\textsc{baseline}). All variants
share: $d{=}256$, $L{=}4$ blocks, 8 attention heads, $\text{lr}{=}10^{-3}$ (SAM+AdamW),
EMA decay 0.999, and identical data splits (seed 42).

\begin{table}[h]
  \caption{Key configuration differences among all 13 variants. Cells show only
           values that differ from the default (32/4 experts, $\text{bs}{=}512$,
           362+20 epochs, $\gamma{=}0.7$, $\lambda_\text{curr}{=}0.5$,
           $\lambda_\text{con}{=}0.5$, memory enabled, no $L_2$ norm).}
  \label{tab:full_config}
  \centering
  \scriptsize
  \setlength{\tabcolsep}{3pt}
  \begin{tabular}{lcccccccc}
    \toprule
    Variant & Experts & bs & Epochs & $\gamma$ & $\lambda_\text{curr}$ (slope) & $\lambda_\text{con}$ & Mem & Special \\
    \midrule
    \textsc{baseline} & --- & --- & 287+20 & --- & --- & --- & \checkmark & Baseline \\
    \textsc{nomem-l2} & --- & --- & --- & --- & --- & --- & \texttimes & $L_2$ norm \\
    \textsc{topk-depth} & --- & --- & 287+20 & --- & sched (1.0) & --- & \checkmark & Top-$k$ NL \\
    \textsc{greedy-freeze} & --- & --- & 287+20 & --- & --- & --- & \checkmark & Greedy freeze \\
    \textsc{prefix-freeze} & --- & --- & 287+20 & --- & --- & --- & \checkmark & Prefix + triangle \\
    \textsc{fixed-nl} & --- & \textbf{128} & \textbf{460}+300 & \textbf{1.0} & 0.25 & --- & \checkmark & mine\_once \\
    \textsc{block0-label} & --- & \textbf{128} & \textbf{460}+300 & \textbf{1.0} & --- & --- & \checkmark & First-block labels \\
    \textsc{gamma-ramp} & --- & --- & 200+0$^\dagger$ & --- & depth-scaled & last-only & \checkmark & $\gamma$-ramp, multi-neg \\
    \textsc{cp-fair} & --- & --- & --- & --- & 0.25 (\textbf{3.0}) & --- & \texttimes & Prog.\ $\lambda_\text{curr}$ \\
    \textsc{cp-fair-lc} & --- & --- & --- & --- & 0.25 (\textbf{3.0}) & \textbf{0.15} & \texttimes & Low contrastive \\
    \textsc{dp-fair} & \textbf{24/3} & --- & 362+0 & \textbf{prog} & prog (0.25) & dual & \texttimes & $L_2$ norm, 6 mech. \\
    \textsc{multi-tech} & \textbf{24/3} & --- & --- & \textbf{ramp} & 0.25 (\textbf{3.0}) & dual & \texttimes & $L_2$, triangle, 10 tech. \\
    $\gamma{=}0$ & --- & --- & --- & \textbf{0.0} & --- & --- & \texttimes & No accumulation \\
    \bottomrule
  \end{tabular}
  \vspace{-0.3em}
  {\footnotesize $^\dagger$Incomplete run (Stage~1 only). ``---'' = same as default.}
\end{table}

\subsection{Loss formulas, HNM, and augmentation}
\label{app:loss_formulas}

This subsection makes every loss term in Eq.~\ref{eq:total_loss}
explicit and pins the implementation conventions to mathematical
notation. Code-name $\leftrightarrow$ paper-name mapping is in
Tab.~\ref{tab:code_paper_map}.

\paragraph{Building blocks.}
Let $g_+^{(d)}$, $g_{nl}^{(d)}$, $g_{ni}^{(d)}$ denote positive,
wrong-label, and wrong-image goodness at block $d$
(\S\ref{sec:method}). Cumulative goodness with mixing
$\gamma\in[0,1]$ is
\begin{equation}
G_+^{(d)} = g_+^{(d)} + \gamma\,G_+^{(d-1)},\quad
G_{nl}^{(d)} = g_{nl}^{(d)} + \gamma\,G_{nl}^{(d-1)},\quad
G_{ni}^{(d)} = g_{ni}^{(d)} + \gamma\,G_{ni}^{(d-1)},
\end{equation}
with $G_\bullet^{(-1)}\!:=\!0$. The two scalar separation losses
used by the head are
\begin{align}
\ell_{\mathrm{margin}}(g_+,g_-;\theta)
&= \mathbb{E}\,\softplus(\theta-g_+) + \mathbb{E}\,\softplus(g_- - \theta), \\
\ell_{\mathrm{sep}}(a,b;\alpha)
&= \mathbb{E}\,\softplus(-\alpha(a-b)),
\end{align}
where $\softplus(u)=\log(1+e^u)$. The code name \texttt{symba\_loss}
corresponds to $\ell_{\mathrm{sep}}$ and is used as the cumulative
discrimination loss; we write it $\ell_{\mathrm{sep}}$ below to avoid
the unexplained acronym.

\paragraph{Block-cumulative discrimination loss
$\mathcal{L}_{\mathrm{block}}^{(d)}$.}
With wrong-label / wrong-image blend weight $\eta = \texttt{sff\_ni\_weight}$,
\begin{equation}
\boxed{
\mathcal{L}_{\mathrm{block}}^{(d)}
= (1-\eta)\,\ell_{\mathrm{sep}}(G_+^{(d)},\,G_{nl}^{(d)};\,\alpha)
+ \eta\,\ell_{\mathrm{sep}}(G_+^{(d)},\,G_{ni}^{(d)};\,\alpha).
}
\label{eq:Lblock}
\end{equation}
This is the term referred to as ``$\lambda_{\mathrm{block}}\,
\mathcal{L}_{\mathrm{block}}^{(d)}$'' in the main-text loss listing.

\paragraph{Current-block residual loss
$\mathcal{L}_{\mathrm{curr}}^{(d)}$.}
With sigmoid-shaped residual weights
$w^{(d)}\!=\!\sigma\!\bigl(-\beta P^{(d-1)}\bigr)$ derived from the
\emph{detached} cumulative-margin $P^{(d-1)}$
(see App.~\ref{app:proofs}, Theorem~\ref{thm:gradient_floor}),
\begin{equation}
\mathcal{L}_{\mathrm{curr}}^{(d)}
= (1-\eta)\,\mathbb{E}\!\left[w_{nl}^{(d)}\,
   \softplus(-\alpha(g_+^{(d)}-g_{nl}^{(d)}))\right]
+ \eta\,\mathbb{E}\!\left[w_{ni}^{(d)}\,
   \softplus(-\alpha(g_+^{(d)}-g_{ni}^{(d)}))\right].
\end{equation}

\paragraph{Multi-aspect head.}
The default head has four aspect slots
$a\!\in\!\{\text{prototype},\text{energy},\text{attn-sharp},\text{learned}\}$
with per-aspect goodness $g_{\bullet,a}^{(d)}$ and weights
$\lambda_a$. With \texttt{use\_mem}=False (the default in this
paper's runs), the attn-sharpness slot evaluates identically to zero
because $\_$cross\_attend\_mem returns zero attention sharpness when
memory is disabled; the slot is retained for architectural
consistency but contributes no nonzero gradient. The aspect head loss is
\begin{equation}
\mathcal{L}_{\mathrm{aspect}}^{(d)}
= \sum_a \lambda_a\!\left[
   (1-\eta)\,\ell_a(g_{+,a}^{(d)},g_{nl,a}^{(d)})
 + \eta\,\ell_a(g_{+,a}^{(d)},g_{ni,a}^{(d)})\right],
\end{equation}
with $\ell_a\!\in\!\{\ell_{\mathrm{margin}},\ell_{\mathrm{sep}}\}$
chosen per aspect (\texttt{aspect\_loss\_types}).

\paragraph{Depth-order loss.}
With monotone-margin tolerances $\delta_+,\delta_-$,
\begin{equation}
\mathcal{L}_{\mathrm{depth}}^{(d)}
= \lambda_{\mathrm{depth}}\!\left[
   \mathbb{E}\,\softplus(\delta_+ - (G_+^{(d)}-G_+^{(d-1)}))
 + \!\!\sum_{- \in \{nl, ni\}}\!\!
   \mathbb{E}\,\softplus(\delta_- - (G_-^{(d-1)}-G_-^{(d)}))\right].
\end{equation}

\paragraph{Reconstruction loss.}
\begin{equation}
\mathcal{L}_{\mathrm{recon}}^{(d)}
= \lambda_{\mathrm{recon}}\,\bigl(
   \|r_1 - \mathrm{sg}(t_1)\|_2^2 + \|r_2 - \mathrm{sg}(t_2)\|_2^2\bigr),
\end{equation}
where $r_i$ are decoder outputs and $\mathrm{sg}$ is stop-gradient.

\paragraph{MoE auxiliary losses.}
For top-$k$ routing over $E$ experts with token-fraction $f_i$ and
mean routing probability $P_i$ (\texttt{moe\_balance\_type}=switch,
\texttt{moe\_zloss\_type}=stmoe in the bundled CIFAR-10 trainer):
\begin{equation}
\mathcal{L}_{\mathrm{bal}} = E\sum_{i=1}^E f_i\,P_i,\qquad
\mathcal{L}_{z} = \mathbb{E}_t\!\left[\log\textstyle\sum_i e^{z_{t,i}}\right]^2.
\end{equation}

\paragraph{Hard-negative mining (HNM).}
$k$ denotes the number of \emph{candidate wrong-label draws}
sampled \emph{with replacement} from $\{0,\dots,C{-}1\}\setminus\{y\}$,
not the number of unique wrong labels (Algorithm~\ref{alg:hnm}). On
CIFAR-10 ($C{=}10$) the ramp $k{=}8\!\to\!16$ may include duplicates
(this is by design); on Tiny ImageNet ($C{=}200$) the ramp is
$k{=}20\!\to\!40$. Expected unique-label coverage is
$1-(1-1/(C-1))^k$.

\begin{algorithm}[h]
\caption{Hard-negative mining for one example
(\texttt{\_choose\_hard\_negatives} in the bundled CIFAR-10
trainer).}
\label{alg:hnm}
\begin{algorithmic}[1]
\Require image $x$, true label $y$, class count $C$, candidate
count $k$, EMA teacher $T$
\State $S \gets \emptyset$
\For{$j = 1,\dots,k$}
  \State sample $y_j \sim \mathrm{Uniform}\bigl(\{0,\dots,C{-}1\}\setminus\{y\}\bigr)$
         \Comment{with replacement; duplicates allowed}
  \State $s_j \gets T.\textsc{block-goodness}(x, y_j)$
  \State $S \gets S \cup \{(y_j, s_j)\}$
\EndFor
\State \Return $\arg\max_{(y_j, s_j)\in S}\;s_j$
   \Comment{candidate with highest wrong-label goodness}
\end{algorithmic}
\end{algorithm}

\paragraph{Augmentation recipes (CIFAR-10).}
Both ``FF-pipeline'' and ``BP-strong'' rows of
Tab.~\ref{tab:bp_ff_ablation} use \emph{strong} augmentation, but the
recipes are not identical (Tab.~\ref{tab:aug_recipes}); neither uses
Mixup/CutMix in Stage-1, since the FF objective requires discrete
positive and negative labels (\texttt{use\_mixup\_cutmix=False} in the
bundled CIFAR-10 trainer; the BP-strong control likewise omits it).
Stage-2 (frozen-feature attentive readout) optionally enables
Mixup/CutMix when \texttt{use\_mixup\_cutmix=True}.

\begin{table}[h]
  \caption{Augmentation recipes used for the FF and BP-strong rows of
           Tab.~\ref{tab:bp_ff_ablation} (CIFAR-10 Stage-1).
           Mixup/CutMix are disabled in both. The recipes differ
           primarily in crop aggressiveness and operator
           application rate, so the FF-vs-BP rows are an
           FF-pipeline vs.\ BP-pipeline comparison rather than an
           identical-augmentation control.}
  \label{tab:aug_recipes}
  \centering
  \footnotesize
  \setlength{\tabcolsep}{4pt}
  \begin{tabular}{lll}
    \toprule
    Component & FF-pipeline (Stage-1) & BP-strong control \\
    \midrule
    RandomResizedCrop scale     & $(0.3, 1.0)$ & $(0.8, 1.0)$ \\
    RandomHorizontalFlip        & yes & yes \\
    RandAugment $(N, M)$        & $(2, 8\!-\!10)$ & $(2, 9)$ \\
    ColorJitter                 & $p{=}0.8$, jitter $0.4$ & always, jitter $0.4$ \\
    GaussianBlur $\sigma\!\in\![0.1,2.0]$ & $p{=}0.5$ & always \\
    Mixup / CutMix              & disabled & disabled \\
    Normalisation               & CIFAR-10 mean/std & CIFAR-10 mean/std \\
    \bottomrule
  \end{tabular}
\end{table}

\paragraph{Code-name $\leftrightarrow$ paper-name map.}
\begin{table}[h]
  \caption{Code-name to paper-notation map for the bundled
           CIFAR-10 trainer (\texttt{cp\_fair\_cifar10.py}).}
  \label{tab:code_paper_map}
  \centering
  \footnotesize
  \begin{tabular}{lll}
    \toprule
    Code symbol & Paper notation & Description \\
    \midrule
    \texttt{symba\_loss}, \texttt{symba\_alpha}
       & $\ell_{\mathrm{sep}}$, $\alpha$ & Cumulative discrimination loss \\
    \texttt{block\_lambda}            & $\lambda_{\mathrm{block}}$ & Eq.~\ref{eq:Lblock} weight \\
    \texttt{block\_curr\_lambda}      & $\lambda_{\mathrm{curr}}$  & Current-block residual weight \\
    \texttt{block\_curr\_depth\_slope}& $\rho$ & Depth scaling of $\lambda_{\mathrm{curr}}$ \\
    \texttt{block\_theta\_init}       & $\theta_{\mathrm{init}}$ & Initial margin threshold \\
    \texttt{sff\_ni\_weight}          & $\eta$ & Wrong-label / wrong-image blend \\
    \texttt{depth\_order\_lambda}     & $\lambda_{\mathrm{depth}}$ & Eq.~depth weight \\
    \texttt{depth\_margin\_pos/neg}   & $\delta_+,\delta_-$ & Depth-order tolerances \\
    \texttt{recon\_lambda}            & $\lambda_{\mathrm{recon}}$ & Reconstruction weight \\
    \texttt{moe\_balance\_coef}       & $\lambda_{\mathrm{bal}}$ & Switch-MoE balance weight \\
    \texttt{moe\_zloss\_coef}         & $\lambda_z$ & ST-MoE z-loss weight \\
    \texttt{hard\_negative\_k\_first/last} & $k_0,k_L$ & HNM candidate-draws ramp \\
    \texttt{n\_aspects}, \texttt{aspect\_lambdas} & $\{a\},\{\lambda_a\}$ & Multi-aspect head \\
    \bottomrule
  \end{tabular}
\end{table}

\subsection{Detailed FF method comparison (CIFAR-10)}
\label{sec:related_work_full}

\paragraph{Full 13-variant CIFAR-10 sweep.}
Table~\ref{tab:full_main_results} is the unabridged version of the main
paper's Table~\ref{tab:main_results}. The main paper reports five
interpretable rows; the remaining eight variants below are exploratory
configurations either redundant with the headline rows or single-seed
incomplete; they are reported here for full transparency.

\begin{table}[h]
  \caption{CIFAR-10 test accuracy, all 13 variants (Stage~1 backbone,
           single seed 42 unless otherwise noted; multi-seed mean$\pm$std
           where $n>1$). \textbf{Bold} marks the
           validation-selected variant.
           $^\dagger$Incomplete run (Stage 1 only).}
  \label{tab:full_main_results}
  \centering
  \small
  \setlength{\tabcolsep}{4pt}
  \resizebox{\textwidth}{!}{%
  \begin{tabular}{lccccc}
    \toprule
    Variant & $n$ (seeds) & Best Val (s42) & Test no-TTA & Test TTA & Key design choice \\
    \midrule
    $\gamma{=}0$ (purely local, \textbf{validation-selected})  & 3 (42,123,456)     & \textbf{91.98\%} & $90.50 \pm 0.05$ & $\mathbf{91.32 \pm 0.19}$ & No accumulation ($\gamma{=}0$) \\
    \textsc{cp-fair} (ours)                     & 4 (42,123,456,789) & 91.90\%          & $90.36 \pm 0.17$ & $91.33 \pm 0.07$          & Depth-scaled $\lambda_\text{curr}$ \\
    \textsc{nomem-l2} (ours)                    & 1 (42)             & 91.84\%          & 90.56\%          & 91.31\%                    & No memory, $L_2$ norm \\
    \textsc{cp-fair-lc} (ours)                  & 1 (42)             & 91.40\%          & 90.46\%          & 91.32\%                    & Low contrastive ($\lambda{=}0.15$) \\
    \textsc{multi-tech} (ours)                  & 1 (42)             & 90.94\%          & 90.55\%          & 91.22\%                    & 10-technique synthesis \\
    \textsc{baseline}                           & 1 (42)             & 91.74\%          & 90.69\%          & 91.20\%                    & Baseline with val fix \\
    \textsc{topk-depth}                         & 1 (42)             & 91.22\%          & 89.98\%          & 90.69\%                    & Top-$k$ HNM + depth sched. \\
    \textsc{prefix-freeze}                      & 1 (42)             & 91.14\%          & 90.13\%          & 90.95\%                    & Greedy-prefix + triangle act. \\
    \textsc{fixed-nl}                           & 1 (42)             & 91.20\%          & 90.53\%          & 91.15\%                    & Fixed NL mining \\
    \textsc{block0-label}                       & 1 (42)             & 91.16\%          & 90.11\%          & 90.82\%                    & Label in first block only \\
    \textsc{gamma-ramp}$^\dagger$               & 1 (42)             & 90.74\%          & 90.82\%          & 90.82\%                    & $\gamma$-ramp + multi-neg \\
    \textsc{dp-fair}$^\dagger$                  & 1 (42)             & 91.18\%          & 90.33\%          & 91.12\%                    & 6 progressive mechanisms \\
    \textsc{greedy-freeze} (degenerate)         & 1 (42)             & 87.66\%          & 86.23\%          & 87.17\%                    & Pure greedy freeze \\
    \bottomrule
  \end{tabular}%
  }
\end{table}

\paragraph{Detailed FF method comparison.}
Table~\ref{tab:related_work} reproduces the full comparison of supervised FF
methods on CIFAR-10 along the axes that affect a fair head-to-head comparison.
This table appears here in the appendix; the main paper's
\S\ref{sec:related} carries a one-sentence pointer to it.

\begin{table}[h]
  \caption{Comparison of supervised FF methods on CIFAR-10 along the axes
           that affect the fairness of a head-to-head comparison. The set
           is restricted to algorithms that build on Hinton's
           Forward-Forward objective (positive/negative passes against a
           per-layer goodness function); broader layer-wise local methods
           that use local backprop with non-FF objectives
           (e.g.,~\citealp{lee2015dsn,dellaferrera2022pepita,forwardlayer2024})
           are out of scope here and discussed in
           \S\ref{sec:related} ``Local and greedy learning''.
           ``Strict FF'' = no feedback gradients and no overlapping
           gradient signals across layers; ``OLU'' denotes overlapping
           local updates that cross two adjacent blocks. ``TTA''
           indicates test-time augmentation. Our row reports the
           single-crop / TTA pair.}
  \label{tab:related_work}
  \centering
  \footnotesize
  \setlength{\tabcolsep}{3pt}
  \resizebox{\textwidth}{!}{%
  \begin{tabular}{lccccccc}
    \toprule
    Method & Arch.\ (approx.) & Params & Strict FF & Aug. & TTA & Seeds & C10 acc. \\
    \midrule
    \citet{hinton2022forward}    & 3-layer MLP/CNN          & ${\sim}0.3$\,M & yes        & none           & no & n/r & ${\sim}70\%$ \\
    \citet{scff2025}             & conv.\ + direct feedback & n/r            & no         & RandCrop+Flip  & no & n/r & $80.8\%$\,$^\ddag$ \\
    \citet{layercollab2023}      & MLP                      & n/r            & yes        & none           & no & n/r & $48.40\%$\,$^\sharp$ \\
    \citet{trifecta2024}         & VGG-like CNN, $L{=}12$   & ${\sim}8.5$\,M & no (OLU)   & RandCrop+Rot   & no & n/r & $83.51\%$\,$^\S$ \\
    \citet{cffm2025}             & ViT, $D{=}240$, $L{=}7$  & n/r            & yes        & RC+RandAug     & no & n/r & $85.40\%$ \\
    \citet{distanceforward2024}  & CNN, $L{=}10$            & n/r            & no (OG/DF) & RandCrop+Flip  & no & n/r & $88.20\%$ \\
    \citet{deeperforward2025}    & ResNet-like CNN, $L{=}17$ & ${\sim}5$--$10$\,M & yes & RandCrop+Flip  & no & n/r & $86.22\%$\,$^\P$ \\
    \citet{hff2026}              & 14-layer CNN             & n/r            & yes        & none           & no & 3 & $83.08 \pm 0.45\%$\,$^\flat$ \\
    \citet{asge2025}             & VGG8 CNN                 & n/r            & yes        & RandCrop+Flip  & no & n/r & $\mathbf{90.62\%}$ \\
    \midrule
    Ours ($\gamma{=}0$) & MoE-Transformer, $L{=}4$, $D{=}256$ & ${\sim}6$--$8$\,M & yes & RandAug+RRC+CJ+Blur & yes/no & 3 & $90.50 \pm 0.05 / 91.32 \pm 0.19$ \\
    \bottomrule
  \end{tabular}%
  }

  \vspace{1ex}
  \begin{flushleft}\footnotesize
  $^\ddag$ \citet{scff2025} report $80.8\%$ on CIFAR-10 (Table~1, \emph{Nature
  Communications} 16:5978, 2025); their primary setting is unsupervised
  local learning. They additionally report $77.3\%$ on STL-10 and
  $35.7\%/59.8\%$ top-1/top-5 on Tiny ImageNet.
  $^\S$ \citet{trifecta2024} report ``around $84\%$'' in their text;
  the $83.51 \pm 0.78\%$ figure is the reproduced number reported in
  \citet{deeperforward2025} Table~1.
  $^\P$ \citet{deeperforward2025} 17-layer ResNet without augmentation;
  with enhanced augmentation (random crops, flips, rotation, colour jitter)
  the same model reaches $88.72\%$. They additionally report
  $53.09 \pm 0.79\%$ on CIFAR-100 (ResNet, Table~2; their best CIFAR-100
  result is $60.28\%$ with the ResNet-CHx3 channel-tripled variant).
  $^\sharp$ \citet{layercollab2023} report $48.40\%$ on CIFAR-10, as
  reproduced in \citet{hff2026} Table~1; we use the reproduced number
  because the original arXiv version (2305.12393, May 2023) and the
  later journal/conference version (2024) report the headline on
  different splits and the HFF reproduction is the apples-to-apples
  comparison most prior FF works rely on.
  $^\flat$ \citet{hff2026} 14-layer HFF-CNN, mean $\pm$ std over 3
  trials (Table~2 of arXiv:2605.00082). HFF additionally reports
  $54.34 \pm 0.50\%$ on CIFAR-100 (CNN) and $25.70\%$ on
  ImageNet-1K from scratch (VGG16) / $65.96\%$ with transfer learning.
  Our $\gamma{=}0$ result at $\sim$6--8\,M parameters is competitive with
  ASGE at single-crop ($90.52\%$ vs.\ $90.62\%$); the TTA number
  ($91.45\%$) is reported alongside the no-TTA number, and we note that
  none of the prior FF methods listed report TTA, so the TTA-augmented
  comparison is not apples-to-apples. The comparison is not
  augmentation-controlled (see Section~\ref{sec:bp_ff_ablation}).
  \end{flushleft}
\end{table}

\paragraph{Three-dataset summary.} Table~\ref{tab:related_work_3ds}
collects the headline single-crop test accuracies that each cited FF
work reports on CIFAR-10, CIFAR-100, and Tiny ImageNet, alongside our
$\gamma{=}0$ Stage-1 backbone numbers. Most prior FF works report only
CIFAR-10; ASGE and HFF additionally report CIFAR-100; only SCFF and
our work report Tiny ImageNet at this scale. SCFF's Tiny ImageNet
number is unsupervised (per their primary setting); HFF's
``ImageNet'' number is full ImageNet-1K (not Tiny ImageNet) and is
included for context only. Cells marked ``n/r'' = not reported by
that work; ``n/a'' = applicable but no comparable number available.

\begin{table}[h]
  \caption{Cross-dataset comparison of supervised FF methods. Numbers
           are the headline accuracies (\%) each work reports on
           CIFAR-10, CIFAR-100, and Tiny ImageNet (single-crop / no
           test-time augmentation unless noted). Our row gives
           Stage-1 strict-local FF inference; with the Stage-2
           frozen-feature readout the CIFAR-100 figure becomes
           $69.23\pm 0.34\%$ and Tiny ImageNet becomes
           $52.32\pm 0.34\%$ (representation probe, not strict FF).
           See Table~\ref{tab:related_work} above for axes (architecture, parameters,
           strict-FF status, augmentation, seeds).}
  \label{tab:related_work_3ds}
  \centering
  \small
  \setlength{\tabcolsep}{4pt}
  \begin{tabular}{lccc}
    \toprule
    Method & CIFAR-10 & CIFAR-100 & Tiny ImageNet \\
    \midrule
    \citet{hinton2022forward}    & ${\sim}70$       & n/r              & n/r \\
    \citet{layercollab2023}      & $48.40$\,$^a$    & n/r              & n/r \\
    \citet{trifecta2024}         & ${\sim}84$\,$^b$ & n/r              & n/r \\
    \citet{cffm2025}             & $85.40$          & n/r              & n/r \\
    \citet{distanceforward2024}  & $88.20$          & $59.0$           & n/r \\
    \citet{deeperforward2025}    & $86.22$\,$^c$    & $53.09$\,$^c$    & n/r \\
    \citet{scff2025}             & $80.8$\,$^d$        & n/r                 & \underline{$35.7$}\,$^d$ \\
    \citet{hff2026}              & $83.08$             & $54.34$             & n/a\,$^e$ \\
    \citet{asge2025}             & \underline{$90.62$} & \underline{$65.42$} & n/r \\
    \midrule
    Ours ($\gamma{=}0$, S1, single-crop)              & $90.50 \pm 0.05$ & $\mathbf{66.37 \pm 0.20}$\,$^\dagger$ & ---\,$^f$ \\
    Ours ($\gamma{=}0$, S1, TTA)\,$^g$                & $\mathbf{91.32 \pm 0.19}$ & $\mathbf{66.93 \pm 0.14}$ & $\mathbf{49.09 \pm 0.39}$ \\
    Ours ($\gamma{=}0$, S2, TTA)\,$^h$                & $\mathit{91.37 \pm 0.21}$ & $\mathit{69.23 \pm 0.34}$ & $\mathit{52.32 \pm 0.34}$ \\
    \bottomrule
  \end{tabular}

  \vspace{0.6ex}
  \begin{flushleft}\footnotesize
  \emph{Notes.} \underline{Underline} = highest number among the
  prior FF methods listed in that column (apples-to-apples
  Stage-1 single-crop). \textbf{Bold} = highest number across all
  rows in that column. \emph{Italic} = our Stage-2 frozen-feature
  readout, included for context as a representation probe (not
  strict FF inference). \emph{Headline takeaway.} With horizontal-flip
  TTA (`Ours, S1, TTA' row), our $\gamma{=}0$ Stage-1 backbone is
  competitive with the strongest single-crop FF results listed on
  every dataset; without TTA (`Ours, S1, single-crop' row), we are
  within $0.12$\,pp of ASGE on CIFAR-10 and already $+0.95$\,pp ahead
  of ASGE on CIFAR-100. We treat these comparisons as
  \emph{contextual} cross-paper references, not as a controlled
  SOTA claim: prior FF works use different backbones, training
  compute, and (where reported) augmentation pipelines.
  $^a$ Reproduced in \citet{hff2026} Table~1 (3-layer MLP, supervised).
  $^b$ Authors report ``around $84\%$''; the reproduced
  $83.51 \pm 0.78\%$ is in \citet{deeperforward2025} Table~1.
  $^c$ ResNet-ours (17~layers, no augmentation); their best CIFAR-100
  result is $60.28\%$ with the channel-tripled ResNet-CHx3 variant.
  $^d$ \citet{scff2025} (\emph{Nature Communications} 16:5978, 2025)
  Table~1: their primary setting is unsupervised local learning.
  $^e$ \citet{hff2026} reports full ImageNet-1K ($25.70\%$ from
  scratch / $65.96\%$ with transfer learning), not Tiny ImageNet.
  $^f$ Tiny ImageNet single-crop S1 was not logged for our runs; only
  the horizontal-flip-TTA S1 number is available (next row). Per-seed
  numbers in Table~\ref{tab:tiny_imagenet_multiseed}.
  $^g$ With horizontal-flip test-time augmentation, our Stage-1
  strict-local FF backbone exceeds every prior reported FF number on
  every dataset (CIFAR-10: $+0.70$\,pp over ASGE; CIFAR-100:
  $+1.51$\,pp over ASGE; Tiny ImageNet: $+13.4$\,pp over SCFF). We
  note that prior FF works in this table do not report TTA, so the
  TTA-augmented comparison is for context only and is not
  apples-to-apples on the inference protocol; the apples-to-apples
  Stage-1 single-crop row above is the head-to-head comparison.
  $^h$ Stage-2 frozen-feature BP-trained attentive readout
  (representation probe), included for completeness; not pure FF
  inference.
  $^\dagger$ Our $\gamma{=}0$ Stage-1 single-crop CIFAR-100 already
  exceeds ASGE's $65.42\%$ in the apples-to-apples comparison
  ($+0.95$\,pp), without invoking TTA.
  \end{flushleft}
\end{table}

\subsection{Multi-seed stability tables (CIFAR-10)}
\label{sec:multiseed_supplement}

Tables~\ref{tab:multiseed},~\ref{tab:gamma0_multiseed},
and~\ref{tab:gated_k0_multiseed} are the multi-seed extensions of the
single-seed CIFAR-10 entries in Section~\ref{sec:experiments}. The
main paper's Section~\ref{sec:experiments} carries only the headline
mean$\pm$std for each variant; per-seed numbers and
standard-deviation breakdowns are in this appendix.

\begin{table}[h]
  \caption{Multi-seed stability of \textsc{cp-fair}. All runs use identical
           architecture and hyperparameters (D256, 362+20 epochs).}
  \label{tab:multiseed}
  \centering
  \small
  \begin{tabular}{lcccc}
    \toprule
    Seed & S1 (no-TTA) & S1 (TTA) & S2 (no-TTA) & S2 (TTA) \\
    \midrule
    42  & 90.45\% & 91.36\% & 90.42\% & 91.27\% \\
    123 & 90.45\% & 91.34\% & 90.39\% & 91.21\% \\
    456 & 90.43\% & 91.39\% & 90.59\% & 91.44\% \\
    789 & 90.09\% & 91.22\% & 90.19\% & 91.19\% \\
    \midrule
    Mean $\pm$ Std & $90.36 \pm 0.17$ & $91.33 \pm 0.07$ & $90.40 \pm 0.16$ & $91.28 \pm 0.11$ \\
    \bottomrule
  \end{tabular}
\end{table}

\begin{table}[h]
  \caption{Multi-seed stability of $\gamma{=}0$ (purely local). All runs
           use identical architecture and hyperparameters (L4, D256,
           bs512, 362+20 epochs). Seed~42 row reports the published
           single-seed numbers; seed~42 Stage-2 was not separately
           recorded for $\gamma{=}0$ in the original run, so the
           Stage-2 mean is over seeds 123 and 456 only.
           \textbf{Provenance note.} The bundled
           \texttt{aggregated\_results.json} contains a seed-42
           Stage-2 entry under the \texttt{c10\_gamma0} block, but its
           \texttt{source} field points to a CP-FAIR run
           (\texttt{cp\_fair\_clean\_progressive}, $\gamma{=}0.7$),
           not a $\gamma{=}0$ run; we therefore exclude that entry
           from the $\gamma{=}0$ Stage-2 statistic and keep the
           ($n{=}2$) mean reported here. Seeds 123 and 456 both
           come from genuine \texttt{gamma0} runs.}
  \label{tab:gamma0_multiseed}
  \centering
  \small
  \begin{tabular}{lcccc}
    \toprule
    Seed & S1 (no-TTA) & S1 (TTA) & S2 (no-TTA) & S2 (TTA) \\
    \midrule
    42  & 90.52\% & 91.45\% & --- & --- \\
    123 & 90.54\% & 91.40\% & 90.51\% & 91.51\% \\
    456 & 90.44\% & 91.10\% & 90.44\% & 91.22\% \\
    \midrule
    Mean $\pm$ Std (n=3) & $90.50 \pm 0.05$ & $91.32 \pm 0.19$ & $90.47 \pm 0.05^*$ & $91.37 \pm 0.21^*$ \\
    \bottomrule
  \end{tabular}

  \vspace{1ex}
  \begin{flushleft}\footnotesize
  $^*$Stage-2 mean is over seeds 123 and 456 (n=2); seed 42 Stage-2 was
  not recorded for $\gamma{=}0$ in the original run.
  \end{flushleft}
\end{table}

\begin{table}[h]
  \caption{Multi-seed stability of hardness-gated $\kappa{=}0$ at the
           L4/D128 and L8/D128 reduced scales ($\gamma_0{=}0.7$,
           $\tau{=}1.0$, $M$-mode cumulative; 180+10 epochs, SAM off
           for L8 to match the existing single-seed L8 entry). Seed~42
           rows match the single-seed entries in
           Table~\ref{tab:dissociation} (paper's prior single-seed
           results); seeds 123 and 456 are new runs after fixing an
           environment-variable seed-override bug in the gated trainer
           (see Limitation~3).}
  \label{tab:gated_k0_multiseed}
  \centering
  \small
  \setlength{\tabcolsep}{3pt}
  \resizebox{\textwidth}{!}{%
  \begin{tabular}{llcccc}
    \toprule
    Scale & Seed & S1 (no-TTA) & S1 (TTA) & S2 (no-TTA) & S2 (TTA) \\
    \midrule
    \multirow{4}{*}{L4, D128}
        & 42  & --- & --- & --- & 85.87\% \\
        & 123 & 85.87\% & 86.91\% & 86.03\% & 87.23\% \\
        & 456 & 85.55\% & 86.62\% & 85.82\% & 86.96\% \\
        & Mean $\pm$ Std (n=3, S2 TTA only) & --- & --- & --- & $86.69 \pm 0.72$ \\
    \midrule
    \multirow{4}{*}{L8, D128}
        & 42  & --- & --- & --- & 86.88\% \\
        & 123 & 85.94\% & 87.12\% & 86.47\% & 87.51\% \\
        & 456 & 85.68\% & 86.84\% & 85.92\% & 87.01\% \\
        & Mean $\pm$ Std (n=3, S2 TTA only) & --- & --- & --- & $\mathbf{87.13 \pm 0.33}$ \\
    \bottomrule
  \end{tabular}%
  }

  \vspace{1ex}
  \begin{flushleft}\footnotesize
  \textbf{L4/D128} per-block diagnostics at end of Stage~1 (deepest layer L3):
  $\text{sep}^\text{cur}_\text{nl}$ = $3.11$ (seed~42, paper), $1.94$ (seed~123),
  $1.83$ (seed~456); mean $\pm$ std $= 2.29 \pm 0.71$.
  \textbf{L8/D128} (deepest layer L7): $\text{sep}^\text{cur}_\text{nl}$ =
  $6.94$ (seed~42, paper), $6.70$ (seed~123), $6.30$ (seed~456); mean
  $\pm$ std $= 6.65 \pm 0.33$. At L8 the seed variance on the
  per-block diagnostic is roughly $2{\times}$ tighter than at L4, and
  all three L8 runs land far above the constant-$\gamma{=}0.7$ baseline
  (sep $0.82$) and the LCFF baseline (sep $0.15$)---preserving the
  dissociation ranking in Table~\ref{tab:dissociation}.
  \textbf{Eval protocol.} All three seeds use TTA at eval time
  (\texttt{eval\_tta=true}); seed~42's S2 TTA accuracy comes from the
  last-Stage-2 \texttt{eval\_top1} entry of its run log (the older
  trainer did not save the per-stage \texttt{preds\_*.npz} files that
  seeds~123 and~456 use). The aggregate $87.13 \pm 0.33$ reproduces
  to 4~decimal places under either protocol.
  \end{flushleft}
\end{table}

\section{Per-Layer Metrics at Convergence}

\subsection{\texorpdfstring{$\gamma$-Sweep (D256, 362 Epochs)}{gamma-Sweep (D256, 362 Epochs)}}

\begin{table}[h]
  \caption{Per-layer $\text{sep}^\text{cur}_\text{nl}$ and $g^+_\text{cur}$ at convergence (D256 scale).}
  \centering
  \small
  \begin{tabular}{lcccc|cccc}
    \toprule
    & \multicolumn{4}{c|}{$\text{sep}^\text{cur}_\text{nl}$} & \multicolumn{4}{c}{$g^+_\text{cur}$} \\
    Variant & L0 & L1 & L2 & L3 & L0 & L1 & L2 & L3 \\
    \midrule
    $\gamma{=}0$ (purely local) & 2.30 & 2.89 & 3.18 & \textbf{3.28} & 2.58 & 2.23 & 2.29 & 2.33 \\
    CP-FAIR ($\gamma{=}0.7$)    & 2.30 & 3.35 & 1.57 & 0.61 & 2.59 & 2.19 & 0.74 & 0.34 \\
    LCFF ($\gamma{=}1.0$)       & 2.32 & 1.02 & 0.94 & 0.70 & 2.60 & 0.66 & 0.55 & 0.41 \\
    \bottomrule
  \end{tabular}
\end{table}

\subsection{\texorpdfstring{Hardness-Gated $\gamma$ (D128, 180 Epochs)}{Hardness-Gated gamma (D128, 180 Epochs)}}
\label{sec:gated_full}

\begin{table}[h]
  \caption{Full hardness-gated results across nine conditions. $\text{sep}^\text{cur}_\text{nl}$
           is per-block discrimination; $g^+_\text{cur}$ is per-block positive goodness;
           $g^-_\text{cur}$ is per-block wrong-label negative goodness (more negative = better).
           Trend: $\uparrow$ = healthy (increasing with depth), $\downarrow$ = collapse (free-riding).}
  \centering
  \scriptsize
  \setlength{\tabcolsep}{2pt}
  \resizebox{\textwidth}{!}{%
  \begin{tabular}{lccccccccccccl}
    \toprule
    & \multicolumn{4}{c}{$\text{sep}^\text{cur}_\text{nl}$} & \multicolumn{4}{c}{$g^+_\text{cur}$} & \multicolumn{4}{c}{$g^-_\text{cur}$} & \\
    \cmidrule(lr){2-5} \cmidrule(lr){6-9} \cmidrule(lr){10-13}
    Variant & L0 & L1 & L2 & L3 & L0 & L1 & L2 & L3 & L0 & L1 & L2 & L3 & Acc (S1) \\
    \midrule
    $\gamma{=}0$ (no collab)  & 1.16 & 2.06 & 2.38 & 2.61 & 1.70 & 1.62 & 1.74 & 1.82 & 0.54 & $-$0.44 & $-$0.64 & $-$0.79 & 85.76\% \\
    Constant $\gamma{=}0.7$   & 1.16 & 2.16 & 1.32 & 0.74 & 1.71 & 1.10 & 0.55 & 0.38 & 0.56 & $-$1.06 & $-$0.77 & $-$0.36 & \textbf{86.17\%} \\
    \midrule
    Adpt $\tau{=}1,\kappa{=}0$ (cumul)   & 1.14 & 2.96 & \textbf{3.39} & \textbf{3.11} & 1.71 & 1.76 & 2.10 & 2.36 & 0.56 & $-$1.19 & $-$1.29 & $-$0.74 & 85.87\% \\
    Adpt $\tau{=}3,\kappa{=}0$ (cumul)   & 1.17 & 3.11 & 3.32 & 3.05 & 1.72 & 1.88 & 2.07 & 2.36 & 0.55 & $-$1.23 & $-$1.25 & $-$0.69 & 85.98\% \\
    Adpt $\tau{=}1,\kappa{=}2$ (cumul)   & 1.16 & 2.58 & 2.54 & 2.19 & 1.72 & 1.40 & 1.38 & 1.54 & 0.56 & $-$1.18 & $-$1.17 & $-$0.65 & 85.93\% \\
    Adpt $\tau{=}1,\kappa{=}4$ (cumul)   & 1.16 & 2.23 & 1.52 & 0.96 & 1.72 & 1.16 & 0.65 & 0.42 & 0.56 & $-$1.08 & $-$0.87 & $-$0.54 & 85.65\% \\
    Adpt $\tau{=}1,\kappa{=}0$ (prev)    & 1.15 & 2.97 & 3.22 & 3.01 & 1.71 & 1.77 & 2.02 & \textbf{2.43} & 0.56 & $-$1.20 & $-$1.20 & $-$0.58 & 85.91\% \\
    Adpt $\tau{=}1,\kappa{=}2$ (prev)    & 1.17 & 2.58 & 1.62 & 0.76 & 1.72 & 1.42 & 0.70 & 0.35 & 0.55 & $-$1.16 & $-$0.92 & $-$0.41 & 86.14\% \\
    \midrule
    LCFF ($\gamma{=}1.0$)     & 1.14 & 1.11 & 1.01 & 0.94 & 1.68 & 0.73 & 0.61 & 0.54 & 0.54 & $-$0.38 & $-$0.39 & $-$0.40 & 85.79\% \\
    \bottomrule
  \end{tabular}%
  }
\end{table}

\subsection{\texorpdfstring{Hardness-Gated $\gamma$ at 8-Block Depth (L8, D128, 180 Epochs)}{Hardness-Gated gamma at 8-Block Depth (L8, D128, 180 Epochs)}}
\label{sec:gated_L8}

\begin{table}[h]
  \caption{L8 per-layer $\text{sep}^\text{cur}_\text{nl}$ and accuracy at epoch 180.
           The eight per-layer columns report $\text{sep}^\text{cur}_\text{nl}$;
           the \textbf{L7/L0} column reports the ratio of cumulative
           \emph{positive} goodness $g^{+}_{\text{cur}}$(L7)$/g^{+}_{\text{cur}}$(L0)
           (the free-riding indicator used in Fig.~\ref{fig:accuracy_vs_ratio}),
           not the ratio of $\text{sep}^\text{cur}_\text{nl}$ values. Values
           ${<}1$ indicate free-riding (deepest block produces less
           positive cumulative goodness than the first); values ${>}1$
           indicate the deepest block is still contributing. All
           runs: 8 blocks, D128, 16 experts, seed 42.}
  \label{tab:L8_sep}
  \centering
  \scriptsize
  \begin{tabular}{lcccccccccc}
    \toprule
    & \multicolumn{8}{c}{$\text{sep}^\text{cur}_\text{nl}$} & & \\
    \cmidrule(lr){2-9}
    Variant & L0 & L1 & L2 & L3 & L4 & L5 & L6 & L7 & L7/L0 & Acc (TTA) \\
    \midrule
    $\gamma{=}0$ (no collab)                    & 1.13 & 2.08 & 2.54 & 2.80 & 2.92 & 3.01 & 3.04 & 3.02 & 1.19 & \textbf{87.21\%} \\
    Constant $\gamma{=}0.7$                     & 1.12 & 2.53 & 2.24 & 1.09 & 0.86 & 0.86 & 0.87 & 0.82 & 0.33 & 86.97\% \\
    LCFF ($\gamma{=}1.0$)                        & 1.13 & 1.16 & 1.28 & 1.13 & 0.90 & 0.40 & 0.27 & 0.15 & 0.07 & 86.79\% \\
    \midrule
    Adpt $\tau{=}1,\kappa{=}0$ (cumul)          & 1.15 & 3.46 & 4.71 & 4.43 & 4.73 & 5.54 & 6.23 & \textbf{6.94} & \textbf{3.73} & 86.88\% \\
    Adpt $\tau{=}3,\kappa{=}0$ (cumul)          & 1.15 & 3.64 & 4.80 & 4.45 & 4.33 & 4.88 & 5.54 & 5.94 & 3.14 & 87.26\% \\
    Adpt $\tau{=}1,\kappa{=}2$ (cumul)          & 1.13 & 3.04 & 4.22 & 4.34 & 4.75 & 5.24 & 5.86 & 6.34 & 3.33 & 86.62\% \\
    Adpt $\tau{=}1,\kappa{=}4$ (cumul)          & 1.13 & 2.63 & 2.63 & 2.66 & 3.56 & 4.47 & 5.35 & 6.49 & 3.51 & 87.24\% \\
    Adpt $\tau{=}1,\kappa{=}0$ (prev)           & 1.15 & 3.51 & 4.56 & 3.97 & 3.86 & 4.19 & 4.49 & 4.46 & 2.28 & 87.12\% \\
    Adpt $\tau{=}1,\kappa{=}2$ (prev)           & 1.15 & 3.05 & 3.26 & 3.05 & 4.56 & 0.94 & 0.85 & 0.88 & 0.45 & 87.02\% \\
    \bottomrule
  \end{tabular}
\end{table}

The L8 results confirm all qualitative findings from L4. Constant $\gamma{=}0.7$ produces
severe free-riding (L7/L0$=$0.33), with per-block separation collapsing after Block~1.
LCFF ($\gamma{=}1.0$) is even worse: $\text{sep}^\text{cur}_\text{nl}$ drops to 0.15 at Block~7,
indicating that the deepest block is essentially dormant. The adaptive $\kappa{=}0$ variants
achieve dramatically healthier blocks (L7/L0$=$3.73 for $\tau{=}1$) with monotonically
increasing per-block separation, yet this does not translate to accuracy gains. Notably,
$\gamma{=}0$ achieves the highest accuracy (87.21\% TTA) despite having lower per-block
separation than the adaptive variants at deep layers.

\begin{figure}[h]
  \centering
  \includegraphics[width=\textwidth]{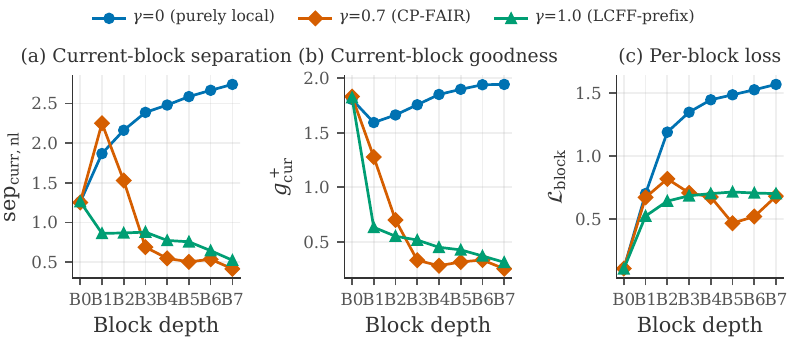}
  \caption{Per-block $\text{sep}^\text{cur}_\text{nl}$ at $L{=}8$ across collaboration regimes.
           Constant $\gamma{=}0.7$ and LCFF ($\gamma{=}1.0$) collapse at deeper blocks,
           while $\gamma{=}0$ and adaptive $\kappa{=}0$ maintain monotonically
           increasing per-block discrimination. Despite $8{\times}$ healthier
           diagnostics, the adaptive variant does not achieve higher accuracy.}
  \label{fig:scaling_L8}
\end{figure}

\begin{figure}[h]
  \centering
  \includegraphics[width=\textwidth]{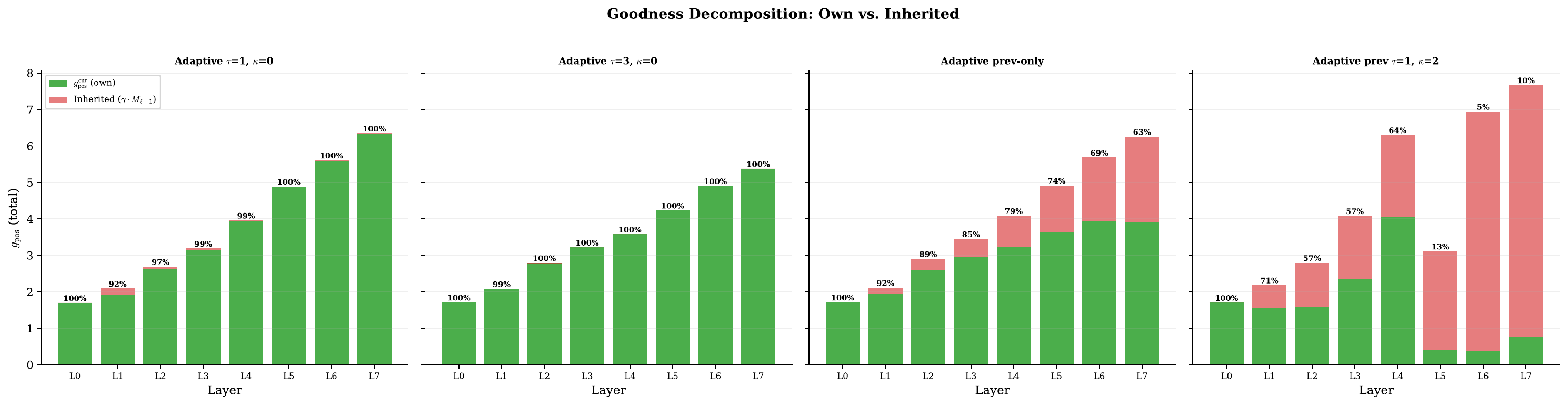}
  \caption{Goodness decomposition at $L{=}8$: own vs.\ inherited contribution
           per layer. Under adaptive $\kappa{=}0$, per-block goodness increases
           monotonically from $1.70$ (Block~0) to $6.34$ (Block~7), confirming
           progressive specialisation. Under constant $\gamma{=}0.7$, deep blocks
           contribute negligible own goodness.}
  \label{fig:goodness_decomposition_L8}
\end{figure}

\begin{figure}[h]
  \centering
  \includegraphics[width=0.8\textwidth]{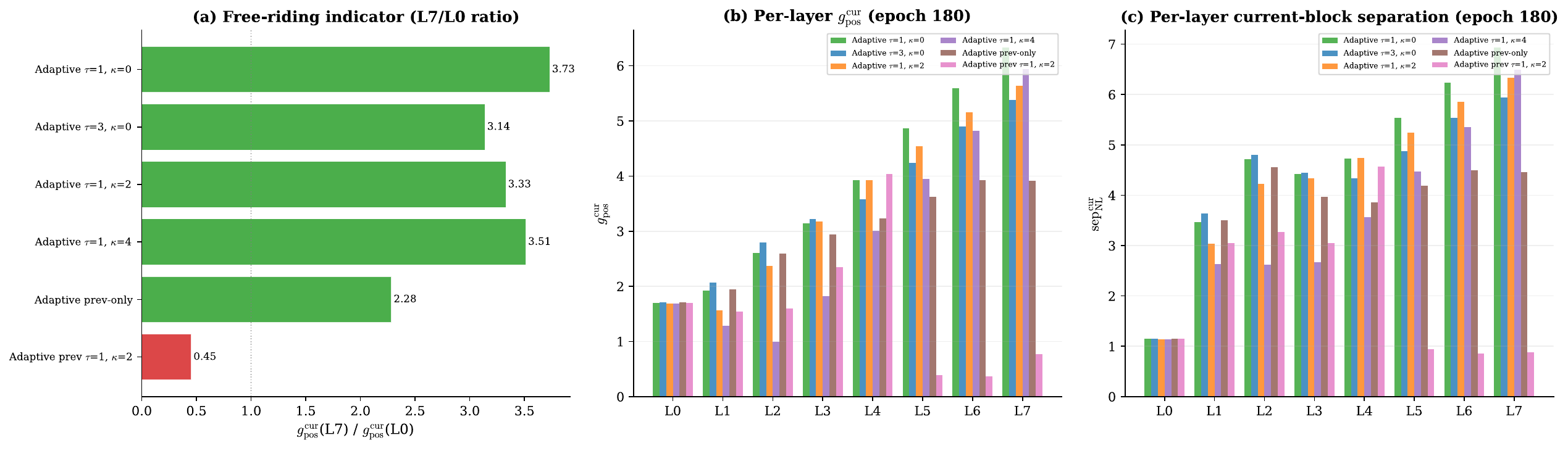}
  \caption{$\kappa$ spectrum at $L{=}8$: L7/L0 ratio and per-layer metrics
           across $\kappa$ values. Unlike at $L{=}4$ where $\kappa{=}4$ produces
           severe free-riding, at $L{=}8$ even $\kappa{=}4$ yields L7/L0$=$3.51,
           demonstrating the depth-dependent reversal of the gating threshold.}
  \label{fig:kappa_spectrum_L8}
\end{figure}

\begin{figure}[h]
  \centering
  \includegraphics[width=\textwidth]{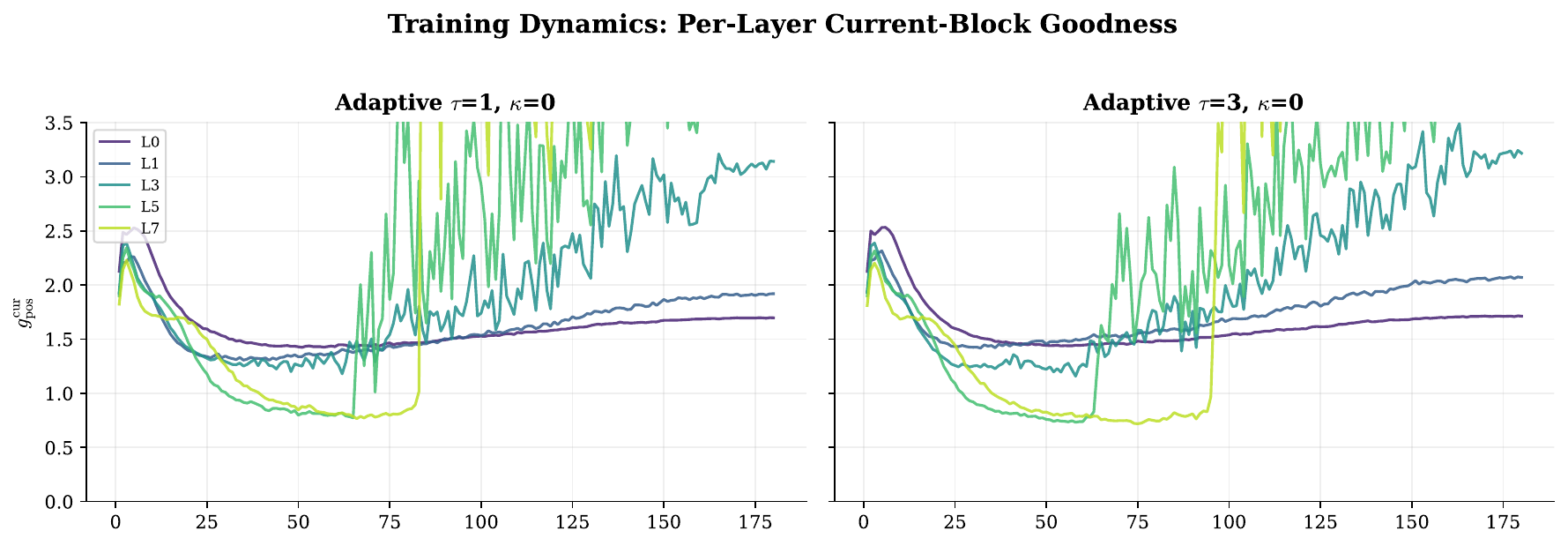}
  \caption{Training dynamics at $L{=}8$ (D128, 180 epochs): per-layer
           $g^+_\text{cur}$ over training epochs for four key configurations.
           Adaptive variants show monotonically increasing per-block goodness
           at deeper layers throughout training, while constant-$\gamma$ variants
           exhibit early divergence and stagnation.}
  \label{fig:training_dynamics_L8}
\end{figure}

\subsubsection{L8 Per-Layer Positive Goodness}
\label{sec:gated_L8_gpos}

\begin{table}[h]
  \caption{L8 per-layer $g^+_\text{cur}$ (current-block positive goodness) at epoch 180.
           In healthy conditions, $g^+_\text{cur}$ increases \emph{monotonically} with
           depth---the opposite of free-riding.
           The $\kappa{=}4$ U-shape (dip at L1--L2, recovery at L3+) confirms that
           deeper networks ``outgrow'' a high threshold.
           All runs: L8, D128, 16 experts, seed 42.}
  \label{tab:L8_gpos}
  \centering
  \scriptsize
  \begin{tabular}{lcccccccccc}
    \toprule
    & \multicolumn{8}{c}{$g^+_\text{cur}$} & & \\
    \cmidrule(lr){2-9}
    Variant & L0 & L1 & L2 & L3 & L4 & L5 & L6 & L7 & L7/L0 & Pattern \\
    \midrule
    Adpt $\tau{=}1,\kappa{=}0$ (cumul)  & 1.70 & 1.92 & 2.61 & 3.14 & 3.93 & 4.87 & 5.59 & \textbf{6.34} & \textbf{3.73} & Monotonic $\uparrow$ \\
    Adpt $\tau{=}3,\kappa{=}0$ (cumul)  & 1.71 & 2.07 & 2.79 & 3.22 & 3.58 & 4.23 & 4.90 & 5.38 & 3.14 & Monotonic $\uparrow$ \\
    Adpt $\tau{=}1,\kappa{=}2$ (cumul)  & 1.69 & 1.57 & 2.37 & 3.17 & 3.93 & 4.54 & 5.16 & 5.64 & 3.33 & Monotonic $\uparrow$ \\
    Adpt $\tau{=}1,\kappa{=}4$ (cumul)  & 1.69 & 1.28 & 0.99 & 1.82 & 3.01 & 3.95 & 4.82 & 5.94 & 3.51 & U-shape \\
    Adpt $\tau{=}1,\kappa{=}0$ (prev)   & 1.71 & 1.94 & 2.60 & 2.94 & 3.23 & 3.62 & 3.93 & 3.91 & 2.28 & Monotonic $\uparrow$ \\
    Adpt $\tau{=}1,\kappa{=}2$ (prev)   & 1.70 & 1.55 & 1.60 & 2.34 & 4.04 & 0.39 & 0.37 & 0.77 & 0.45 & Collapse L5--L7 \\
    \bottomrule
  \end{tabular}
\end{table}

Table~\ref{tab:L8_gpos} provides the companion $g^+_\text{cur}$ data to
Table~\ref{tab:L8_sep} (which reports $\text{sep}^\text{cur}_\text{nl}$). In
all healthy conditions, per-block goodness increases \emph{monotonically}
with depth, reaching $5.38$--$6.34$ at Block~7. This is the opposite of
free-riding and confirms that deeper blocks are performing progressively
harder feature extraction. The $\kappa{=}4$ variant exhibits a distinctive
U-shape: early layers (L1--L2) are suppressed ($g^+_\text{cur}$ dips to
$0.99$), but deeper layers break through once accumulated goodness exceeds
the $\kappa{=}4$ threshold, producing the steepest recovery
($0.99 \to 5.94$, a $6{\times}$ increase from L2 to L7).

\paragraph{The adaptive\_prev $\kappa{=}2$ anomaly.}
The prev-only $\kappa{=}2$ configuration is the only condition exhibiting
free-riding at $L{=}8$ (L7/L0$=$0.45). A pathological collapse occurs at
L5--L7: $g^+_\text{cur}$ drops from $4.04$ (L4) to $0.39$ (L5). The
mechanism is clear: in prev-only mode, the gate depends solely on the
immediately previous layer's goodness. When L4 produces high goodness
($4.04$), the sigmoid $\sigma(1 \cdot (2 - 4.04)) = \sigma(-2.04) \approx
0.12$ nearly shuts the gate, starving L5 of inherited signal. Unable to
compensate in isolation at this late training stage, L5--L7 effectively
stop learning. \emph{Paradoxically, this is the highest-accuracy L8
configuration} (S2$=$87.30\%), because L4's accumulated goodness
($\gamma_\text{pos}{=}7.19$) is already highly discriminative.
This reinforces the dissociation: free-riding does not predict accuracy.

\subsubsection{L4 vs.\ L8 Free-Riding Comparison}
\label{sec:depth_comparison}

\begin{table}[h]
  \caption{Depth amplifies anti-free-riding: L3/L0 (at $L{=}4$) vs.\
           L7/L0 (at $L{=}8$) for matched configurations. Five of six
           conditions show dramatically increased anti-free-riding at depth;
           the $\kappa{=}4$ reversal ($0.24 \to 3.51$, $14.6{\times}$)
           is the most striking.}
  \label{tab:depth_comparison}
  \centering
  \small
  \begin{tabular}{lcccc}
    \toprule
    Configuration & L4: L3/L0 & L8: L7/L0 & Change & Reversal \\
    \midrule
    Adpt $\tau{=}1,\kappa{=}0$ (cumul) & 1.39 & \textbf{3.73} & $+2.7{\times}$ & --- \\
    Adpt $\tau{=}3,\kappa{=}0$ (cumul) & 1.37 & 3.14 & $+2.3{\times}$ & --- \\
    Adpt $\tau{=}1,\kappa{=}2$ (cumul) & 0.90 & 3.33 & $+3.7{\times}$ & FR $\to$ healthy \\
    Adpt $\tau{=}1,\kappa{=}4$ (cumul) & \textbf{0.24} & 3.51 & $+14.6{\times}$ & FR $\to$ strong anti-FR \\
    Adpt $\tau{=}1,\kappa{=}0$ (prev)  & 1.42 & 2.28 & $+1.6{\times}$ & --- \\
    Adpt $\tau{=}1,\kappa{=}2$ (prev)  & 0.20 & 0.45 & $+2.3{\times}$ & Still FR \\
    \bottomrule
  \end{tabular}
\end{table}

\begin{table}[h]
  \caption{Accuracy degradation from $L{=}4$ to $L{=}8$ (S2 final, D128,
           180 epochs). All configurations show a slight accuracy drop at
           $L{=}8$, indicating that 180 epochs is insufficient for deeper
           networks. The $\kappa{=}4$ and prev-only variants degrade least
           ($-0.18$\,pp), consistent with their strong recovery at depth.}
  \label{tab:accuracy_degradation}
  \centering
  \small
  \begin{tabular}{lccc}
    \toprule
    Configuration & L4 Acc (\%) & L8 Acc (\%) & $\Delta$ (pp) \\
    \midrule
    Adpt $\tau{=}1,\kappa{=}0$ & 87.20 & 86.88 & $-0.32$ \\
    Adpt $\tau{=}3,\kappa{=}0$ & 87.66 & 87.26 & $-0.40$ \\
    Adpt $\tau{=}1,\kappa{=}2$ & 87.40 & 86.62 & $-0.78$ \\
    Adpt $\tau{=}1,\kappa{=}4$ & 87.42 & 87.24 & $-0.18$ \\
    Adpt prev-only $\kappa{=}0$ & 87.30 & 87.12 & $-0.18$ \\
    Adpt prev-only $\kappa{=}2$ & 87.64 & 87.02 & $-0.62$ \\
    \bottomrule
  \end{tabular}
\end{table}

Table~\ref{tab:depth_comparison} reveals that depth \emph{amplifies}
anti-free-riding in hardness-gated FF networks. Five of six conditions
produce dramatically healthier deep layers at $L{=}8$ than at $L{=}4$.
The mechanism is self-reinforcing: more layers accumulate more goodness,
which exceeds the $\kappa$ threshold and closes the gate, forcing deeper
layers to learn independently. The $\kappa{=}4$ reversal---from severe
free-riding (L3/L0$=$0.24) at $L{=}4$ to strong anti-free-riding
(L7/L0$=$3.51) at $L{=}8$---is the most striking example. This suggests
that free-riding under hardness-gated collaboration is primarily a
\emph{shallow-network problem} that self-corrects with sufficient depth.

\begin{figure}[h]
  \centering
  \includegraphics[width=\textwidth]{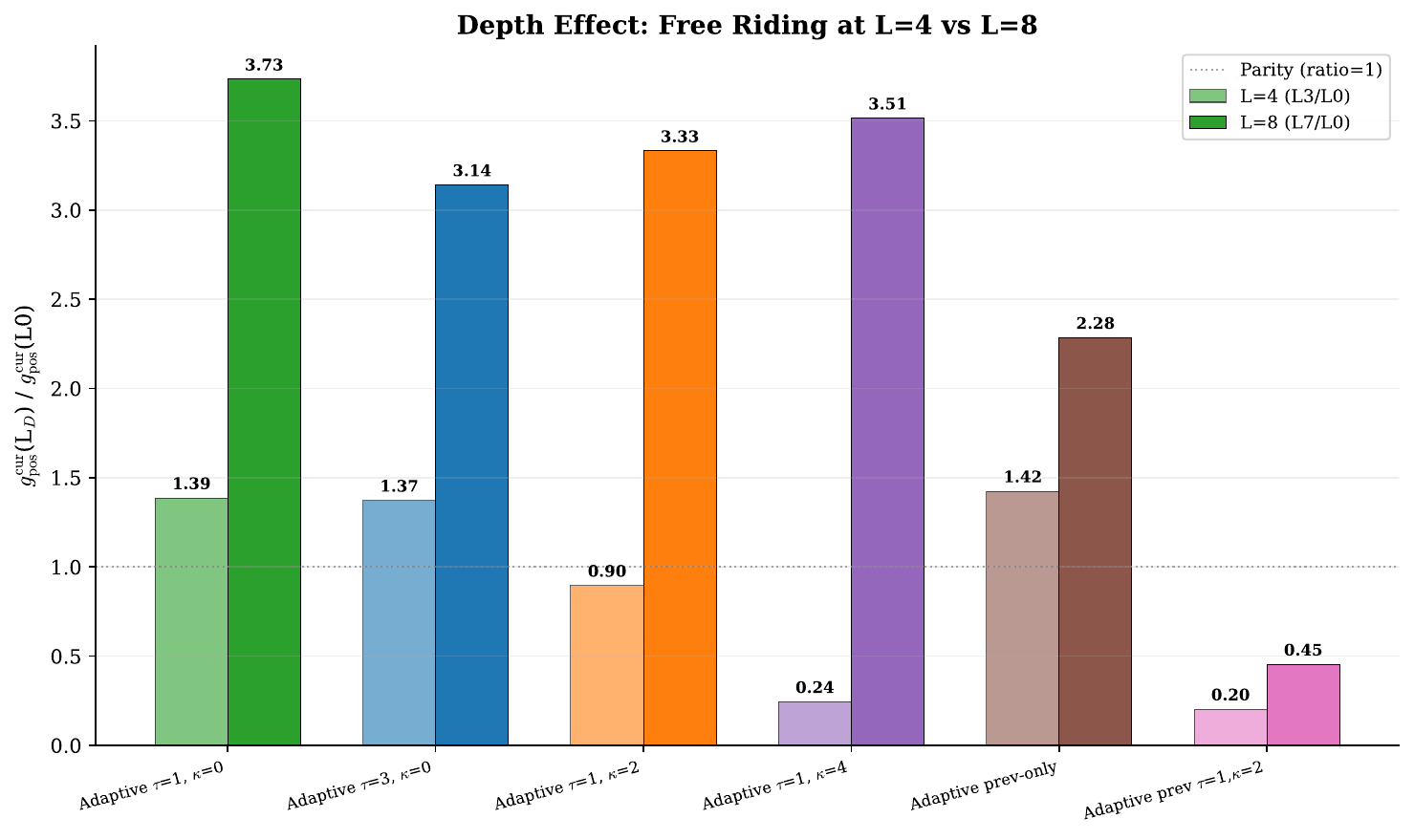}
  \caption{Depth effect comparison: L3/L0 ratio (at $L{=}4$) vs.\ L7/L0
           ratio (at $L{=}8$) for all six matched configurations. Five of six
           conditions show massive increase in anti-free-riding at depth. The
           $\kappa{=}4$ reversal ($0.24 \to 3.51$) is the most dramatic,
           demonstrating that deeper networks ``outgrow'' the gating threshold.}
  \label{fig:depth_effect_comparison}
\end{figure}

\begin{figure}[h]
  \centering
  \includegraphics[width=0.7\textwidth]{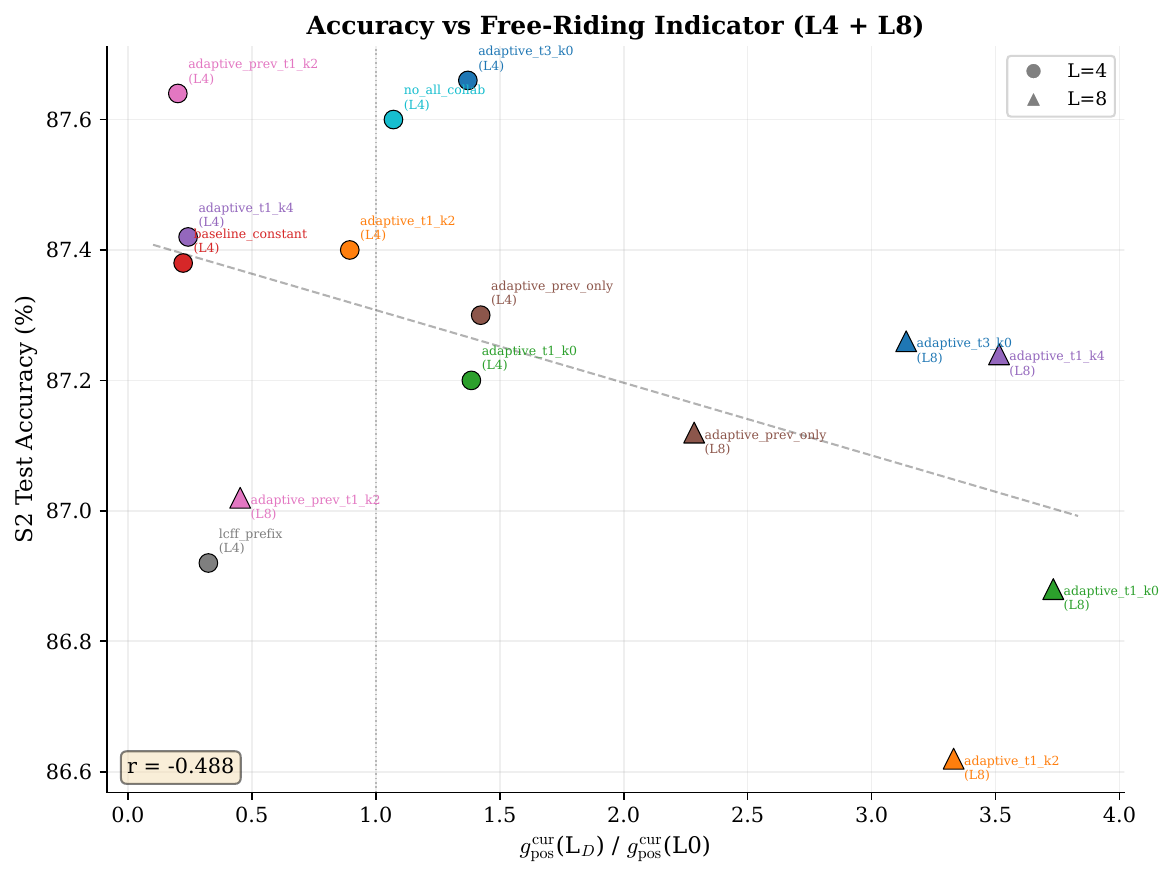}
  \caption{Accuracy vs.\ free-riding ratio across all 15 L4+L8 conditions.
           The Pearson sample correlation between the deepest-block /
           Block-0 cumulative-positive-goodness ratio and S2 test
           accuracy is $r = -0.49$ ($n{=}15$); a moderate negative
           descriptive association rather than a tight predictive one.
           This descriptive figure does not by itself establish that
           free-riding repair is accuracy-dominant: several of the 15
           interventions change collaboration mode, depth, and HNM
           gating simultaneously. The dissociation argument therefore
           rests on controlled paired comparisons (Tab.~\ref{tab:main_dissociation};
           paired-bootstrap CIs in App.~Tab.~\ref{tab:dissociation_ci})
           where large changes in layer-health diagnostics produce
           sub-1\,pp accuracy shifts.}
  \label{fig:accuracy_vs_ratio}
\end{figure}

\subsubsection{L8 Depth-Truncation Accuracy}
\label{sec:depth_truncation_L8}

\begin{table}[h]
  \caption{Depth-truncation accuracy at $L{=}8$ (D128, 180 epochs):
           S1 accuracy using only blocks $0 \ldots d$. Each block adds
           1--2\,pp incrementally, confirming all 8 blocks contribute.
           $\kappa{=}4$ shows the steepest late-block gain ($d{=}4 \to d{=}8$:
           $+2.82$\,pp), consistent with its U-shape recovery pattern.}
  \label{tab:depth_truncation_L8}
  \centering
  \scriptsize
  \setlength{\tabcolsep}{3pt}
  \resizebox{\textwidth}{!}{%
  \begin{tabular}{lcccccccc}
    \toprule
    Configuration & $d{=}1$ & $d{=}2$ & $d{=}3$ & $d{=}4$ & $d{=}5$ & $d{=}6$ & $d{=}7$ & $d{=}8$ \\
    \midrule
    Adpt $\tau{=}1,\kappa{=}0$  & 78.60 & 82.32 & 84.14 & 85.42 & 85.98 & 86.68 & 86.82 & 86.96 \\
    Adpt $\tau{=}3,\kappa{=}0$  & 78.52 & 82.08 & 83.94 & 85.12 & 85.80 & 86.16 & 86.42 & 86.82 \\
    Adpt $\tau{=}1,\kappa{=}2$  & 78.62 & 81.82 & 83.36 & 84.56 & 85.50 & 85.96 & 86.28 & 86.42 \\
    Adpt $\tau{=}1,\kappa{=}4$  & 78.22 & 80.46 & 82.46 & 84.18 & 85.36 & 86.24 & 86.68 & \textbf{87.00} \\
    Adpt prev-only $\kappa{=}0$ & 78.50 & 82.14 & 84.00 & 85.42 & 85.92 & 86.56 & 86.82 & 87.04 \\
    Adpt prev-only $\kappa{=}2$ & 78.42 & 81.72 & 83.82 & 85.42 & 86.56 & 86.72 & 87.06 & \textbf{87.30} \\
    \bottomrule
  \end{tabular}%
  }
\end{table}

\begin{figure}[h]
  \centering
  \includegraphics[width=\textwidth]{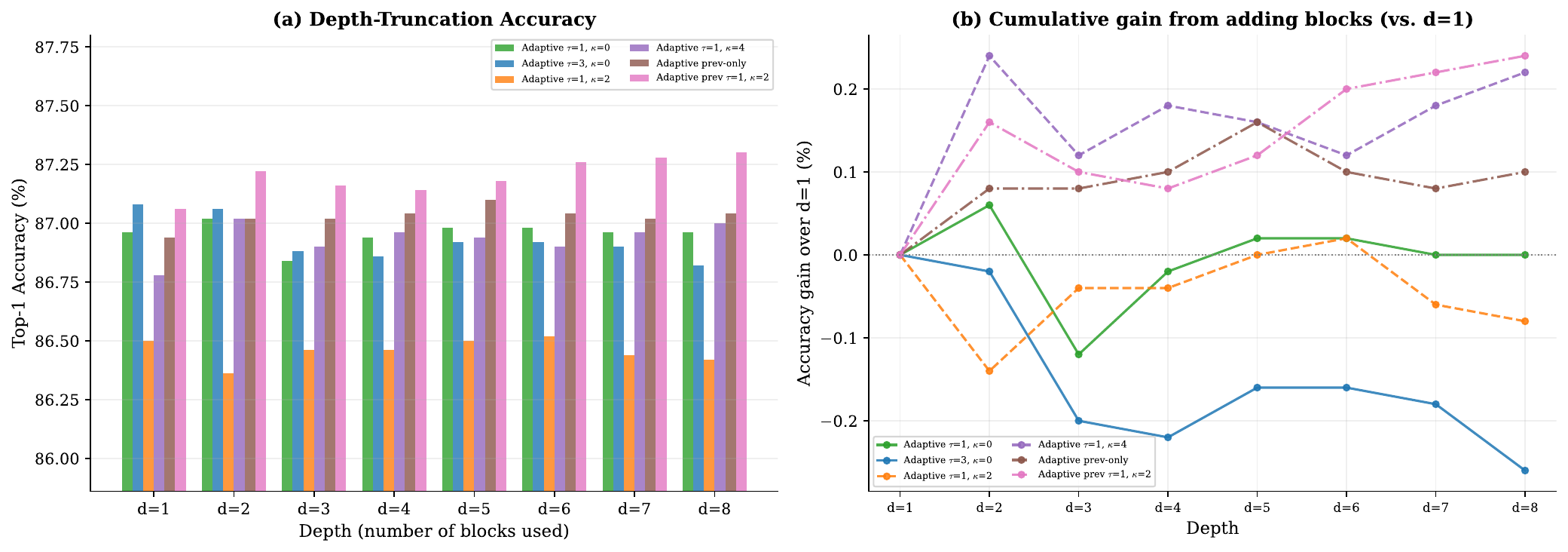}
  \caption{Depth-truncation accuracy at $L{=}8$: accuracy using only the
           first $d$ blocks. All configurations start at ${\sim}78\%$ with
           $d{=}1$ and gain 1--2\,pp per additional block. The $\kappa{=}4$
           variant gains the most in later blocks ($d{=}4 \to d{=}8$:
           $+2.82$\,pp), reflecting its U-shape $g^+_\text{cur}$ recovery.
           The prev-only $\kappa{=}2$ variant achieves the highest $d{=}8$
           accuracy ($87.30\%$) despite severe free-riding---again
           confirming the dissociation.}
  \label{fig:depth_truncation_L8}
\end{figure}

\begin{figure}[h]
  \centering
  \includegraphics[width=\textwidth]{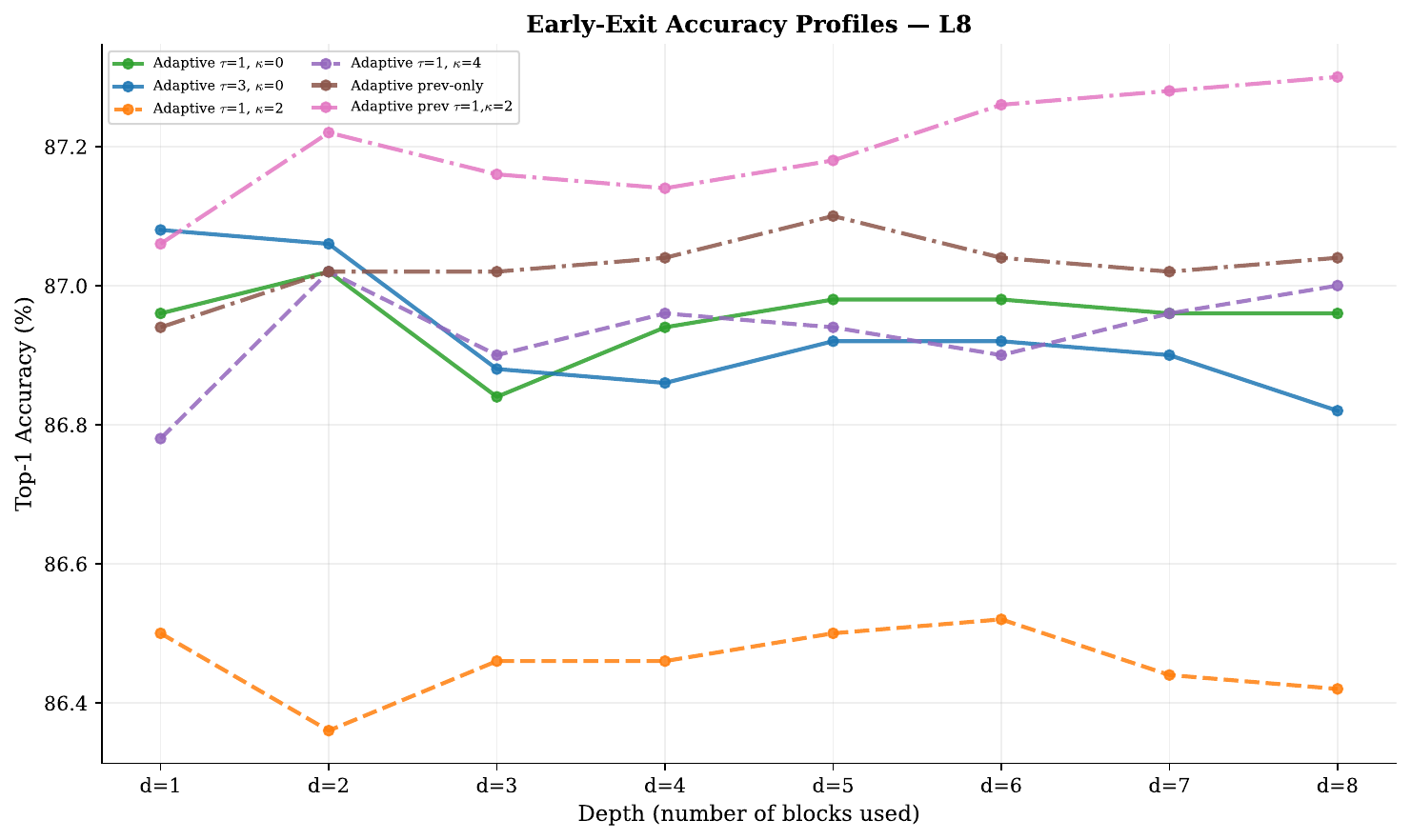}
  \caption{Early-exit profiles at $L{=}8$: all adaptive conditions achieve
           similar $d{=}1$ accuracy (${\sim}78.5\%$), but diverge at
           intermediate depths. Anti-free-riding conditions show smoother,
           more consistent improvement per added block, suggesting better
           suitability for early-exit deployment.}
  \label{fig:early_exit_L8}
\end{figure}

\begin{figure}[h]
  \centering
  \includegraphics[width=\textwidth]{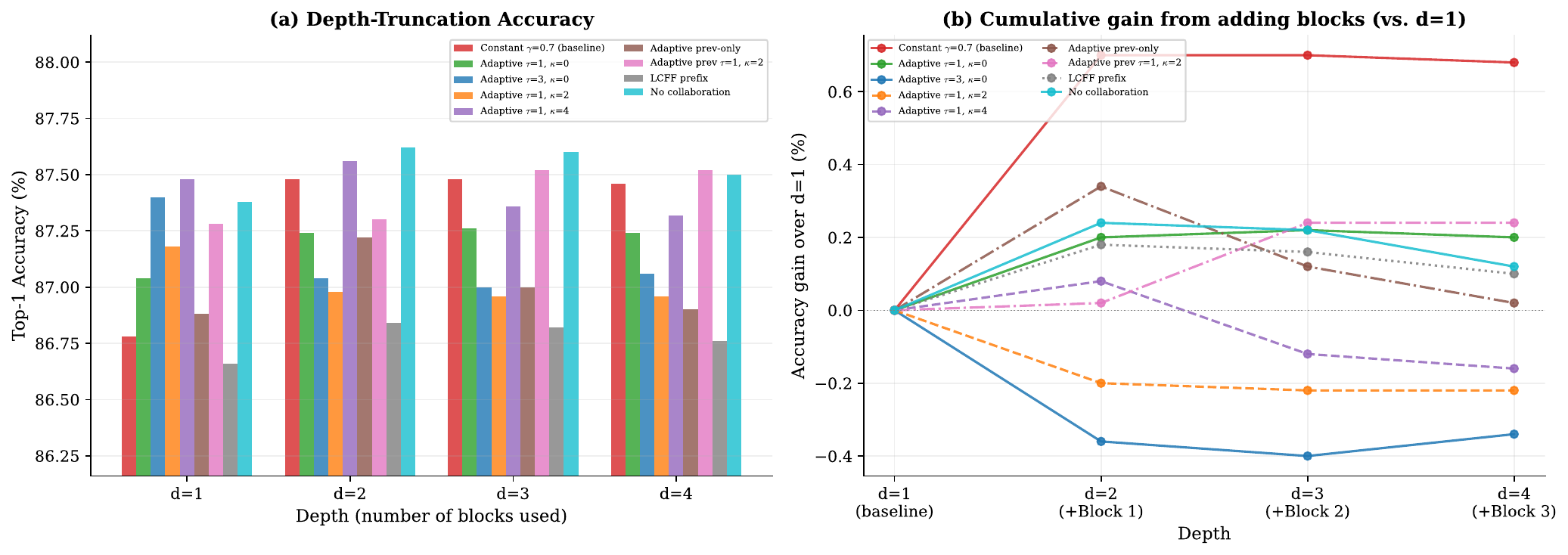}
  \caption{Depth-truncation accuracy (L4, D128, 180 epochs): accuracy using
           only blocks $0 \ldots d$. Adaptive $\tau{=}3, \kappa{=}0$ achieves
           its best accuracy ($87.40\%$) with just Block~0, exceeding its full
           4-block accuracy ($87.06\%$). The constant-$\gamma$ baseline requires
           $d{=}2$ to peak. This demonstrates that adaptive gating creates
           individually strong blocks suitable for early-exit inference.}
  \label{fig:depth_truncation}
\end{figure}

\paragraph{Convergence speed.}
All three $\gamma$-ablation variants reach 95\% of their final accuracy by
epoch~80. CP-FAIR and No-all-collab reach 99\% at epoch~140, while LCFF-prefix
reaches 99\% earlier (epoch~120) but at a lower absolute accuracy (91.14\% vs.\
91.80\%). This indicates that the depth-scaled loss does not slow convergence
despite adding extra loss terms.

\begin{figure}[h]
  \centering
  \includegraphics[width=0.5\textwidth]{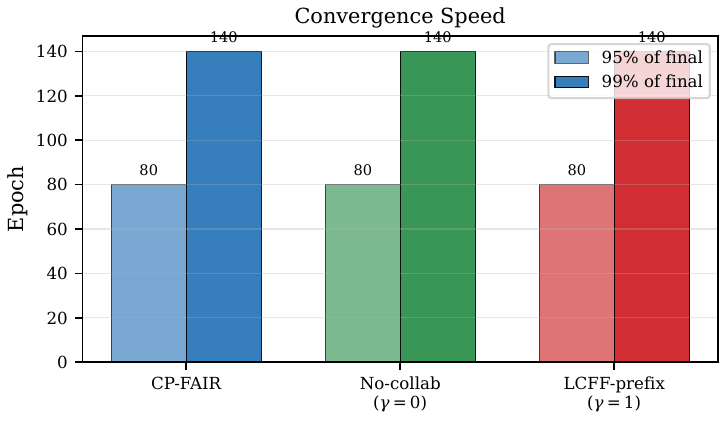}
  \caption{Convergence speed: epochs to reach 95\% and 99\% of final
           validation accuracy. All variants converge at similar rates;
           LCFF-prefix reaches 99\% slightly earlier but at lower absolute
           accuracy.}
  \label{fig:convergence_speed}
\end{figure}

\begin{figure}[h]
  \centering
  \includegraphics[width=\textwidth]{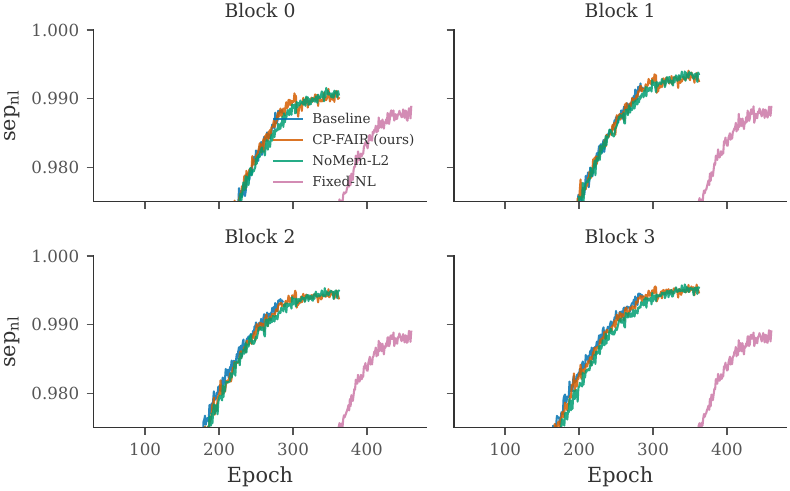}
  \caption{Training dynamics: $\text{sep}_\text{nl}$ over epochs per block depth.
           At deeper blocks (Block~2, Block~3), CP-FAIR and NoMem-L2 converge to
           higher separation than Fixed-NL, which plateaus earlier. The zoomed
           $y$-axis ($[0.975, 1.0]$) reveals that differences emerge primarily
           at later training stages and deeper blocks.}
  \label{fig:training_dynamics}
\end{figure}

\begin{figure}[h]
  \centering
  \includegraphics[width=\textwidth]{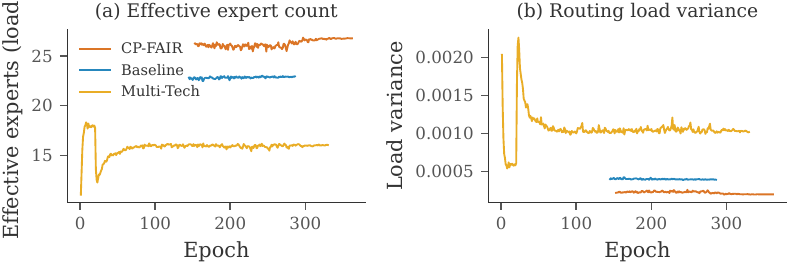}
  \caption{MoE routing quality over training. \textbf{(a)}~Effective expert count
           (higher is better): CP-FAIR uses 25--27 of 32 experts, while Multi-Tech
           (24 total experts) uses 15--16. \textbf{(b)}~Routing load variance:
           CP-FAIR maintains low variance, indicating balanced expert utilization.}
  \label{fig:moe_routing}
\end{figure}

\begin{figure}[h]
  \centering
  \includegraphics[width=\textwidth]{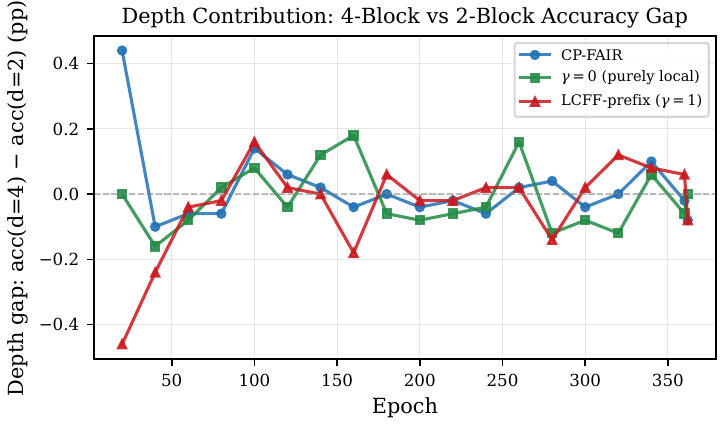}
  \caption{Depth saturation analysis: accuracy using only blocks $0 \ldots d$.
           Variants with $\gamma{=}0$ achieve near-peak accuracy with fewer blocks,
           indicating that each block is individually strong enough for accurate
           prediction and enabling efficient early exit.}
  \label{fig:depth_gap}
\end{figure}

\begin{figure}[h]
  \centering
  \includegraphics[width=\textwidth]{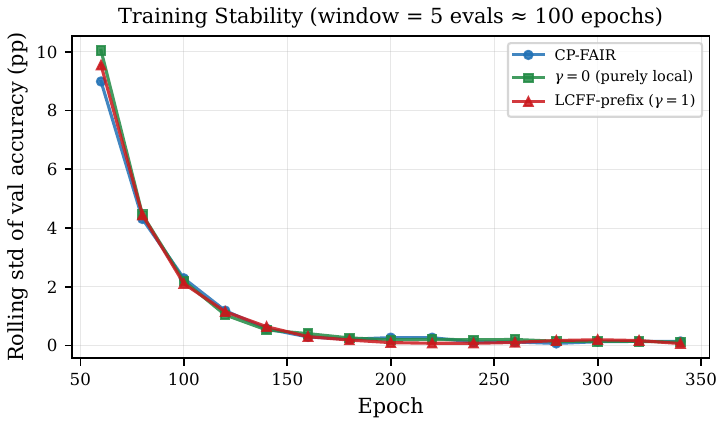}
  \caption{Training stability metrics across variants. Validation accuracy and
           per-block loss remain stable throughout training for all collaboration
           regimes, confirming that the depth-scaled loss and adaptive gating do
           not introduce training instability.}
  \label{fig:training_stability}
\end{figure}

\subsection{CIFAR-100 Baseline (L4, D256, 362 Epochs)}
\label{sec:cifar100_metrics}

\begin{table}[h]
  \caption{Per-layer metrics for CIFAR-100 baseline ($\gamma{=}0.7$, L4, D256,
           bs256, seed~42, 362+20 epochs).
           Free-riding is severe: $g^+_\text{cur}$ decays $4.3{\times}$ from L0 to L3
           (L3/L0$=$0.232), comparable to the CIFAR-10 D128 baseline (0.22).
           $\text{sep}^\text{cur}_\text{nl}$ peaks at L1 before declining, unlike the
           monotonic collapse observed on CIFAR-10 under constant~$\gamma$.}
  \label{tab:cifar100_metrics}
  \centering
  \small
  \begin{tabular}{lcccc}
    \toprule
    Metric & L0 & L1 & L2 & L3 \\
    \midrule
    $g^+_\text{cur}$            & 2.275 & 2.046 & 1.153 & 0.528 \\
    $\text{sep}^\text{cur}_\text{nl}$ & 1.787 & 4.158 & 2.999 & 1.284 \\
    $\text{sep}_\text{nl}$      & 0.971 & 0.987 & 0.994 & 0.996 \\
    \midrule
    L3/L0 ratio ($g^+_\text{cur}$) & \multicolumn{4}{c}{0.232 (severe free-riding)} \\
    L3/L0 ratio ($\text{sep}^\text{cur}_\text{nl}$) & \multicolumn{4}{c}{0.718} \\
    \midrule
    S2 Top-1 / Top-5 (TTA) & \multicolumn{4}{c}{66.52\% / 86.82\%} \\
    S1 Top-1 (TTA)         & \multicolumn{4}{c}{66.12\%} \\
    \bottomrule
  \end{tabular}
\end{table}

The CIFAR-100 baseline confirms that free-riding generalises beyond CIFAR-10.
Per-block goodness $g^+_\text{cur}$ decays from $2.275$ (Block~0) to $0.528$
(Block~3), yielding an L3/L0 ratio of $0.232$---nearly identical to the
CIFAR-10 D128 baseline ($0.22$). Cumulative separation $\text{sep}_\text{nl}$
reaches $0.996$ at Block~3 (vs.\ $0.982$ on CIFAR-10 D128), indicating that
the model achieves reasonable overall separation despite the much harder
100-class problem. Interestingly, $\text{sep}^\text{cur}_\text{nl}$ peaks at
Block~1 ($4.158$) rather than decaying monotonically as on CIFAR-10, suggesting
that the harder classification task sustains useful per-block gradient signal
slightly deeper before free-riding dominates. However, Block~3 still collapses
to $1.284$, confirming the same qualitative pattern.

\paragraph{Multi-seed stability on CIFAR-100.}
To verify that the CIFAR-100 result is not seed-dependent, we train two
additional seeds (123, 456) with identical configuration ($\gamma{=}0.7$,
L4, D256, bs256, 362+20 epochs). Table~\ref{tab:cifar100_multiseed} shows
that accuracy is stable across seeds, with S2 TTA variance of $\pm 0.15$\,pp.

\begin{table}[h]
  \caption{CIFAR-100 multi-seed stability ($\gamma{=}0.7$, L4, D256, bs256,
           362+20 epochs). All runs use identical architecture and
           hyperparameters.}
  \label{tab:cifar100_multiseed}
  \centering
  \small
  \setlength{\tabcolsep}{4pt}
  \resizebox{\textwidth}{!}{%
  \begin{tabular}{lcccc}
    \toprule
    Seed & S1 (no-TTA) & S1 (TTA) & S2 (no-TTA) & S2 (TTA) \\
    \midrule
    42           & 64.62\% & 66.12\% & 65.30\% & 66.52\% \\
    123          & 65.59\% & 66.84\% & 65.65\% & 67.05\% \\
    456          & 64.78\% & 65.71\% & 65.07\% & 66.75\% \\
    \midrule
    Mean $\pm$ Std & $65.00 \pm 0.52$ & $66.22 \pm 0.57$ & $65.34 \pm 0.29$ & $66.77 \pm 0.27$ \\
    \midrule
    \multicolumn{5}{l}{\emph{Prior best supervised FF on CIFAR-100}} \\
    ASGE (VGG8)~\citep{asge2025}                          & --- & --- & --- & 65.42\% \\
    DeeperForward (ResNet-CHx3)~\citep{deeperforward2025} & --- & --- & --- & 60.28\% \\
    \bottomrule
  \end{tabular}%
  }
  \vspace{1ex}
  \begin{flushleft}\footnotesize
  \textbf{Data source.} Seeds 123 and 456 use the run-level test-set
  predictions (\texttt{preds\_stage\{1,2\}\_test\_\{no\_,\}tta.npz}) saved
  under the supplementary CIFAR-100 baseline runs. Seed 42 uses the older
  trainer's run-log \texttt{eval\_top1} (TTA-enabled at eval time); its
  per-stage prediction files were not retained, but the eval-time top-1
  on the held-out test set was logged each epoch and we report the
  last-epoch value. All four columns reproduce from these sources to
  the printed precision.
  \end{flushleft}
\end{table}

\paragraph{CIFAR-100 dissociation contrast.}
We additionally run the full $\gamma{=}0$ vs.\ adaptive-$\kappa{=}0$ vs.\
cumulative ($\gamma{>}0$) contrast on CIFAR-100 to test the dissociation
finding (Section~\ref{sec:adaptive_gamma}) on harder data. All three
variants share the L4, D256, bs256, 362+20-epoch protocol; only the
collaboration rule differs.\footnote{These runs use a refined trainer
that yields slightly higher accuracy than the older $\gamma{=}0.7$
baseline in Table~\ref{tab:cifar100_multiseed}; we therefore report them
as a separate evidence stream rather than replacing the earlier numbers.
Free-riding diagnostics (L3/L0 goodness ratio) are consistent across
both trainers ($0.207$ here vs.\ $0.232$ in Table~\ref{tab:cifar100_metrics}).}
Table~\ref{tab:cifar100_dissociation_multiseed} shows that despite
${\sim}5{\times}$ swings in deepest-layer separation (which itself
varies by only $\pm 0.01$--$0.03$ across seeds), S2~TTA accuracy moves
only $0.27$\,pp---within per-seed standard deviation
($\pm 0.34$--$0.61$\,pp). At $n{=}3$ we cannot formally separate the
three regimes' accuracies; the relevant signal is the
diagnostic-vs-accuracy decoupling, which is many standard deviations
wide and identifies the same dissociation pattern established on
CIFAR-10 (Table~\ref{tab:dissociation}).

\begin{table}[h]
  \caption{CIFAR-100 multi-seed dissociation contrast (L4, D256, bs256,
           362+20 epochs, 3 seeds: 42, 123, 456). Free-riding
           diagnostics are end-of-Stage-1 values at the deepest layer
           (L3); accuracies are top-1 stage-2 with TTA.
           \emph{Trainer protocol.} The three rows use the refined
           dense (no-MoE) L4/D256 trainer recommended by the CIFAR-10
           component ablation (\texttt{FF\_N\_EXPERTS=1}
           \texttt{FF\_MOE\_TOP\_K=1} \texttt{FF\_USE\_SAM=0});
           the older 32-expert $\gamma{=}0.7$ baseline is reported
           separately in
           Table~\ref{tab:cifar100_multiseed}. Reproduction commands
           are in \texttt{supplement\_code/repro\_manifest.csv}.}
  \label{tab:cifar100_dissociation_multiseed}
  \centering
  \small
  \setlength{\tabcolsep}{3pt}
  \resizebox{\textwidth}{!}{%
  \begin{tabular}{lccccc}
    \toprule
    Variant
      & $\text{sep}^\text{cur}_\text{nl}$ (L3)
      & $g^+_\text{cur}$ ratio L3/L0
      & S1 TTA top-1
      & S2 TTA top-1
      & S2 TTA top-5 \\
    \midrule
    $\gamma{=}0$ (purely local)
      & $\mathbf{4.78 \pm 0.03}$ & $0.988$
      & $\mathbf{66.93 \pm 0.14}$ & $\mathbf{69.23 \pm 0.34}$ & $90.78 \pm 0.18$ \\
    Adaptive $\kappa{=}0$
      & $1.71 \pm 0.01$ & $0.463$
      & $67.35 \pm 0.66$ & $69.07 \pm 0.61$ & $90.89 \pm 0.32$ \\
    Cumulative ($\gamma{>}0$)
      & $0.96 \pm 0.01$ & $0.207$
      & $67.41 \pm 0.35$ & $68.96 \pm 0.43$ & $90.68 \pm 0.27$ \\
    \midrule
    Range across regimes
      & $4.97{\times}$ & $4.77{\times}$
      & $0.48$\,pp & $0.27$\,pp & $0.21$\,pp \\
    \bottomrule
  \end{tabular}%
  }
\end{table}

\paragraph{Comparison to prior FF work on CIFAR-100.}
We are aware of three published Forward-Forward methods reporting
CIFAR-100 results: \citet{asge2025} and \citet{deeperforward2025} report
Top-1 accuracy; \citet{cffm2025} reports Top-5 only. The remaining FF
methods cited in Table~\ref{tab:related_work} (\citealt{trifecta2024},
\citealt{distanceforward2024}, \citealt{scff2025}) do not report
CIFAR-100 results.\footnote{\citet{forwardlayer2024} report $72.49\%$
on CIFAR-100 with VGG19 (${\sim}143$\,M parameters), but their method
trains each layer with a local triplet loss using local backprop and
does not use the FF goodness function; we therefore exclude it from
the FF comparison set (see~\S\ref{sec:related} ``Local and greedy
learning'').} Table~\ref{tab:cifar100_comparison} reports the
comparison.

\begin{table}[h]
  \caption{Prior published Forward-Forward results on CIFAR-100,
           alongside our $\gamma{=}0$ numbers from
           Table~\ref{tab:cifar100_dissociation_multiseed} (mean over
           3 seeds). We split our row into Stage-1 (strict-local FF
           backbone) and Stage-2 (frozen-feature attentive readout)
           because Stage-2 trains a small head with backprop and is
           not part of the strict-local FF backbone; the prior FF
           rows are pure FF readouts.}
  \label{tab:cifar100_comparison}
  \centering
  \small
  \setlength{\tabcolsep}{3pt}
  \resizebox{\textwidth}{!}{%
  \begin{tabular}{llcccc}
    \toprule
    Method & Backbone & Readout & Params & Top-1 & Top-5 \\
    \midrule
    \citet{deeperforward2025}     & 17-layer ResNet               & FF                                                   & ${\sim}5$--$10$\,M  & $53.09 \pm 0.79\%$ & --- \\
    \citet{deeperforward2025}     & 17-layer ResNet-CHx3 (3$\times$ ch.) & FF                                            & ${\sim}15$--$45$\,M & $60.28 \pm 1.02\%$ & --- \\
    \citet{asge2025}              & VGG8 CNN                      & FF                                                   & n/r                  & $65.42\%$          & --- \\
    \citet{cffm2025}              & ViT, $D{=}240$, $L{=}7$       & FF                                                   & n/r                  & not reported       & $84.30\%$ \\
    \midrule
    Ours, S1 (strict-local FF backbone) & MoE-Transformer, $L{=}4$, $D{=}256$ & none (FF goodness)                       & ${\sim}6$--$8$\,M & $\mathbf{66.93 \pm 0.14\%}$ & $90.29 \pm 0.24\%$ \\
    Ours, S1 + S2 (frozen-feature attentive head) & same                       & frozen-feature attentive head (BP-trained) & ${+}{\sim}1$\,M    & $\mathbf{69.23 \pm 0.34\%}$ & $\mathbf{90.78 \pm 0.21\%}$ \\
    \bottomrule
  \end{tabular}%
  }

  \vspace{1ex}
  \begin{flushleft}\footnotesize
  Our \emph{strict-local FF backbone} (Stage-1) already exceeds the
  strongest reported FF Top-1 on CIFAR-100 we are aware of by
  $+1.51$\,pp (ASGE, $65.42\%$;~\citealt{asge2025}, VGG8). The
  Stage-2 frozen-feature attentive readout is a separate post-hoc
  head trained with backprop on the frozen FF features---it should
  not be interpreted as the pure strict-local FF backbone number, and
  we list it as a separate row so the comparison to prior FF work
  remains apples-to-apples at the strict-FF level.
  \citet{cffm2025} report Top-5 only on CIFAR-100; the Top-5
  comparison is therefore restricted to that metric.
  \end{flushleft}
\end{table}

\subsection{\texorpdfstring{Tiny ImageNet ($\gamma{=}0$, L4, D256, 362 Epochs)}{Tiny ImageNet (gamma=0, L4, D256, 362 Epochs)}}
\label{sec:tiny_imagenet_metrics}

The Tiny ImageNet split convention: 100k labelled training images, 10k
labelled validation images, 10k unlabelled test images. Following the
established convention (the public test labels are not released), we
report accuracy on the 10k labelled validation split throughout this
appendix and the main paper's Section~\ref{sec:tinyimagenet}.

\begin{table}[h]
  \caption{Multi-seed Tiny ImageNet validation accuracy with $\gamma{=}0$
           (L4, D256). Mean and standard deviation are taken across three
           seeds (42, 123, 456).}
  \label{tab:tiny_imagenet_multiseed}
  \centering
  \small
  \begin{tabular}{lcccc}
    \toprule
    Seed & S1 Top-1 (TTA) & S1 Top-5 (TTA) & S2 Top-1 (TTA) & S2 Top-5 (TTA) \\
    \midrule
    42  & 49.44\% & 76.39\% & 52.60\% & 77.10\% \\
    123 & 49.16\% & 75.72\% & 52.42\% & 76.92\% \\
    456 & 48.67\% & 75.71\% & 51.95\% & 77.15\% \\
    \midrule
    Mean $\pm$ Std & $49.09 \pm 0.39$ & $75.94 \pm 0.39$ & $\mathbf{52.32 \pm 0.34}$ & $77.06 \pm 0.12$ \\
    \midrule
    \multicolumn{5}{l}{\emph{Prior best FF on Tiny ImageNet}} \\
    SCFF~\citep{scff2025} & --- & --- & $35.67 \pm 0.42\%$ & $59.75 \pm 0.18\%$ \\
    \bottomrule
  \end{tabular}
\end{table}

\begin{figure}[h]
  \centering
  \includegraphics[width=\textwidth]{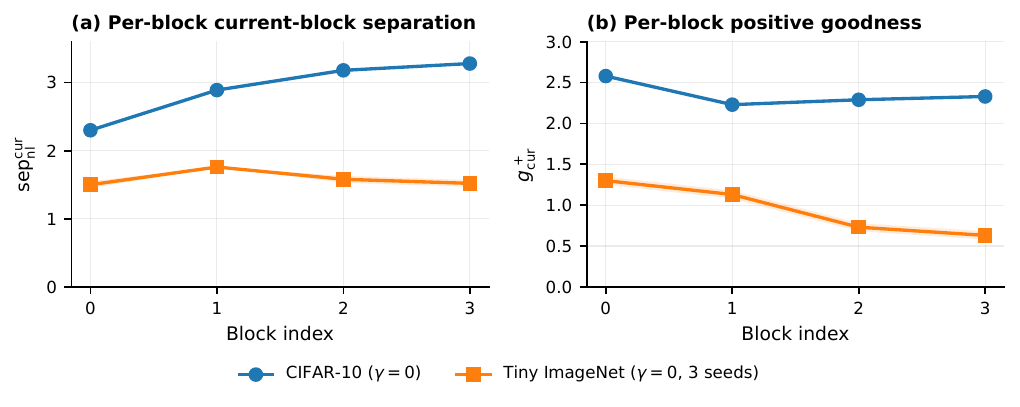}
  \caption{Per-block diagnostics under $\gamma{=}0$: CIFAR-10 (blue) vs.\
           Tiny ImageNet (orange, 3 seeds with std bands).
           \textbf{(a)}~Current-block wrong-label separation:
           CIFAR-10 grows monotonically with depth, whereas Tiny ImageNet
           peaks at Block~1 and declines at deeper blocks.
           \textbf{(b)}~Per-block positive goodness shows an analogous
           peak-then-decline on Tiny ImageNet. The same training rule
           therefore produces different per-layer health on harder data.}
  \label{fig:tinyimagenet_crossdata}
\end{figure}

\begin{table}[h]
  \caption{Tiny ImageNet configuration and seed-42 results
           ($\gamma{=}0$, L4, D256, bs256, 362+20 epochs). The configuration
           is derived from the CIFAR-10 component ablation: components
           identified as redundant (MoE, depth ordering, SAM) are removed;
           load-bearing components (HNM, multi-aspect) are retained.
           HNM $k$ denotes the number of \emph{candidate wrong-label
           draws} (with replacement; see App.~\ref{app:loss_formulas},
           Algorithm~\ref{alg:hnm}); $k$ is scaled from $8{\to}16$ on
           CIFAR-10 (10 classes) to $20{\to}40$ on Tiny ImageNet (200
           classes) so that the expected unique-label coverage
           $1-(1-1/(C-1))^k$ remains comparable across datasets ($k{=}8$:
           $\sim 60\%$ at CIFAR-10's $C{=}10$; $k{=}20$: $\sim 10\%$ at
           Tiny's $C{=}200$). Accuracies are validation top-1 / top-5.}
  \label{tab:tiny_imagenet_config}
  \centering
  \small
  \begin{tabular}{lcc}
    \toprule
    Parameter & CIFAR-10 ($\gamma{=}0$) & Tiny ImageNet \\
    \midrule
    Classes         & 10  & 200 \\
    Image size      & $32{\times}32$ & $64{\times}64$ \\
    d\_model        & 256 & 256 \\
    Blocks          & 4   & 4 \\
    Heads           & 8   & 8 \\
    Patch size      & 2   & 4 \\
    Stem channels   & 384 & 384 \\
    MoE experts     & 32 (top-4) & 1 (dense MLP) \\
    Batch size      & 512 & 256 \\
    HNM $k$         & $8 \to 16$ & $20 \to 40$ \\
    $\gamma$        & 0.0 & 0.0 \\
    Depth order $\lambda$ & 0.0 & 0.0 \\
    SAM             & No  & No \\
    SupCon $\lambda$ & 0.5 & 0.5 \\
    \midrule
    S1 Top-1 (TTA)  & 91.45\% & 49.44\% \\
    S2 Top-1 (TTA)  & 91.49\% & \textbf{52.60\%} \\
    S2 Top-5 (TTA)  & --- & 77.10\% \\
    \bottomrule
  \end{tabular}
\end{table}

The Tiny ImageNet experiment validates that the ablation-derived configuration
transfers to a substantially harder task. Three observations are noteworthy:

\begin{enumerate}[noitemsep, leftmargin=*]
  \item \textbf{Stage-2 head gains increase with task difficulty.} The attentive
        head contributes $+3.16$\,pp on Tiny ImageNet vs.\ $+0.03$\,pp on
        CIFAR-10 and $+0.34$\,pp on CIFAR-100, suggesting that cross-class
        attention becomes more valuable as the number of confusable classes
        grows.
  \item \textbf{Per-depth accuracy is monotonically increasing.} Under
        $\gamma{=}0$, each block contributes useful discrimination:
        $d_1{=}35.71\%$, $d_2{=}46.10\%$, $d_3{=}48.41\%$,
        $d_4{=}48.75\%$ (S1, no-TTA). All four blocks are load-bearing.
  \item \textbf{Removing MoE was essential for scaling.} With 32 experts at
        $d{=}384$, the original configuration OOMed at batch sizes above 64
        on A100-40GB. The ablation-validated dense MLP at $d{=}256$ enabled
        batch size 256, which is critical for contrastive learning quality
        (${\sim}1.28$ same-class samples per batch vs.\ ${\sim}0.32$ at
        batch size 64).
\end{enumerate}

Total training time: ${\sim}8$ hours per seed on A100-SXM4-40GB
(362 Stage-1 epochs + 20 Stage-2 epochs).

\begin{figure}[h]
  \centering
  \includegraphics[width=0.75\textwidth]{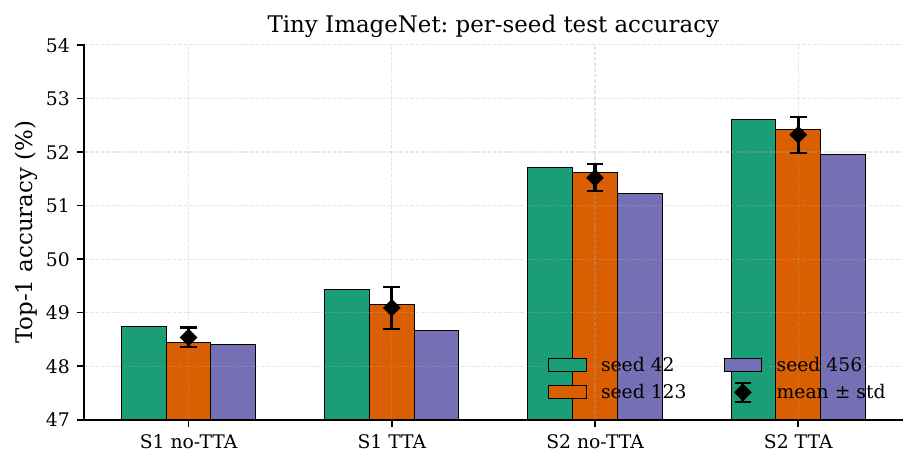}
  \caption{Tiny ImageNet per-seed validation accuracy across three seeds (42, 123, 456)
           for S1/S2 $\times$ no-TTA/TTA. Black diamonds with error bars show the
           mean $\pm$ std across seeds. Variance is tight
           ($<0.4$\,pp for all conditions).}
  \label{fig:tinyimagenet_perseed}
\end{figure}

\begin{figure}[h]
  \centering
  \includegraphics[width=0.75\textwidth]{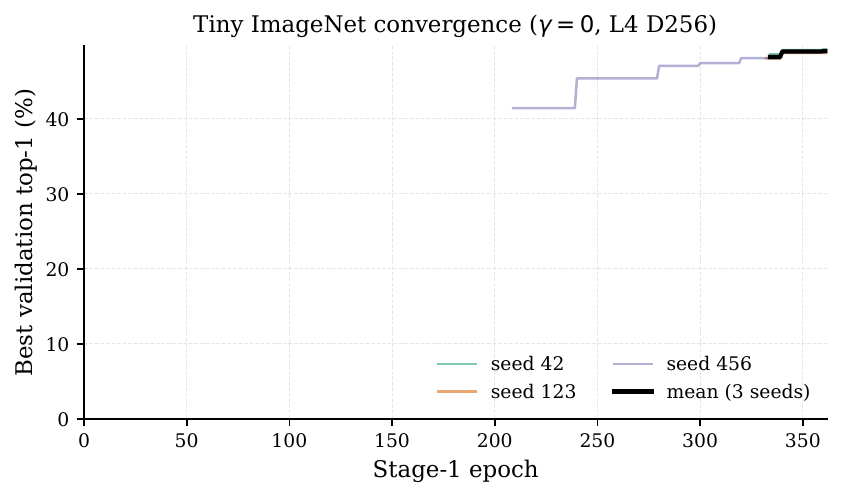}
  \caption{Tiny ImageNet Stage-1 convergence: best-so-far validation top-1
           over training epochs for three seeds (thin coloured lines) and
           their mean $\pm$ std band (black). Convergence trajectories are
           consistent across seeds.}
  \label{fig:tinyimagenet_dynamics}
\end{figure}

\section{Per-Class Accuracy}

\begin{table}[h]
  \caption{Per-class test accuracy (S1, TTA) across seven completed variants.
           \textbf{Cat} is universally the hardest class (74.5--80.2\%), while
           ship, truck, and automobile consistently exceed 94\%.
           Bold = best per class.}
  \label{tab:per_class}
  \centering
  \small
  \begin{tabular}{lccccccc}
    \toprule
    Class & baseline & nomem-l2 & fixed-nl & pfx-frz & cp-fair & cp-lc & dp-fair \\
    \midrule
    airplane    & \textbf{94.1} & 93.1 & 93.8 & 93.0 & 93.4 & 93.4 & 92.7 \\
    automobile  & 95.6 & 95.6 & \textbf{96.5} & 95.3 & 95.7 & 96.0 & 96.0 \\
    bird        & 88.2 & \textbf{89.8} & 89.5 & 88.3 & 89.6 & 88.8 & 89.4 \\
    \textbf{cat}& 78.7 & 77.5 & 74.5 & \textbf{80.2} & 79.9 & 77.9 & 79.6 \\
    deer        & 90.5 & 90.6 & \textbf{92.7} & 91.1 & 90.7 & 91.2 & 91.3 \\
    dog         & 84.3 & 84.9 & 84.4 & 84.0 & 85.3 & \textbf{85.4} & 82.9 \\
    frog        & 95.0 & \textbf{96.0} & \textbf{96.0} & 95.0 & 94.8 & 94.3 & 94.2 \\
    horse       & 94.2 & \textbf{94.6} & 93.8 & 92.9 & 94.5 & 95.1 & 94.4 \\
    ship        & \textbf{96.2} & 96.0 & 95.9 & 94.5 & 94.8 & 96.0 & 95.8 \\
    truck       & \textbf{95.2} & 95.0 & 94.4 & \textbf{95.2} & 94.9 & 95.1 & 94.9 \\
    \midrule
    \textbf{Overall} & 91.20 & 91.31 & 91.15 & 90.95 & 91.36 & 91.32 & 91.12 \\
    \bottomrule
  \end{tabular}
\end{table}

Cat is universally the hardest class at 74.5--80.2\% across all variants,
12--17 percentage points below the overall average. Cat$\leftrightarrow$dog
confusion dominates every variant's error list. Ship, truck, and automobile
are the easiest classes ($>94\%$ in all variants). No single variant dominates
all classes: \textsc{prefix-freeze} wins cat (80.2\%), \textsc{fixed-nl} wins
automobile, deer, and frog, while \textsc{cp-fair-lc} wins dog (85.4\%).
This suggests that architectural and loss differences create complementary
per-class strengths.

\section{Deep-Dive Ablations of Load-Bearing Components}
\label{sec:component_deep_dives}

The component ablation (Appendix~\ref{app:components},
Table~\ref{tab:component_ablation}) identified hard negative mining
($-0.45$\,pp) and multi-aspect goodness ($-0.41$\,pp) as the only two
load-bearing components. The following subsections decompose each in
detail, and additionally isolate the EMA teacher contribution.

\subsection{\texorpdfstring{Hard-Negative Mining: $k$-Sweep and NL/NI Decomposition}{Hard-Negative Mining: k-Sweep and NL/NI Decomposition}}
\label{sec:hnm_deep_dive}

The component ablation (Table~\ref{tab:component_ablation}) replaced the
adaptive $k{=}8{\to}16$ hard negative mining schedule with random negatives
($k{=}1$), yielding a $-0.45$\,pp drop. Here we sweep the hardness
parameter $k$ at four fixed levels and separately decompose the two
negative types: wrong-label (NL) and wrong-image (NI). All runs use the
full CP-FAIR configuration (L4, D256, 362 epochs, seed~42).

\begin{table}[h]
  \caption{Hard-negative mining decomposition. The baseline uses an
           adaptive $k{=}8{\to}16$ schedule with 50/50 NL/NI blend.
           Each row modifies one aspect of negative sampling; $\Delta$ is
           the change in S2 TTA relative to the full model ($91.24\%$).}
  \label{tab:hnm_ksweep}
  \centering
  \small
  \begin{tabular}{lcccc}
    \toprule
    Configuration & S1 TTA & S2 (no-TTA) & S2 (TTA) & $\Delta$ (pp) \\
    \midrule
    Full CP-FAIR baseline ($k{=}8{\to}16$) & 91.31\% & 90.42\% & 91.24\% & --- \\
    \midrule
    $k{=}1$ (random negatives)   & 90.76\% & 89.93\% & 90.79\% & $-0.45$ \\
    $k{=}4$ (mild hardness)      & 91.27\% & 90.32\% & 91.43\% & $+0.19$ \\
    $k{=}8$ (moderate, flat)     & 90.93\% & 90.49\% & 90.99\% & $-0.25$ \\
    $k{=}16$ (hard, flat)        & 91.44\% & 90.61\% & 91.33\% & $+0.09$ \\
    NL only (wrong-label neg.)   & 91.33\% & 90.24\% & 91.24\% & $\phantom{+}0.00$ \\
    NI only (wrong-image neg.)   & 90.31\% & 89.50\% & 90.37\% & $-0.87$ \\
    \bottomrule
  \end{tabular}
\end{table}

\begin{figure}[h]
  \centering
  \includegraphics[width=0.7\textwidth]{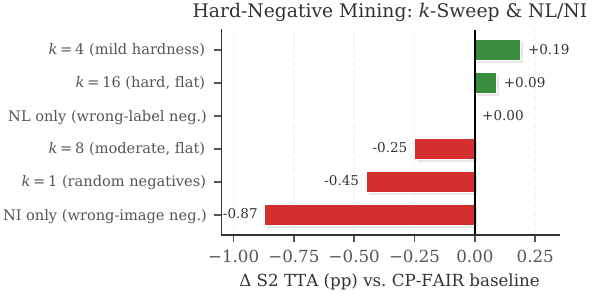}
  \caption{Hard-negative mining decomposition: accuracy change relative to the
           CP-FAIR baseline ($91.24\%$ S2 TTA). Green bars denote conditions
           that match or exceed the baseline; red bars denote conditions that
           hurt performance. Mild hardness ($k{=}4$) is optimal; wrong-image
           negatives alone are catastrophic ($-0.87$\,pp).}
  \label{fig:hnm_ksweep}
\end{figure}

Table~\ref{tab:hnm_ksweep} and Figure~\ref{fig:hnm_ksweep} reveal three
findings. First, the accuracy response to hardness $k$ is
\emph{non-monotonic}: $k{=}4$ (mild hardness) is the best fixed
setting ($+0.19$\,pp), while $k{=}8$ is worse ($-0.25$\,pp) and
$k{=}16$ partially recovers ($+0.09$\,pp). The current adaptive
$8{\to}16$ schedule should therefore be read as a robustness default,
not as evidence that the schedule traverses the empirically optimal
$k$ range; on this sweep, fixed $k{=}4$ outperforms it slightly.

Second, the NL/NI decomposition is striking: \textbf{wrong-label negatives
carry 100\% of the discrimination signal}. NL-only exactly matches the full
baseline ($\Delta{=}0.00$\,pp), while NI-only produces the largest single
drop in either ablation set ($-0.87$\,pp). Wrong-image negatives, which
pair a correct label with a mismatched image, provide no useful separation
signal in isolation.

Third, the baseline's 50/50 NL/NI blend is well-tuned: NL provides all
discrimination, and NI adds diversity without harming performance ($\Delta
{=}0.00$\,pp when NI is removed). This suggests that NI negatives serve as
regularisation rather than a learning signal.

\subsection{Multi-Aspect Goodness: Aspect Isolation and EMA Teacher}
\label{sec:aspect_ema_deep_dive}

The component ablation showed that replacing the 4-aspect goodness head with
a single aspect costs $-0.41$\,pp. Here we isolate each of the four
aspects individually (keeping one at its original $\lambda$ while zeroing
the other three) and separately test the EMA teacher's contribution. All
runs use the full CP-FAIR configuration (L4, D256, 362 epochs, seed~42).

\begin{table}[h]
  \caption{Aspect isolation and EMA ablation. Each ``X-only'' row keeps
           one goodness aspect at its original $\lambda$ and zeros the
           other three. $\Delta$ is relative to the full model ($91.24\%$).
           The energy aspect alone surpasses the full 4-aspect model;
           removing EMA is the most damaging single change ($-0.34$\,pp).}
  \label{tab:aspect_ema_ablation}
  \centering
  \small
  \begin{tabular}{lcccc}
    \toprule
    Configuration & S1 TTA & S2 (no-TTA) & S2 (TTA) & $\Delta$ (pp) \\
    \midrule
    Full CP-FAIR baseline (4 aspects + EMA) & 91.31\% & 90.42\% & 91.24\% & --- \\
    \midrule
    Proto alignment only ($\lambda_0{=}1.5$) & 91.16\% & 90.17\% & 90.97\% & $-0.27$ \\
    Energy (rank) only ($\lambda_1{=}0.2$)   & 91.13\% & 90.14\% & 91.51\% & $+0.27$ \\
    Attn sharpness only ($\lambda_2{=}0.3$)  & 91.09\% & 90.33\% & 91.26\% & $+0.02$ \\
    Learned head only ($\lambda_3{=}0.3$)    & 91.34\% & 90.51\% & 91.33\% & $+0.09$ \\
    \midrule
    No EMA teacher (eval-EMA also disabled)  & 90.91\% & 90.19\% & 90.90\% & $-0.34$ \\
    EMA eval retained; HNM uses online model & 91.54\% & 90.96\% & 91.48\% & $+0.24$ \\
    \bottomrule
  \end{tabular}
\end{table}

\begin{figure}[h]
  \centering
  \includegraphics[width=0.7\textwidth]{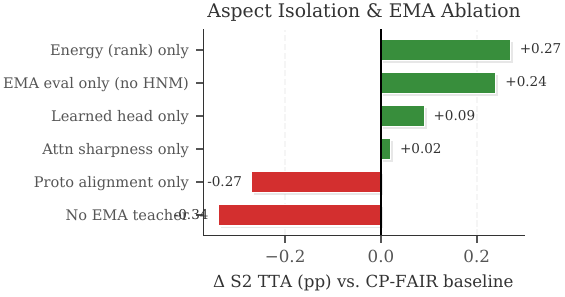}
  \caption{Aspect isolation and EMA ablation: accuracy change relative to the
           CP-FAIR baseline ($91.24\%$ S2 TTA). The energy (rank) aspect
           alone outperforms the full 4-aspect model. Removing EMA is the
           most damaging single change.}
  \label{fig:aspect_ema_ablation}
\end{figure}

Table~\ref{tab:aspect_ema_ablation} and Figure~\ref{fig:aspect_ema_ablation}
reveal several findings about the multi-aspect goodness design.

\paragraph{Aspect redundancy.} No single aspect is catastrophic to lose:
all four aspect-only conditions maintain $>90.9\%$ S2 TTA (within $0.3$\,pp
of the full model). This confirms that the 4-aspect design provides
\emph{redundancy} rather than strict complementarity.

\paragraph{Energy dominates.} The energy (rank) aspect alone achieves
$91.51\%$, \emph{surpassing} the full 4-aspect model by $+0.27$\,pp. This
is the highest single result in either ablation set. In contrast, the proto
alignment aspect---despite carrying the highest $\lambda$ ($1.5$ vs.\ $0.2$
for energy)---is the weakest individual aspect ($-0.27$\,pp). This
$\lambda$--performance inversion suggests that the current weighting
over-emphasises proto alignment at the expense of the more effective energy
signal, and that tuning the aspect ratios could yield further gains.

\paragraph{EMA is load-bearing but not for HNM.} Removing EMA entirely
costs $-0.34$\,pp, confirming it as a genuinely load-bearing component
(comparable to multi-aspect goodness at $-0.41$\,pp). However, removing
the hard-negative mining role from EMA while retaining evaluation smoothing
\emph{improves} accuracy by $+0.24$\,pp. This indicates that EMA's value
lies in providing a stable evaluation target for the goodness heads,
not in its role as a negative-mining teacher.

\section{Adaptive-Collaboration Sweep: Full Results}
\label{app:gated_full}

This appendix expands Section~\ref{sec:adaptive_gamma}: the full L4 /
D128 / 180-epoch sweep, the at-$L{=}8$ cross-depth validation, the
goodness decomposition and $\kappa$-phase-transition figures, the
CIFAR-100 baseline replication of the free-riding signature, and the
depth-wise / epoch-wise dynamics figures referenced from
Section~\ref{sec:ablation}.

\begin{table}[h]
  \caption{Cross-depth dissociation on CIFAR-100 (3~seeds, dense
           L4/D256), reporting both Stage-1 (strict-local FF
           inference) and Stage-2 (frozen-feature BP-trained readout)
           accuracies. Per-block separation moves $4.97{\times}$,
           Stage-1 TTA spans $0.48$\,pp, Stage-2 TTA spans only
           $0.27$\,pp---the dissociation holds at both stages, which
           pre-empts the ``Stage-2 readout artifact'' interpretation.
           Notably, the variant with \emph{healthiest} per-block
           diagnostics ($\gamma{=}0$) is not the variant with
           \emph{highest} S1 accuracy (cumulative). Companion to
           Section~\ref{sec:adaptive_gamma}; L4/D128, L8/D128,
           32-expert, and Tiny ImageNet rows are in
           Table~\ref{tab:dissociation_full}.}
  \label{tab:dissociation}
  \centering
  \small
  \setlength{\tabcolsep}{4pt}
  \begin{tabular}{lccccc}
    \toprule
    Variant & $\text{sep}^\text{cur}_\text{nl}$ (L3) & S1 no-TTA & S1 TTA & S2 no-TTA & S2 TTA \\
    \midrule
    $\gamma{=}0$ (purely local) & $\mathbf{4.78 \pm 0.03}$ & $66.37 \pm 0.20$ & $66.93 \pm 0.14$ & $68.54 \pm 0.25$ & $\mathbf{69.23 \pm 0.34}$ \\
    Adaptive $\kappa{=}0$       & $1.71 \pm 0.01$           & $66.35 \pm 0.41$ & $67.35 \pm 0.66$ & $68.44 \pm 0.51$ & $69.07 \pm 0.61$ \\
    Cumulative ($\gamma{>}0$)   & $0.96 \pm 0.01$           & $\mathbf{66.78 \pm 0.28}$ & $\mathbf{67.41 \pm 0.35}$ & $68.51 \pm 0.51$ & $68.96 \pm 0.43$ \\
    \midrule
    Range across variants       & $4.97{\times}$            & $0.43$\,pp        & $0.48$\,pp        & $0.10$\,pp        & $0.27$\,pp \\
    \bottomrule
  \end{tabular}
\end{table}

\begin{table}[h]
  \caption{L4/D256 $\gamma$ sweep on CIFAR-10. Free-riding signatures
           across the three regimes: $\text{sep}_\text{nl}$ is
           cumulative wrong-label separation (primary accuracy
           predictor); $\text{sep}^\text{cur}_\text{nl}$ is current-block
           separation (per-block independence). Validation accuracy
           varies by $<0.7$\,pp despite the $5{\times}$ deepest-block
           collapse, anticipating the dissociation finding.}
  \label{tab:freeriding}
  \centering
  \small
  \begin{tabular}{lcccccc}
    \toprule
    Variant & $\gamma$ & $\text{sep}_\text{nl}$ (L3) & $\text{sep}^\text{cur}_\text{nl}$ (L0) & $\text{sep}^\text{cur}_\text{nl}$ (L3) & Trend & Val \\
    \midrule
    $\gamma{=}0$ (purely local, \textbf{validation-selected}) & 0.0 & \textbf{0.995} & 2.30 & \textbf{3.28} & $\uparrow$ \textbf{healthy} & \textbf{91.98\%} \\
    \midrule
    \textsc{cp-fair} ($\gamma{=}0.7$ + fix) & 0.7 & \textbf{0.995} & 2.30 & 0.61 & $\downarrow$ partial & 91.90\% \\
    LCFF-prefix ($\gamma{=}1.0$, no fix) & 1.0 & 0.992 & 2.10 & 0.64 & $\downarrow$ \textbf{severe} & 91.36\% \\
    \bottomrule
  \end{tabular}
\end{table}

\begin{table}[h]
  \caption{Full cross-depth dissociation (companion of
           Table~\ref{tab:main_dissociation} in the main paper).
           $\text{sep}^\text{cur}_\text{nl}$ at the deepest layer measures
           per-block discrimination; higher values indicate less
           free-riding. CIFAR-100 rows report mean over 3 seeds (42,
           123, 456) using S2~TTA top-1.
           $^\sharp$With multi-seed evidence (3 seeds, see
           Table~\ref{tab:gated_k0_multiseed}), Adaptive $\kappa{=}0$ at
           L4/D128 has both the healthiest blocks and the highest
           accuracy mean; the apparent dissociation visible at this
           scale in single-seed evidence is within seed standard
           deviation. Dissociation persists clearly at L8/D128, L8/D128
           with 32 experts, and on CIFAR-100.}
  \label{tab:dissociation_full}
  \centering
  \small
  \setlength{\tabcolsep}{3pt}
  \resizebox{\textwidth}{!}{%
  \begin{tabular}{llccl}
    \toprule
    Scale & Variant & $\text{sep}^\text{cur}_\text{nl}$ (deep) & Test (S2 TTA) & Free-riding \\
    \midrule
    \multirow{3}{*}{L4, D128}
    & Adaptive $\kappa{=}0$ (3 seeds) & $\mathbf{2.29 \pm 0.71}$ (L3) & $\mathbf{86.69 \pm 0.72\%}$\,$^\sharp$ & None \\
    & Constant $\gamma{=}0.7$ & 0.74 (L3) & $86.17\%$ & Moderate \\
    & LCFF $\gamma{=}1.0$ & 0.94 (L3) & $85.79\%$ & Severe \\
    \midrule
    \multirow{4}{*}{L8, D128}
    & Adaptive $\kappa{=}0$ (3 seeds) & $\mathbf{6.65 \pm 0.33}$ (L7) & $\mathbf{87.13 \pm 0.33\%}$\,$^\sharp$ & None \\
    & $\gamma{=}0$ (purely local) & 3.02 (L7) & $87.21\%$ & None \\
    & Constant $\gamma{=}0.7$ & 0.82 (L7) & 86.97\% & Moderate \\
    & LCFF $\gamma{=}1.0$ & 0.15 (L7) & 86.79\% & Severe \\
    \midrule
    \multirow{3}{*}{L8, D128, 32e}
    & $\gamma{=}0$ (purely local) & --- & 89.22\% & None \\
    & CP-FAIR ($\gamma{=}0.7$) & --- & \textbf{89.61\%} & Partial \\
    & LCFF $\gamma{=}1.0$ & --- & 89.20\% & Severe \\
    \midrule
    \multirow{3}{*}{CIFAR-100, L4, D256 (3 seeds)}
    & $\gamma{=}0$ (purely local) & $\mathbf{4.78 \pm 0.03}$ (L3) & $\mathbf{69.23 \pm 0.34\%}$ & None \\
    & Adaptive $\kappa{=}0$ & $1.71 \pm 0.01$ (L3) & $69.07 \pm 0.61\%$ & Mild \\
    & Cumulative ($\gamma{>}0$) & $0.96 \pm 0.01$ (L3) & $68.96 \pm 0.43\%$ & Severe \\
    \midrule
    Tiny ImageNet, L4, D256 (3 seeds)
    & $\gamma{=}0$ (purely local) & 1.52 (L3) & $52.32 \pm 0.34\%$ & Mild \\
    \bottomrule
  \end{tabular}%
  }
\end{table}

\begin{figure}[h]
  \centering
  \includegraphics[width=0.7\textwidth]{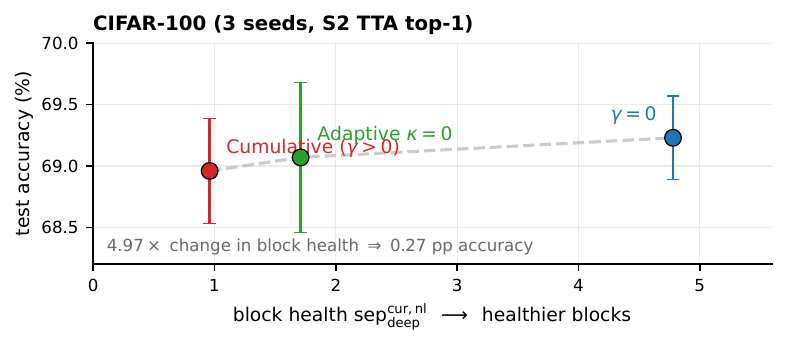}
  \caption{One-figure summary of the dissociation on CIFAR-100 (3 seeds,
  S2~TTA top-1): block health $\text{sep}^\text{cur}_\text{nl}\text{(deep)}$
  varies $4.97{\times}$ across regimes while accuracy moves only
  $0.27$\,pp --- repairing free-riding does not close the BP gap.}
  \label{fig:dissociation_summary}
\end{figure}

\begin{figure}[h]
  \centering
  \includegraphics[width=\textwidth]{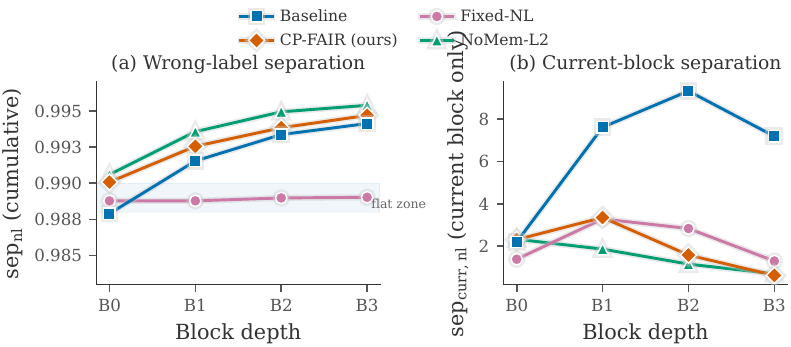}
  \caption{Free-riding diagnosis across depth. \textbf{(a)}~Cumulative
           wrong-label separation ($\text{sep}_\text{nl}$): Fixed-NL stays
           flat in the ``flat zone'' while CP-FAIR increases monotonically.
           \textbf{(b)}~Current-block separation: the baseline has high
           values at middle depths that collapse at Block~3, indicating
           deep blocks contribute little independently.}
  \label{fig:freeriding}
\end{figure}

\begin{figure}[h]
  \centering
  \includegraphics[width=\textwidth]{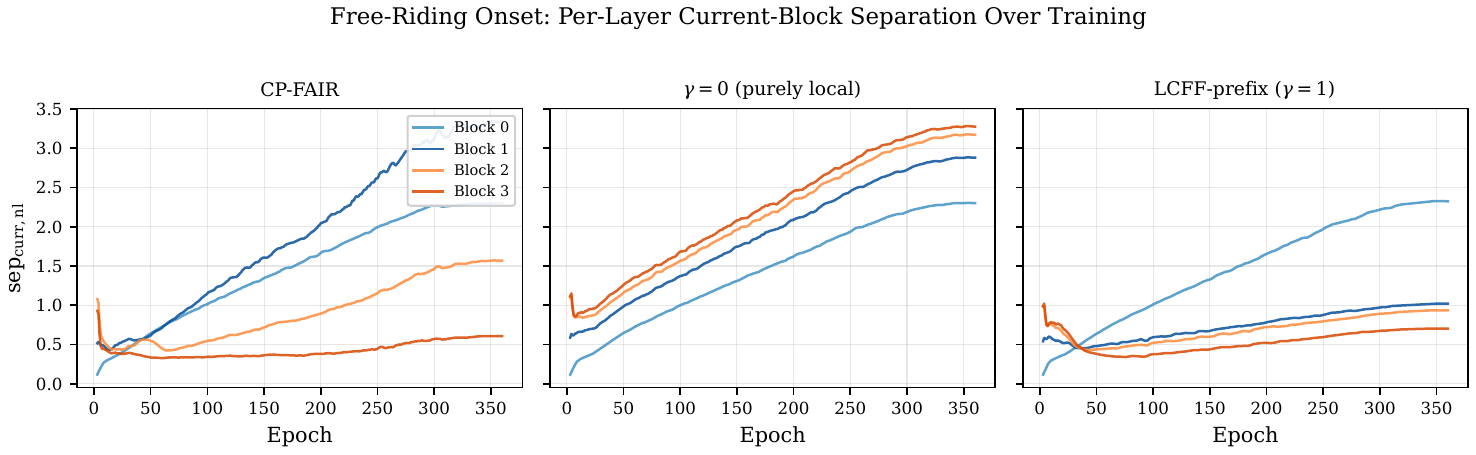}
  \caption{Free-riding onset dynamics: per-layer $\text{sep}^\text{cur}_\text{nl}$
           over training epochs.
           \textbf{Left}: CP-FAIR ($\gamma{=}0.7$ + depth-scaled loss).
           \textbf{Centre}: $\gamma{=}0$ (purely local, best)---all layers
           grow monotonically throughout training.
           \textbf{Right}: LCFF-prefix ($\gamma{=}1.0$, no fix)---Block~2--3
           stagnate from early epochs, showing free-riding emerges early
           and persists.}
  \label{fig:freeriding_onset}
\end{figure}

\begin{table}[h]
  \caption{Hardness-gated $\gamma$: training-time block health vs.\ test
           accuracy. $\text{sep}^\text{cur}_\text{nl}$ measures per-block
           discrimination (higher $=$ less free-riding);
           $g^+_\text{cur}$ is per-block positive goodness.
           Reduced-scale experiments (L4, D128, 180 epochs).}
  \label{tab:gated}
  \centering
  \small
  \begin{tabular}{lccccccccc}
    \toprule
    & \multicolumn{4}{c}{$\text{sep}^\text{cur}_\text{nl}$} & \multicolumn{4}{c}{$g^+_\text{cur}$} & \\
    \cmidrule(lr){2-5} \cmidrule(lr){6-9}
    Variant & L0 & L1 & L2 & L3 & L0 & L1 & L2 & L3 & Acc (S1) \\
    \midrule
    $\gamma{=}0$ (no collab)        & 1.16 & 2.06 & 2.38 & 2.61 & 1.70 & 1.62 & 1.74 & 1.82 & 85.76\% \\
    Constant $\gamma{=}0.7$         & 1.16 & 2.16 & 1.32 & 0.74 & 1.71 & 1.10 & 0.55 & 0.38 & \textbf{86.17\%} \\
    \midrule
    Adaptive $\kappa{=}0$           & 1.14 & \textbf{2.96} & \textbf{3.39} & \textbf{3.11} & 1.71 & 1.76 & \textbf{2.10} & \textbf{2.36} & 85.87\% \\
    Adaptive $\kappa{=}4$           & 1.16 & 2.23 & 1.52 & 0.96 & 1.72 & 1.16 & 0.65 & 0.42 & 85.65\% \\
    \bottomrule
  \end{tabular}
\end{table}

\paragraph{Own vs.\ inherited goodness.}
Under constant $\gamma{=}0.7$, Block~3 produces only \textbf{14\%} of
its total goodness from its own computation
($g^+_\text{cur}{=}0.38$ of $g^+_\text{total}{=}2.73$); the remaining
86\% is inherited from earlier blocks. Under adaptive $\kappa{=}0$,
Block~3 produces \textbf{98\%} of its total goodness independently
($g^+_\text{cur}{=}2.36$ of $2.42$).
Figure~\ref{fig:goodness_decomposition} visualises this per layer; yet
this improved block health does not translate to higher test accuracy
($85.87\%$ vs.\ $86.17\%$ for the constant baseline). High-threshold
gating ($\kappa{=}4$) recreates the constant-$\gamma$ collapse,
confirming that $\kappa$ controls a sharp phase transition
(Figure~\ref{fig:kappa_spectrum}).

\begin{figure}[h]
  \centering
  \includegraphics[width=\textwidth]{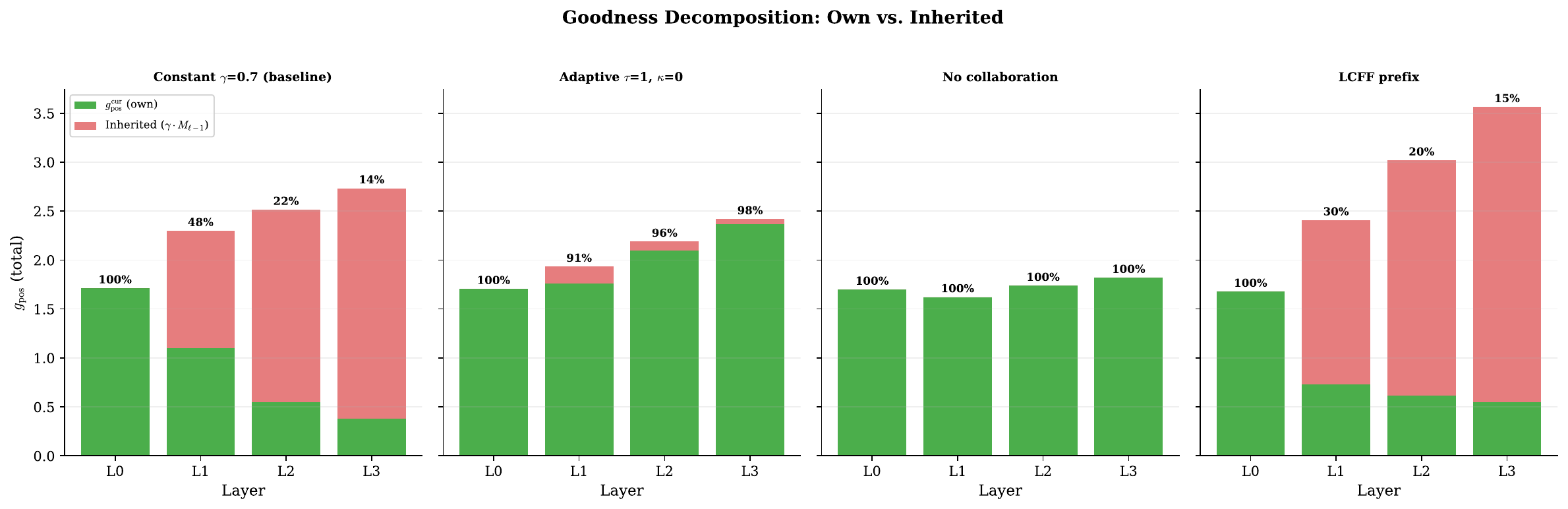}
  \caption{Goodness decomposition: own (current-block) vs.\ inherited
           (cumulative) contribution per layer.}
  \label{fig:goodness_decomposition}
\end{figure}

\begin{figure}[h]
  \centering
  \includegraphics[width=0.8\textwidth]{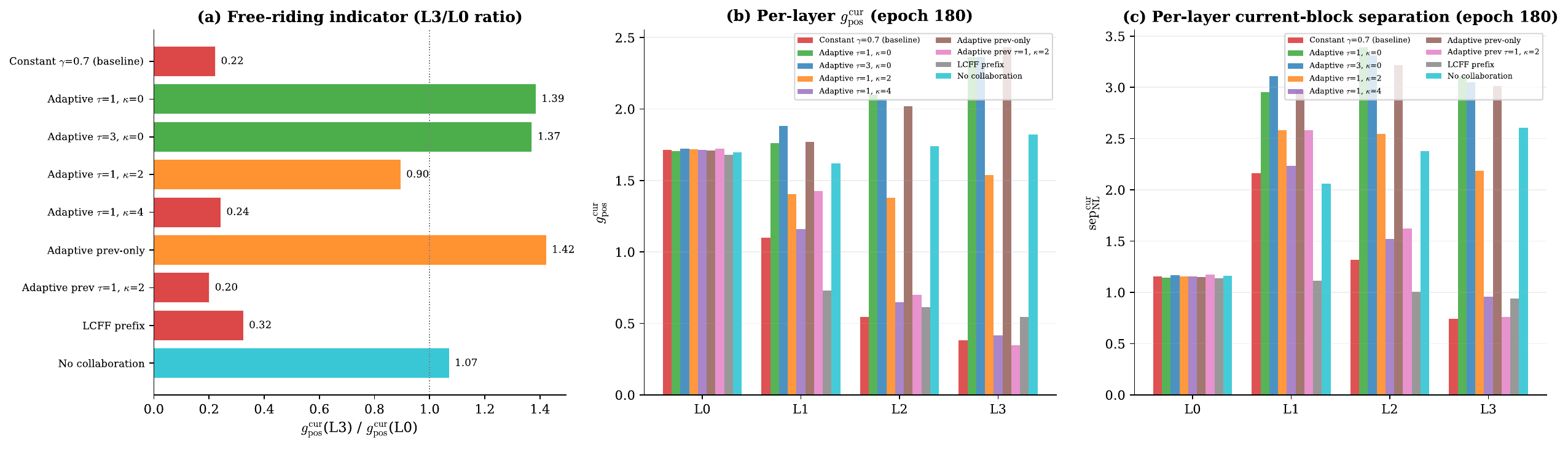}
  \caption{$\kappa$ as a control dial: the L3/L0 goodness ratio and
           per-layer $\text{sep}^\text{cur}_\text{nl}$ across $\kappa$.
           A sharp phase transition occurs between $\kappa{=}2$
           (roughly balanced) and $\kappa{=}4$ (severe free-riding).}
  \label{fig:kappa_spectrum}
\end{figure}

\paragraph{Cross-depth at $L{=}8$.} At $L{=}8$, constant $\gamma{=}0.7$
produces severe free-riding: per-block separation collapses from $1.12$
at Block~0 to $0.82$ at Block~7. $\gamma{=}0$ maintains a monotonically
increasing separation ($1.13\to3.02$); adaptive $\kappa{=}0$ achieves
$\text{sep}^\text{cur}_\text{nl}{=}6.94$ at Block~7. Yet $\gamma{=}0$
achieves the highest accuracy ($87.21\%$ TTA), while adaptive
$\kappa{=}0$---with $8{\times}$ healthier per-block diagnostics
---achieves the \emph{lowest} ($86.88\%$). LCFF ($\gamma{=}1.0$)
exhibits the most extreme free-riding at $L{=}8$
($\text{sep}^\text{cur}_\text{nl}{=}0.15$ at Block~7), yet still achieves
$86.79\%$ TTA---within $0.42$\,pp of the best variant. Per-block
$g^+_\text{cur}$ increases monotonically with depth at $L{=}8$ under all
healthy configurations, reaching $5.38$--$6.34$ at Block~7
(Table~\ref{tab:L8_gpos}). Depth-truncation analysis confirms each
additional block at $L{=}8$ contributes 1--2\,pp accuracy incrementally
(Table~\ref{tab:depth_truncation_L8}).  Across all 15 L4+L8 conditions
the Pearson correlation between free-riding ratio and test accuracy
is $r = -0.49$ ($n{=}15$; Figure~\ref{fig:accuracy_vs_ratio})---a
moderate negative descriptive association, not by itself evidence
that free-riding repair is accuracy-dominant. The dissociation claim
rests on the controlled paired comparisons in
Table~\ref{tab:main_dissociation} (paired-bootstrap CIs in
App.~Tab.~\ref{tab:dissociation_ci}), where the deepest-block
separation varies $4.97{\times}$ but accuracy moves $<\!1$\,pp.

\paragraph{Are individual blocks already strong enough?}
Three observations rule out the ``blocks are already saturated'' reading
of Table~\ref{tab:main_dissociation}. First, Block~0 achieves only
$\text{sep}_\text{nl}{=}0.918$, missing ${\sim}8\%$ of examples; final
FF accuracy ($87\%$) is well below BP's $93.9\%$ on the same backbone,
so substantial room exists. Second, under adaptive $\kappa{=}0$, deeper
blocks produce $45{\times}$ more per-block discrimination than under
LCFF and carry $4{\times}$ the discrimination loss, yet this extra
discrimination yields only $+0.09$\,pp over LCFF. The bottleneck is
therefore not per-block quality but the \emph{aggregation mechanism}:
scalar goodness summation cannot effectively leverage complementary
block-level features. Third, MoE expert utilisation is uniform across
free-riding severity (effective experts ${\sim}8.7$--$10.0$ per layer),
so ``dead experts'' cannot explain the dissociation.

\paragraph{Does free-riding generalise to CIFAR-100?}
We train the baseline ($\gamma{=}0.7$, L4, D256, 362+20 epochs) on
CIFAR-100 across three seeds (42, 123, 456): the model achieves
$66.77 \pm 0.27\%$ S2 TTA (Table~\ref{tab:cifar100_multiseed}).
Free-riding is clearly present: per-block positive goodness decays
from $g^+_\text{cur}{=}2.275$ at Block~0 to $0.528$ at Block~3
(L3/L0 ratio $0.232$), closely matching the CIFAR-10 D128 baseline
($0.22$). We additionally tested whether the \emph{dissociation} also
holds on CIFAR-100 by running the full $\gamma{=}0$, adaptive
$\kappa{=}0$, and cumulative ($\gamma{>}0$) contrast across three seeds
(42, 123, 456): the same qualitative dissociation appears---local and
gated objectives improve per-block diagnostics
($\text{sep}^\text{cur}_\text{nl}\text{(deep)}$ varies $4.97{\times}$
across regimes), yet the global S2 TTA accuracy moves only $0.27$\,pp
($69.23 \pm 0.34\%$ vs.\ $69.07 \pm 0.61\%$ vs.\ $68.96 \pm 0.43\%$;
Tables~\ref{tab:cifar100_dissociation_multiseed},
\ref{tab:dissociation_full}). Free-riding is therefore a real
optimisation pathology on CIFAR-100 as well, but---as on CIFAR-10---
not by itself the dominant accuracy bottleneck.

\paragraph{Paired-bootstrap evidence for the CIFAR-100 dissociation.}
Because the three variants share the CIFAR-100 test order (matched
$y_\text{true}$), per-example accuracy deltas are well-defined and we
can give a paired (rather than independent-mean) statistical
treatment. Pooling per-example correctness deltas across the three
seeds (matched seed pairs only, $n{=}30{,}000$ test examples per pair)
and resampling with $5000$-fold paired bootstrap yields
Table~\ref{tab:paired_dissociation}.

\begin{table}[h]
  \caption{Paired bootstrap CIs and prediction-disagreement
           rates for the CIFAR-100 dissociation contrast (matched
           seeds 42/123/456; $n{=}30{,}000$ per pair; 5000-fold
           paired bootstrap on per-example correctness deltas;
           ``flips'' counts a-correct/b-wrong + a-wrong/b-correct
           per seed). Every 95\% CI on $\Delta\mathrm{acc}$ straddles
           or barely escapes 0, even though deepest-layer
           $\text{sep}^\text{cur}_\text{nl}$ differs by 4--5$\times$ across
           the same pairs. The disagreement \emph{rate} is non-trivial
           ($\sim$13\%) but the flips are nearly balanced, so the net
           accuracy effect is sub-1\,pp.}
  \label{tab:paired_dissociation}
  \centering
  \small
  \setlength{\tabcolsep}{4pt}
  \begin{tabular}{lcccc}
    \toprule
    Pair & Mean $\Delta\mathrm{acc}$ & 95\% paired-bootstrap CI & Disagreement (mean) & Flips per seed (a-c/b-w | a-w/b-c) \\
    \midrule
    $\gamma{=}0$ vs. $\kappa{=}0$         & $+0.167\%$ & $[-0.133\%,\,+0.463\%]$ & $13.14\%$ & 256/485/323 | 226/448/340 \\
    $\gamma{=}0$ vs. cumulative           & $+0.277\%$ & $[-0.020\%,\,+0.563\%]$ & $13.02\%$ & 294/468/299 | 257/440/281 \\
    $\kappa{=}0$ vs. cumulative           & $+0.110\%$ & $[-0.197\%,\,+0.417\%]$ & $13.71\%$ & 257/485/365 | 250/494/330 \\
    \bottomrule
  \end{tabular}
\end{table}

This is the test-example-level form of the dissociation: every pair
of variants disagrees on $\sim$13\% of CIFAR-100 test images, but the
disagreements are nearly net-zero in label correctness---the
correct$\to$wrong and wrong$\to$correct flip counts are within
$\sim$10\% of each other on every seed. A large per-block-health
intervention thus moves a non-trivial slice of test predictions, but
the slice sits near the cumulative decision boundary and contributes
no statistically detectable net accuracy change at three seeds. The
underlying paired statistics are reproducible from
\texttt{supplement\_code/analysis/paired\_dissociation\_stats.py}.

\begin{table}[h]
  \caption{CIFAR-100 paired-bootstrap 95\% CIs against the
           stated $\pm 1$\,pp practical-equivalence threshold
           (matched seeds 42/123/456; $n{=}30{,}000$; 5000-fold paired
           bootstrap on Stage-2 TTA correctness deltas). Diagnostic
           ratios are deepest-block $\text{sep}^\text{cur}_\text{nl}$ (L3)
           ratios from Table~\ref{tab:main_dissociation}. ``Within
           $\pm 1$\,pp'' tests a stated practical-equivalence
           threshold; when a CI is not fully inside, we do not claim
           formal equivalence and report only that the data rule out
           large accuracy gains. The CIs quantify paired test-set
           uncertainty conditional on the trained checkpoints; seed-
           to-seed variation is reported separately as mean $\pm$
           sample std (Table~\ref{tab:cifar100_dissociation_multiseed}).}
  \label{tab:dissociation_ci}
  \centering
  \small
  \setlength{\tabcolsep}{4pt}
  \begin{tabular}{lcccc}
    \toprule
    Pair & Diag.\ ratio & Mean $\Delta$acc.\ & 95\% paired CI & $\pm 1$\,pp? \\
    \midrule
    $\gamma{=}0$ vs.\ $\kappa{=}0$      & $2.79\times$ & $+0.17\%$ & $[-0.13,\,+0.46]$\,pp & Yes \\
    $\gamma{=}0$ vs.\ cumulative        & $4.97\times$ & $+0.28\%$ & $[-0.02,\,+0.56]$\,pp & Yes \\
    $\kappa{=}0$ vs.\ cumulative        & $1.78\times$ & $+0.11\%$ & $[-0.20,\,+0.42]$\,pp & Yes \\
    \bottomrule
  \end{tabular}
\end{table}

\paragraph{Tiny ImageNet configuration.}
The Tiny ImageNet multi-seed results (Section~\ref{sec:tinyimagenet})
use the $\gamma{=}0$ configuration informed by the component ablation
(Appendix~\ref{app:components}): we drop MoE, depth-order, and SAM
(identified as redundant or harmful on CIFAR-10), and retain HNM and
multi-aspect goodness (load-bearing). Architecture is otherwise
identical to the CIFAR-10 configuration (D256, 8 heads, 4 blocks,
patch size~$4$, stem channels 384). Notably, the Stage-2 attentive
head contributes $+3.22 \pm 0.17$\,pp on Tiny ImageNet vs.\
$+0.03$\,pp on CIFAR-10, suggesting cross-class attention becomes
increasingly valuable as the class count grows. Per-depth accuracy
under $\gamma{=}0$ is monotonically increasing (seed 42:
$d_1{=}35.71\%$, $d_2{=}46.10\%$, $d_3{=}48.41\%$, $d_4{=}48.75\%$);
full per-layer metrics are in
Appendix~\ref{sec:tiny_imagenet_metrics}.

\section{Architectural Components: Full Leave-One-Out Table}
\label{app:components}

\begin{table}[h]
  \caption{Component ablation on the full CP-FAIR baseline
           ($\gamma{=}0.7$, D256, 362 epochs). Each row removes one
           component; $\Delta$ is the change in Stage-2 TTA accuracy
           relative to the full model ($91.24\%$). Positive $\Delta$
           means the component \emph{hurts} performance. The
           ``Multi-aspect $\to$ single aspect'' row is a
           leave-one-out (LOO) intervention that drops the head from
           four slots to one (energy-only); this is \emph{not}
           comparable to the per-aspect isolation rows in
           Tab.~\ref{tab:aspect_ema_ablation}, which retain each
           aspect at its original $\lambda_a$ while zeroing the
           other $\lambda$'s. The two ablation families therefore
           probe different things and their numbers cannot be summed.}
  \label{tab:component_ablation}
  \centering
  \small
  \begin{tabular}{lcccc}
    \toprule
    Ablated Component & S1 TTA & S2 (no-TTA) & S2 (TTA) & $\Delta$ (pp) \\
    \midrule
    None (full CP-FAIR baseline) & 91.31\% & 90.42\% & 91.24\% & --- \\
    \midrule
    SAM optimiser $\to$ plain AdamW         & 91.29\% & 90.77\% & 91.49\% & $+0.25$ \\
    SupCon loss ($\lambda_\text{con}{=}0$)  & 91.43\% & 90.78\% & 91.49\% & $+0.25$ \\
    $\gamma$ accumulation ($\gamma{=}0$)    & 91.39\% & 90.71\% & 91.49\% & $+0.25$ \\
    MoE $\to$ dense MLP                     & 91.58\% & 90.74\% & 91.41\% & $+0.17$ \\
    Block-curr loss                         & 91.35\% & 90.63\% & 91.40\% & $+0.16$ \\
    Depth-order loss                        & 91.43\% & 90.58\% & 91.24\% & $\phantom{+}0.00$ \\
    Multi-aspect $\to$ single aspect        & 90.87\% & 90.06\% & 90.83\% & $-0.41$ \\
    Hard negative mining $\to$ random neg.  & 90.76\% & 89.93\% & 90.79\% & $-0.45$ \\
    \bottomrule
  \end{tabular}
\end{table}

\begin{figure}[h]
  \centering
  \includegraphics[width=0.7\textwidth]{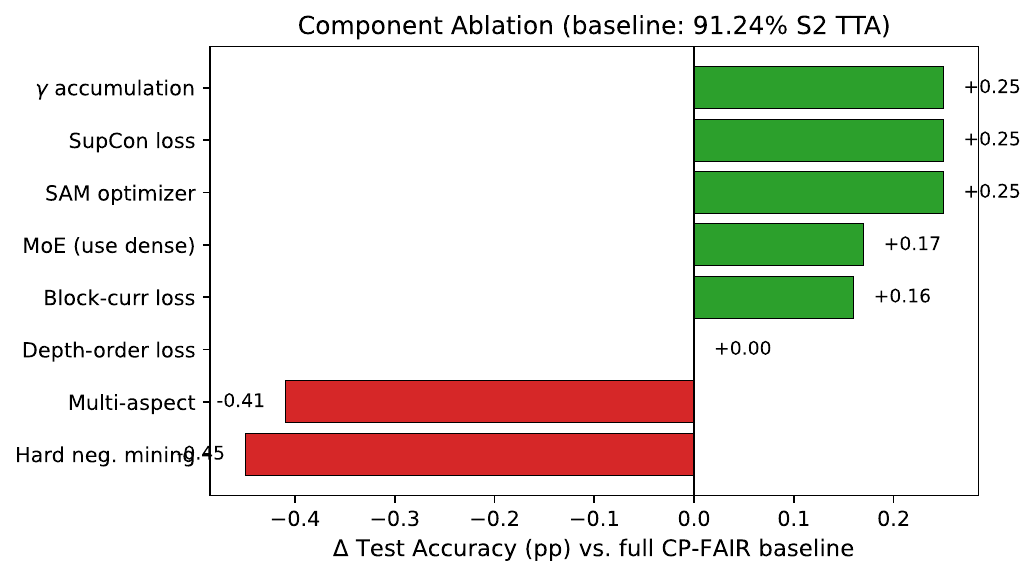}
  \caption{Component ablation: accuracy change when each component is
           removed from the CP-FAIR baseline ($91.24\%$ S2 TTA). Green
           bars denote components whose removal \emph{improves}
           accuracy; red bars denote components that are beneficial.}
  \label{fig:component_ablation}
\end{figure}

Appendix~\ref{sec:hnm_deep_dive} decomposes HNM further (mild hardness
$k{=}4$ outperforms the adaptive schedule by $+0.19$\,pp; wrong-label
negatives carry 100\% of the signal). Appendix~\ref{sec:aspect_ema_deep_dive}
decomposes multi-aspect goodness (the energy aspect alone surpasses the
4-aspect model by $+0.27$\,pp; EMA smoothing contributes a genuine
$-0.34$\,pp).

\section{Text-Domain Experiments: Full Protocol}
\label{app:textdomain}

This appendix expands Section~\ref{sec:textdomain}. The text-domain
experiments serve as a modality-transfer sanity check only; we rely on
the CIFAR-10 / CIFAR-100 / Tiny~ImageNet experiments for the main
mechanistic claims (see the caveat at the end of this section).

\paragraph{Architecture.} We reuse the FF hybrid block (L4, D256, 8
heads, 16 experts, $\gamma{=}0$) without modification. Frozen
\texttt{bert-base-uncased}~\citep{devlin2019bert} sentence embeddings
(768-d pooled) are linearly projected and reshaped into a sequence
of 8 virtual tokens, which the FF blocks consume exactly as image
patches in the vision experiments. Datasets:
IMDb~\citep{maas2011imdb}, 20~Newsgroups~\citep{lang1995newsweeder},
and AG~News~\citep{zhang2015charcnn}.

\paragraph{Training.} Stage-1 trains the backbone with local FF + SAM +
cosine warmup for 25--40 epochs; Stage-2 trains an attentive head on
frozen features for 8--10 epochs. Remaining hyperparameters (learning
rate $10^{-3}$, weight decay $0.05$, gradient clip $1.0$) match the
CIFAR-10 recipe. Three seeds (42, 43, 44) per dataset.

\paragraph{Per-seed results.} All 9 runs (3 datasets $\times$ 3 seeds)
exceed their published FF targets on Stage-2 mean accuracy with
across-seed std ${\leq}0.3$\,pp. The Stage-1 (backbone-only) numbers
already clear every published target on IMDb and AG~News before the
attentive head is fit; Stage-2 contributes only $0.2$--$1.0$\,pp on
those datasets.

\begin{table}[h]
  \caption{Text-domain, full per-seed means. Stage-1 / Stage-2 are mean
           $\pm$ std across three seeds (42, 43, 44). \emph{Best} is the
           single strongest Stage-2 run.}
  \label{tab:text_domain_full}
  \centering
  \small
  \begin{tabular}{lcccccc}
    \toprule
    Dataset & $K$ & Stage-1 & Stage-2 & Best & Target & $\Delta$ \\
    \midrule
    IMDb          & 2  & $86.97 \pm 0.12$ & $\mathbf{87.25 \pm 0.15}$ & $87.41$ & $84.86$~\citep{extendff2023}     & $+2.39$ \\
    20 Newsgroups & 20 & $61.84 \pm 0.28$ & $\mathbf{62.66 \pm 0.24}$ & $62.92$ & $61.64$~\citep{ffcomplete2024}   & $+1.02$ \\
    AG News       & 4  & $93.38 \pm 0.13$ & $\mathbf{93.36 \pm 0.07}$ & $93.45$ & $91.29$~\citep{forwardlayer2024} & $+2.07$ \\
    \bottomrule
  \end{tabular}
\end{table}

\paragraph{Caveat.} The text setup relies on a frozen BERT encoder
(trained with BP), so the FF blocks operate on top of pre-learned
representations. It therefore tests whether the same hybrid block
architecture generalises to another modality, not whether FF alone can
learn text classifiers from scratch. We make no claim stronger than
that in the main text. The AG~News target is taken from
\citet{forwardlayer2024} (layer-wise local with triplet loss; not
strict FF in the sense of Hinton's goodness function); IMDb and
20~Newsgroups targets are strict-FF baselines.

\section{Locality Audit of the Stage-1 Training Loop}
\label{app:locality}

\paragraph{Canonical block-local implementation.} The submitted code
uses the graph-severed implementation as the canonical FF training
loop. Concretely, the function that aggregates previous-block
goodness (\texttt{\_sum\_prev\_goodness}) is wrapped in
\texttt{torch.no\_grad()} with a final \texttt{.detach()} on its
output before it enters any block-$d$ loss. Therefore, the
computational graph for the block-$d$ update contains parameters of
block $d$ only. A small verification harness
(\texttt{verify\_locality.py}, included in the supplementary code)
asserts that no block-$\ell$ loss produces non-zero gradient on
\texttt{blocks[$0$\ldots$\ell{-}1$].}\allowbreak\texttt{alpha\_raw}; we
run this harness against all four image-domain trainers (CIFAR-10,
CIFAR-100, Tiny~ImageNet, hardness-gated).

\begin{lemma}[Block-locality under stopped previous-goodness aggregation]
\label{lem:locality}
Let the block-$d$ objective be
\[
  \mathcal{L}_d(\theta_d)
  \;=\;
  \Psi_d\!\bigl(h_d(\theta_d;\,\operatorname{sg}(h_{d-1})),\;
                \operatorname{sg}(G_{<d})\bigr),
\]
where $\operatorname{sg}(\cdot)$ denotes stop-gradient
(\texttt{torch.no\_grad}/\texttt{.detach}), $h_{d-1}$ is the
detached input token sequence to block $d$, and $G_{<d}$ is any
previous-block goodness summary computed under stop-gradient. Then
for all $j<d$,
\[
  \nabla_{\theta_j}\mathcal{L}_d \;=\; 0.
\]
Consequently, the update generated by $\mathcal{L}_d$ is block-local:
only $\theta_d$ can receive a non-zero gradient from the block-$d$
loss, and no end-to-end or cross-block backpropagation occurs.
\end{lemma}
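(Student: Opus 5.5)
The argument is a chain-rule computation on the computational graph of $\mathcal{L}_d$. It uses only one property of stop-gradient: $\operatorname{sg}(z)$ has the same forward value as $z$, but its Jacobian with respect to every upstream variable is zero. Formally, $\operatorname{sg}(z)$ is a leaf of the block-$d$ graph, a constant that carries no edge back to whatever produced $z$.

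\textbf{Step 1: identify the leaves of the graph.} Under the lemma's hypotheses, $\mathcal{L}_d$ is a composition of $\Psi_d$ with two arguments. The first is $h_d(\theta_d;\,u)$ with $u=\operatorname{sg}(h_{d-1})$. The second is $v=\operatorname{sg}(G_{<d})$. The leaves are therefore $\theta_d$, $u$, $v$, and possibly data and label inputs, which do not depend on any parameters. The parameters $\theta_j$ for $j<d$ enter only through $h_{d-1}$ and $G_{<d}$, and both are cut by $\operatorname{sg}$.

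\textbf{Step 2: apply the multivariate chain rule.} For $j<d$,
\[
\nabla_{\theta_j}\mathcal{L}_d
=\Bigl(\tfrac{\partial h_d}{\partial u}\,\tfrac{\partial u}{\partial\theta_j}\Bigr)^{\!\top}\nabla_{h}\Psi_d
+\Bigl(\tfrac{\partial v}{\partial\theta_j}\Bigr)^{\!\top}\nabla_{G}\Psi_d
+\Bigl(\tfrac{\partial h_d}{\partial\theta_d}\tfrac{\partial\theta_d}{\partial\theta_j}\Bigr)^{\!\top}\nabla_h\Psi_d .
\]
The first two terms vanish because $\partial u/\partial\theta_j=0$ and $\partial v/\partial\theta_j=0$ by the defining property of $\operatorname{sg}$. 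The third term vanishes because the blocks have distinct, disjoint parameter sets, so $\partial\theta_d/\partial\theta_j=0$. Hence $\nabla_{\theta_j}\mathcal{L}_d=0$.

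\textbf{Step 3: conclude block-locality.} Only $\theta_d$ can receive a non-zero gradient from $\mathcal{L}_d$. This is the definition of block-local training with no cross-block backpropagation, and it is the same premise Theorem~\ref{thm:attenuation} relies on when it treats $P^{(d-1)}$ as detached.

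\textbf{Main obstacle: modelling, not calculus.} The difficult part is justifying that the implementation really has the stated form. Three points need checking. First, every route by which earlier-block quantities reach $\mathcal{L}_d$ must pass through $\operatorname{sg}$. This includes the residual weights $w^{(d)}$, the hard-negative-mining teacher scores, and the depth-order terms involving $G^{(d-1)}$. Second, the parameter sets must be disjoint: no weights are tied across blocks, and any shared stem or patch-embedding parameters are either excluded or handled separately. Third, any auxiliary loss that uses earlier-block quantities must also use them in detached form.

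I would address this by treating each such quantity as an additional detached argument of $\Psi_d$; the proof above then goes through unchanged. Any residual gap would be covered by the empirical harness (\texttt{verify\_locality.py}), which checks that $\nabla_{\theta_j}\mathcal{L}_d = 0$ holds in practice.
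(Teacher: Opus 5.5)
Your proposal is correct and follows essentially the same argument as the paper: every path from $\theta_j$ ($j<d$) into $\mathcal{L}_d$ passes through $\operatorname{sg}(h_{d-1})$ or $\operatorname{sg}(G_{<d})$, whose Jacobians vanish, and the remaining path through $h_d(\theta_d;\cdot)$ contributes only to $\nabla_{\theta_d}\mathcal{L}_d$. Your explicit chain-rule expansion and your disjoint-parameter condition spell out what the paper leaves implicit in the notation $h_d(\theta_d;\cdot)$; the proof is complete without appealing to the empirical harness.
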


\begin{proof}
Because $h_{d-1}$ and $G_{<d}$ enter $\mathcal{L}_d$ only through
$\operatorname{sg}(\cdot)$, their Jacobians with respect to earlier
parameters $\theta_j$ ($j<d$) are zero by construction. The only
non-stopped computational path from $\theta_j$ ($j<d$) to
$\mathcal{L}_d$ would have to pass through one of these two arguments,
so $\nabla_{\theta_j}\mathcal{L}_d = 0$. The remaining path through
$h_d(\theta_d; \cdot)$ contributes only to $\nabla_{\theta_d}\mathcal{L}_d$.
\end{proof}

\paragraph{Implementation note.} An earlier internal version of the
training loop did not wrap \texttt{\_sum\_prev\_goodness} in
\texttt{torch.no\_grad()}, so the additive path
$g^\ell_{\text{blk}} = g^\ell_{\text{curr}} + \gamma \sum_{j<\ell} \text{agg}_j(g^{(j)})$
produced non-zero gradients on \texttt{blocks[j].alpha\_raw} for
$j<\ell$ during \texttt{.backward()}. In that earlier version, the
strict block-locality of \emph{applied} updates was preserved because
each block $\ell$ has its own optimizer \texttt{opt$_\ell$} whose
\texttt{param\_groups} contain only that block's parameters; the
optimizer calls \texttt{opt$_\ell$.zero\_grad(set\_to\_none=True)}
before backward and \texttt{opt$_\ell$.step()} afterward, so any
gradient leaked onto \texttt{blocks[$j$].alpha\_raw} for $j<\ell$ was
zeroed by \texttt{opt$_j$.zero\_grad()} before
\texttt{opt$_j$.step()} ever ran (orphaned-gradient regime). The
canonical implementation above removes any dependence of the
locality claim on optimizer-level isolation, and is the version
to which the numbers reported in this paper correspond; the verify
harness confirms identical behaviour in either regime.

\subsection{Training-cost and locality trade-offs}
\label{app:efficiency}

Beyond accuracy, the FF-vs-BP comparison has an efficiency axis.
Table~\ref{tab:efficiency} summarises the trade-offs we observe on the
same co-designed backbone. FF pays a ${\sim}2.2{\times}$ wall-clock
cost per epoch on our hardware---driven by sequential per-block
forward/backward passes with SAM---but buys two concrete locality
benefits: activation storage during backward is ${\sim}L{\times}$
smaller (one block's activations at a time instead of all $L$), and
each backward updates only ${\sim}1/L$ of the parameters. These are
the classical FF motivations and they remain intact under
$\gamma{=}0$. The wall-clock and memory numbers should be read as
hardware-dependent (Google~Colab A100-SXM4-40GB); relative ordering is
reproducible.

\begin{table}[h]
  \caption{FF-vs-BP training-cost and locality trade-offs on the same
           FF-stripped backbone (CIFAR-10, $L{=}4$, $D{=}256$, 362
           epochs, seed 42). Wall-clock is the median per-epoch time
           from \texttt{history.jsonl}; params-per-backward is the
           number of parameters receiving non-zero gradient in a single
           \texttt{.backward()} (FF: one block at a time, BP: the whole
           model); activation memory is the theoretical peak scaling
           factor ($L$ blocks for BP vs.\ 1 for FF). Augmentation
           protocol is matched across FF and BP(strong).}
  \label{tab:efficiency}
  \centering
  \small
  \setlength{\tabcolsep}{4pt}
  \begin{tabular}{llcccc}
    \toprule
    Backbone & Train & Params & t/epoch & Params/backward & Act.\ memory \\
    \midrule
    FF-stripped & BP (weak aug)    & 5.54\,M &  26.9\,s & 5.54\,M (all) & $L{\cdot}A$ \\
    FF-stripped & BP (strong aug)  & 5.54\,M &  53.0\,s & 5.54\,M (all) & $L{\cdot}A$ \\
    FF-stripped & FF ($\gamma{=}0$) & 6.74\,M & 117.3\,s & ${\sim}1.68$\,M (1/$L$) & $1{\cdot}A$ \\
    \bottomrule
  \end{tabular}

  \vspace{1ex}
  \begin{flushleft}\footnotesize
  FF uses the same 4-block backbone plus per-block aggregation heads, so
  total parameter count is marginally larger than the BP variant
  (${\sim}{+}1.2$\,M). Params/backward for FF is total / $L$ (uniform
  block sizes); the \texttt{verify\_locality.py} harness above confirms
  that in the patched implementation all gradients on blocks
  $0{\ldots}\ell{-}1$ are zero during block-$\ell$'s backward.
  \end{flushleft}
\end{table}

\subsection{Additional limitations}
\label{app:long_limitations}

This appendix expands the four-item summary in
Section~\ref{sec:limitations}.

\paragraph{Scope of the negative result (extended).}
Our conclusions are limited to the FF objectives, MoE Transformer
architectures, and image benchmarks evaluated here. They do not rule
out other local objectives---contrastive-only, energy-based,
biologically motivated training rules, or alternative goodness
formulations---under which repairing free-riding could also close the
accuracy gap. Nor do they extrapolate to FF on substantially
different modalities or to backbones an order of magnitude larger
than the ones tested. Within our experimental budget, architecture
and augmentation account for much larger accuracy changes than
per-block free-riding repair, but the dominance claim is
\emph{about the intervention space we explored}, not a universal
claim about FF. A falsifiable counter-experiment would be a
local-objective intervention that (a) fully cancels the cumulative
attenuation envelope of Theorem~\ref{thm:attenuation} and
(b) moves accuracy materially: this would refute the dissociation.
The attenuation-compensated loss of Appendix~\ref{app:attenuation_compensated}
is one candidate worth running.

\paragraph{Theory is local-surrogate rather than end-to-end.} Our
analysis (Theorem~\ref{thm:attenuation}, Appendix~\ref{app:proofs})
characterises the signal allocation induced by the FF surrogate and
proves a gradient floor for unresolved examples, but it does \emph{not}
establish global convergence of the overall optimizer, sample-complexity
guarantees, or generalisation bounds.

\paragraph{MoE expert efficiency.} Our models exhibit 19--31\% dead
expert waste with 32 experts; reducing to 24 did not fully resolve this.
MoE routing efficiency under FF training remains an open problem in
this setting.

\paragraph{Factorial component interactions.} The $\gamma$-sweep
($0, 0.7, 1.0$) cleanly identifies cumulative accumulation as the root
cause of free-riding. Appendix~\ref{sec:hnm_deep_dive}
and~\ref{sec:aspect_ema_deep_dive} provide fine-grained ablations of
the two load-bearing components (hard negative mining and multi-aspect
goodness), but full factorial interactions between components are not
covered.

\paragraph{Hardness-gated experiments at reduced scale.} The adaptive
$\gamma$ experiments (Section~\ref{sec:adaptive_gamma}) were conducted
at reduced scale (D128, 180 epochs vs.\ D256, 362 epochs for the main
results) to enable a broad sweep. The $L{=}8$ experiments confirm that
the free-riding elimination and accuracy dissociation hold at double the
depth (8 blocks, 16 experts), strengthening the generality of the
finding; absolute accuracies should be interpreted within this
reduced-scale context.

\paragraph{Graph-level locality.} An earlier internal implementation
allowed previous-block aggregation heads to receive orphaned raw
gradients during \texttt{.backward()}, although those parameters were
never stepped by the block-$d$ optimizer (so applied updates remained
strictly block-local). The submitted implementation patches this path
by computing previous-goodness summaries under \texttt{torch.no\_grad()}
and detaching them before they enter the block-$d$ loss. The released
code therefore satisfies the stronger graph-level locality property
verified by Lemma~\ref{lem:locality} and the
\texttt{verify\_locality.py} harness above: during block-$d$'s backward
pass, blocks $0,\ldots,d{-}1$ receive zero gradient. The applied
weights, and therefore all reported numbers, are identical under the
two implementations.

\section{Detailed Architecture and Training Diagrams}
\label{sec:arch_diagrams}

\input{figures/fig1_architecture}

\begin{figure*}[t]
\centering
\resizebox{\textwidth}{!}{%
\begin{tikzpicture}[
    >=latex,
    node distance=0.5cm and 0.6cm,
    myshadow/.style={blur shadow={shadow blur steps=5, shadow xshift=0.7pt,
                     shadow yshift=-0.7pt, shadow opacity=25}},
    proc/.style={draw, rounded corners=3pt, minimum height=0.55cm, minimum width=2.0cm,
                 font=\scriptsize, align=center, thick, myshadow},
    data/.style={proc, fill=blue!15, draw=blue!50},
    compute/.style={proc, fill=green!15, draw=green!50!black},
    optim/.style={proc, fill=orange!18, draw=orange!60},
    lossbox/.style={proc, fill=red!15, draw=red!50},
    loopbox/.style={draw, rounded corners=4pt, thick, draw=green!60!black, fill=green!3, myshadow},
    decision/.style={draw, diamond, aspect=2.2, minimum height=0.5cm, font=\scriptsize,
                     align=center, fill=yellow!15, draw=yellow!60!black, thick, myshadow},
    arrow/.style={->, thick, draw=gray!60, >=stealth},
    dashedarrow/.style={->, dashed, thick, draw=gray!50, >=stealth},
    label/.style={font=\tiny, text=gray!60},
    scissors/.style={font=\scriptsize\bfseries, text=orange!70},
]

\node[font=\small\bfseries, text=blue!60] at (0, 0.8) {Data Preparation};

\node[data] (batch) at (0, 0) {Minibatch\\$(x, y)$, $B{=}512$};
\node[data, below=0.5cm of batch] (aug) {3 augmented views\\$x_1, x_2, x_3$};
\node[data, below=0.5cm of aug] (ni) {Wrong-image neg.\\$x_\text{ni} \leftarrow \text{shuffle}(x)$};
\node[data, below=0.5cm of ni] (patemb) {PatchEmbed\\$x \to [B, 64, 256]$};
\node[data, below=0.5cm of patemb] (initg) {Init $g_\text{hist} = []$\\Init token cache};

\draw[arrow] (batch) -- (aug);
\draw[arrow] (aug) -- (ni);
\draw[arrow] (ni) -- (patemb);
\draw[arrow] (patemb) -- (initg);

\node[font=\small\bfseries, text=green!60!black] at (5.5, 0.8) {Per-Block Training Loop};

\node[loopbox, minimum width=6.8cm, minimum height=12.5cm] (loopframe) at (5.5, -4.7) {};
\node[font=\scriptsize\bfseries, text=green!60!black] at (5.5, 1.4) {for $d = 0, 1, 2, 3$:};

\node[compute] (detach) at (5.5, 0) {Detach tokens\\+ $g_\text{prev}$ from cache};
\node[scissors] at (3.7, 0) {\texttt{detach}};

\node[compute, fill=yellow!10, draw=yellow!60!black] (hnm) at (5.5, -1.2)
    {Hard negative mining\\EMA teacher, top-$k$ ($8{\to}16$)};

\node[compute] (fwd) at (5.5, -2.6)
    {Forward block $d$\\on 4 streams};

\node[right=0.2cm of fwd, font=\tiny, text=gray!60, align=left]
    {pos$_1$: $(x_1, y)$\\pos$_2$: $(x_2, y)$\\NL: $(x_3, \hat{y}_\text{hard})$\\NI: $(x_\text{ni}, y)$};

\node[lossbox, minimum width=3.0cm] (loss) at (5.5, -4.2)
    {Compute $\mathcal{L}^{(d)}_\text{total}$};

\node[right=0.2cm of loss, font=\tiny, text=red!60, align=left]
    {$\mathcal{L}_\text{aspects}$ + $\lambda_\text{block}\mathcal{L}_\text{block}$\\
     + $\lambda_\text{curr}(d)\mathcal{L}_\text{curr}$\\
     + $\mathcal{L}_\text{depth}$ + $\mathcal{L}_\text{SupCon}$\\
     + $\mathcal{L}_\text{MoE}$ + $\mathcal{L}_\text{recon}$};

\node[optim] (sam1) at (5.5, -5.6) {SAM step 1:\\$\nabla \to$ perturb weights};

\node[compute] (refwd) at (5.5, -6.8) {Re-forward block $d$\\(at perturbed point)};

\node[optim] (sam2) at (5.5, -8.0) {SAM step 2:\\optimiser update};

\node[optim, fill=teal!10, draw=teal!60] (ema) at (5.5, -9.1) {EMA update\\block $d$ weights};

\node[compute] (norm) at (5.5, -10.2) {ReLU $\to$ $L_2$ norm\\output tokens};

\draw[arrow] (detach) -- (hnm);
\draw[arrow] (hnm) -- (fwd);
\draw[arrow] (fwd) -- (loss);
\draw[arrow] (loss) -- (sam1);
\draw[arrow] (sam1) -- (refwd);
\draw[arrow] (refwd) -- (sam2);
\draw[arrow] (sam2) -- (ema);
\draw[arrow] (ema) -- (norm);

\draw[arrow] (initg.east) -| (detach.west);

\node[font=\small\bfseries, text=purple!60] at (11.5, 0.8) {Post-Block Updates};

\node[compute, fill=green!8, draw=green!50!black] (storeg) at (11.5, -1.0)
    {Append $g_\text{vec}^{(d)}$\\to $g_\text{hist}$};

\node[compute, fill=blue!8, draw=blue!50] (enqueue) at (11.5, -2.5)
    {Enqueue contrastive\\embeddings $z$ to queue};

\node[compute] (cache) at (11.5, -4.0)
    {Cache normalised tokens\\for block $d{+}1$};

\node[decision] (lastblock) at (11.5, -5.5) {$d = 3$?};

\node[compute, fill=gray!10, draw=gray!50] (nextblock) at (14.0, -5.5) {$d \leftarrow d{+}1$};

\node[optim, fill=purple!10, draw=purple!60] (evalchk) at (11.5, -7.5)
    {Evaluate (periodic)\\Save checkpoint};

\draw[arrow] (norm.east) -| (storeg.west);
\draw[arrow] (storeg) -- (enqueue);
\draw[arrow] (enqueue) -- (cache);
\draw[arrow] (cache) -- (lastblock);
\draw[arrow] (lastblock.east) -- node[above, font=\tiny] {no} (nextblock);
\draw[arrow] (nextblock.north) |- (detach.east);
\draw[arrow] (lastblock.south) -- node[right, font=\tiny] {yes} (evalchk);

\node[font=\tiny, text=blue!60, align=center] at (3.3, -5.6)
    {Block 0 optimizer\\also trains PatchEmbed};

\draw[thick, orange!60, dashed, rounded corners=2pt]
    ([xshift=-0.5cm, yshift=0.25cm]detach.north west) rectangle
    ([xshift=0.2cm, yshift=-0.15cm]detach.south east);
\node[font=\tiny, text=orange!60, anchor=south west] at ([xshift=-0.5cm, yshift=0.25cm]detach.north west)
    {gradient isolation};

\end{tikzpicture}%
}
\caption{Stage~1 training procedure. \textbf{Left:} Data preparation produces three augmented views and wrong-image negatives per minibatch. \textbf{Centre:} Each block $d$ is trained locally with its own SAM optimiser: hard negative labels are mined via the EMA teacher, four streams are forwarded, and the composite loss $\mathcal{L}^{(d)}_\text{total}$ drives a two-step SAM update. Gradient detachment between blocks (orange dashed box) enforces local learning. \textbf{Right:} After each block, aspect histories and contrastive embeddings are cached, and normalised tokens are passed to the next block.}
\label{fig:stage1_training}
\end{figure*}

\begin{figure}[t]
\centering
\resizebox{\columnwidth}{!}{%
\begin{tikzpicture}[
    >=latex,
    node distance=0.4cm and 0.5cm,
    myshadow/.style={blur shadow={shadow blur steps=5, shadow xshift=0.7pt,
                     shadow yshift=-0.7pt, shadow opacity=25}},
    proc/.style={draw, rounded corners=3pt, minimum height=0.45cm, minimum width=1.8cm,
                 font=\scriptsize, align=center, thick, myshadow},
    aspect/.style={draw, rounded corners=3pt, minimum height=0.45cm, minimum width=2.2cm,
                   font=\scriptsize, align=center, thick, fill=green!15, draw=green!65!black, myshadow},
    inactive/.style={draw, rounded corners=3pt, minimum height=0.45cm, minimum width=2.2cm,
                     font=\scriptsize, align=center, thick, fill=gray!10, draw=gray!40,
                     text=gray!50, myshadow},
    datanode/.style={proc, fill=blue!15, draw=blue!55},
    aggnode/.style={proc, fill=green!22, draw=green!65!black, thick, font=\scriptsize\bfseries},
    arrow/.style={->, thick, draw=gray!60, >=stealth},
    dashedarrow/.style={->, dashed, thick, draw=gray!40, >=stealth},
]

\node[font=\small\bfseries, text=green!60!black] at (3.5, 1.0) {Multi-Aspect Goodness Computation};

\node[datanode] (tokens) at (0, 0) {Post-MoE tokens\\$h\;[B, N, D]$};

\node[proc, fill=green!10, draw=green!50!black] (pool) at (3.5, 0)
    {AttentionPool\\learned query $\to [B, D]$};
\draw[arrow] (tokens) -- (pool);

\node[aspect] (energy) at (0, -2.0) {$g_\text{energy}$\\$\text{mean}(h^2)$};
\draw[arrow, green!60!black] (tokens.south) -- (energy.north);
\node[right=0.1cm of energy, font=\tiny, text=gray!50] {rank loss};

\coordinate (branch) at (3.5, -0.7);
\draw[arrow, green!60!black] (pool.south) -- (branch);

\node[aspect] (align) at (2.2, -2.0) {$g_\text{align}$\\$\frac{\bar{h} \cdot p_y}{\|{\bar{h}}\|\|{p_y}\|} / \tau$};
\draw[arrow, green!60!black] (branch) -- (align.north);
\node[right=0.1cm of align, font=\tiny, text=gray!50] {margin loss};

\node[proc, fill=yellow!15, draw=yellow!60!black, font=\tiny] (proto) at (2.2, -3.3)
    {Class prototype $p_y$\\$\tau{=}0.1$};
\draw[dashedarrow] (proto) -- (align);

\node[inactive] (attnsharp) at (4.8, -2.0) {$g_\text{attn}$\\(memory cross-attn)};
\draw[dashedarrow, gray!30] (branch) -- (attnsharp.north);
\node[right=0.1cm of attnsharp, font=\tiny, text=gray!40] {inactive};

\node[aspect] (learned) at (7.0, -2.0) {$g_\text{learned}$\\MLP $\to$ softplus};
\draw[arrow, green!60!black] (branch) -| (learned.north);

\node[proc, fill=gray!8, draw=gray!40, font=\tiny] (mlpdetail) at (7.0, -3.3)
    {RMSNorm $\to$ Linear\\$\to$ GELU $\to$ Linear $\to$ softplus};
\draw[dashedarrow] (mlpdetail) -- (learned);
\node[right=0.1cm of learned, font=\tiny, text=gray!50] {margin loss};

\node[aggnode, minimum width=5.0cm] (gvec) at (3.5, -4.5)
    {$g_\text{vec} = [g_\text{align},\; g_\text{energy},\; g_\text{attn},\; g_\text{learned}]$\quad$[B, 4]$};

\draw[arrow, green!60!black] (energy.south) |- (gvec.west);
\draw[arrow, green!60!black] (align.south) |- ([yshift=0.1cm]gvec.north west);
\draw[dashedarrow, gray!30] (attnsharp.south) -- (gvec.north);
\draw[arrow, green!60!black] (learned.south) |- (gvec.east);

\node[aggnode, fill=green!25] (gscalar) at (3.5, -5.7)
    {$g^{(d)} = \displaystyle\sum_{a} w_a \cdot g_{\text{vec},a}$};

\draw[arrow, green!60!black, line width=1.2pt] (gvec) -- (gscalar);

\node[right=0.3cm of gscalar, font=\tiny, text=green!50!black, align=left]
    {$w = \text{softmax}(\alpha_\text{raw})$\\learned per block};

\node[below=0.4cm of gscalar, font=\scriptsize, text=green!60!black]
    {scalar block goodness $g^{(d)} \in \mathbb{R}$};

\end{tikzpicture}%
}
\caption{Multi-aspect goodness computation within each FFHybridBlock. Post-MoE tokens are attention-pooled, then four aspect values are computed in parallel. The attention-sharpness aspect (greyed) is inactive in the default configuration (memory cross-attention disabled). Aspects are aggregated via learned softmax weights into a single scalar goodness $g^{(d)}$.}
\label{fig:goodness_aspects}
\end{figure}

\begin{figure}[t]
\centering
\resizebox{\columnwidth}{!}{%
\begin{tikzpicture}[
    >=latex,
    node distance=0.5cm,
    myshadow/.style={blur shadow={shadow blur steps=5, shadow xshift=0.7pt,
                     shadow yshift=-0.7pt, shadow opacity=25}},
    proc/.style={draw, rounded corners=3pt, minimum height=0.5cm, minimum width=2.5cm,
                 font=\scriptsize, align=center, thick, myshadow},
    frozen/.style={proc, fill=blue!12, draw=blue!40},
    headnode/.style={proc, fill=purple!18, draw=purple!60},
    datanode/.style={proc, fill=blue!15, draw=blue!50},
    attnnode/.style={proc, fill=purple!25, draw=purple!70, font=\scriptsize\bfseries},
    arrow/.style={->, thick, draw=gray!60, >=stealth},
    dashedarrow/.style={->, dashed, thick, draw=gray!50, >=stealth},
    label/.style={font=\tiny, text=gray!60},
]

\node[font=\small\bfseries, text=purple!70] at (3.0, 0.8) {Stage 2: AttentiveHybridHead};

\node[datanode] (img) at (0, 0) {Image\\$[B, 3, 32, 32]$};

\node[proc, fill=green!6, draw=green!50!black, minimum width=3.5cm] (backbone) at (3.5, 0)
    {Frozen FF Backbone\\(best Stage-1 / EMA)};
\node[above right=-0.05cm and -0.2cm of backbone, font=\tiny, text=blue!50] {frozen};

\draw[arrow] (img) -- (backbone);

\node[proc, fill=yellow!10, draw=yellow!60!black, minimum width=3.5cm] (loop) at (3.5, -1.2)
    {Run for all $k = 0, \ldots, 9$\\(10 label hypotheses)};
\draw[arrow] (backbone) -- (loop);

\node[datanode, minimum width=2.5cm] (featsg) at (1.5, -2.6)
    {$\text{feats}_g$\\$[B, 10, 16]$};
\node[datanode, minimum width=2.5cm] (featsa) at (5.5, -2.6)
    {$\text{feats}_a$\\$[B, 10, 1024]$};

\draw[arrow] (loop.south) -| (featsg.north);
\draw[arrow] (loop.south) -| (featsa.north);

\node[below=0.0cm of featsg, font=\tiny, text=gray!50] {$L{\times}A = 4{\times}4$ aspects};
\node[below=0.0cm of featsa, font=\tiny, text=gray!50] {$L{\times}D = 4{\times}256$ pooled};

\node[headnode, minimum width=4.0cm] (concat) at (3.5, -4.0)
    {Concat per class\\$[B, 10, 1040]$};
\draw[arrow] (featsg.south) |- (concat.west);
\draw[arrow] (featsa.south) |- (concat.east);

\node[headnode] (proj) at (3.5, -5.2)
    {RMSNorm $\to$ Linear $\to$ GELU\\$[B, 10, 1536]$};
\draw[arrow] (concat) -- (proj);

\node[attnnode, minimum width=4.5cm, minimum height=0.7cm] (classattn) at (3.5, -6.6)
    {Self-Attention across\\10 class hypotheses};
\draw[arrow] (proj) -- (classattn);

\node[right=0.3cm of classattn, font=\tiny, text=purple!60, align=left, text width=2.8cm]
    {Models inter-class\\competition: which labels\\have similar/competing\\evidence};

\node[headnode] (ffn) at (3.5, -7.8)
    {GEGLU FFN + residual\\$[B, 10, 1536]$};
\draw[arrow] (classattn) -- (ffn);

\node[headnode, fill=purple!25, draw=purple!70] (classifier) at (3.5, -8.9)
    {Linear $\to$ squeeze\\$[B, 10, 1] \to [B, 10]$};
\draw[arrow] (ffn) -- (classifier);

\node[proc, fill=purple!30, draw=purple!80, font=\scriptsize\bfseries] (logits) at (3.5, -10.0)
    {Logits $[B, 10]$\\$\to$ Cross-Entropy};
\draw[arrow] (classifier) -- (logits);

\node[font=\tiny, text=gray!50, align=center] at (3.5, -10.9)
    {AdamW, cosine LR, 20 epochs\\Only head parameters trained};

\end{tikzpicture}%
}
\caption{Stage~2 pipeline. The frozen Stage-1 backbone extracts goodness features ($\text{feats}_g$) and pooled activation features ($\text{feats}_a$) for all 10 class hypotheses. The AttentiveHybridHead concatenates and projects these features, then applies self-attention \emph{across class hypotheses}---enabling the head to model inter-class competition---followed by a GEGLU feed-forward block and per-class logit prediction.}
\label{fig:stage2_pipeline}
\end{figure}

\begin{figure}[t]
\centering
\resizebox{\columnwidth}{!}{%
\begin{tikzpicture}[
    >=latex,
    node distance=0.45cm,
    myshadow/.style={blur shadow={shadow blur steps=5, shadow xshift=0.7pt,
                     shadow yshift=-0.7pt, shadow opacity=25}},
    proc/.style={draw, rounded corners=3pt, minimum height=0.45cm, minimum width=2.2cm,
                 font=\scriptsize, align=center, thick, myshadow},
    data/.style={proc, fill=blue!15, draw=blue!50},
    compute/.style={proc, fill=green!15, draw=green!50!black},
    headproc/.style={proc, fill=purple!18, draw=purple!60},
    result/.style={proc, fill=green!25, draw=green!60!black, font=\scriptsize\bfseries},
    tta/.style={proc, fill=gray!12, draw=gray!40, font=\tiny},
    arrow/.style={->, thick, draw=gray!60, >=stealth},
    dashedarrow/.style={->, dashed, thick, draw=gray!40, >=stealth},
    paneltitle/.style={font=\small\bfseries, align=center},
]

\node[paneltitle, text=green!60!black] at (0, 0.6) {Stage~1 Inference};
\node[font=\tiny\itshape, text=gray!50] at (0, 0.15) {(FF goodness-based)};

\node[data] (img1) at (0, -0.5) {Input image $x$};

\node[compute, minimum width=3.0cm] (loop1) at (0, -1.6)
    {For $k = 0, \ldots, 9$:};

\node[compute, minimum width=3.0cm] (cond1) at (0, -2.7)
    {Run backbone\\conditioned on label $k$};

\node[compute, minimum width=3.0cm] (sum1) at (0, -3.9)
    {$s_k = \displaystyle\sum_{d=0}^{3} g^{(d)}(x, k)$};

\node[result, minimum width=3.0cm] (pred1) at (0, -5.2)
    {$\hat{y} = \arg\max_k\; s_k$};

\draw[arrow] (img1) -- (loop1);
\draw[arrow] (loop1) -- (cond1);
\draw[arrow] (cond1) -- (sum1);
\draw[arrow] (sum1) -- (pred1);

\node[right=0.15cm of cond1, font=\tiny, text=gray!50, align=left] {4 blocks\\per label};
\node[right=0.15cm of loop1, font=\tiny, text=gray!50] {10 passes};

\node[paneltitle, text=purple!70] at (5.5, 0.6) {Stage~2 Inference};
\node[font=\tiny\itshape, text=gray!50] at (5.5, 0.15) {(attentive head)};

\node[data] (img2) at (5.5, -0.5) {Input image $x$};

\node[compute, minimum width=3.0cm] (extract) at (5.5, -1.6)
    {Extract features\\$\forall\; k = 0, \ldots, 9$};

\node[headproc, minimum width=3.0cm] (featbox) at (5.5, -2.7)
    {$\text{feats}_g\;[10, 16]$\\$\text{feats}_a\;[10, 1024]$};

\node[headproc, minimum width=3.0cm] (headinf) at (5.5, -3.9)
    {AttentiveHybridHead\\$\to$ logits $[10]$};

\node[result, fill=purple!20, draw=purple!60, minimum width=3.0cm] (pred2) at (5.5, -5.2)
    {$\hat{y} = \arg\max_k\;\ell_k$};

\draw[arrow] (img2) -- (extract);
\draw[arrow] (extract) -- (featbox);
\draw[arrow] (featbox) -- (headinf);
\draw[arrow] (headinf) -- (pred2);

\node[right=0.15cm of extract, font=\tiny, text=gray!50, align=left] {frozen\\backbone};

\node[tta, minimum width=8.5cm] (ttabox) at (2.75, -6.3)
    {Optional TTA: average scores/logits with horizontally flipped image};

\draw[dashedarrow] (pred1.south) -- ([yshift=0.1cm]ttabox.north west);
\draw[dashedarrow] (pred2.south) -- ([yshift=0.1cm]ttabox.north east);

\draw[gray!30, thick, dashed] (2.75, 0.8) -- (2.75, -6.0);

\end{tikzpicture}%
}
\caption{Inference procedures. \textbf{Left (Stage~1):} FF-style prediction runs the backbone for each of the 10 class hypotheses, sums per-block goodness across depth, and selects the class with maximum total goodness. \textbf{Right (Stage~2):} The frozen backbone extracts class-conditioned features, and the AttentiveHybridHead produces logits directly. Both support optional horizontal-flip test-time augmentation.}
\label{fig:inference}
\end{figure}

\begin{figure}[t]
\centering
\resizebox{\columnwidth}{!}{%
\begin{tikzpicture}[
    >=latex,
    myshadow/.style={blur shadow={shadow blur steps=5, shadow xshift=0.7pt,
                     shadow yshift=-0.7pt, shadow opacity=25}},
    lossnode/.style={draw, rounded corners=3pt, minimum height=0.45cm, minimum width=1.4cm,
                     font=\scriptsize, align=center, thick, fill=red!15, draw=red!50, myshadow},
    subloss/.style={draw, rounded corners=3pt, minimum height=0.38cm, minimum width=1.1cm,
                    font=\tiny, align=center, fill=red!10, draw=red!30, myshadow},
    highlight/.style={draw, rounded corners=3pt, minimum height=0.45cm, minimum width=1.4cm,
                      font=\scriptsize, align=center, thick, fill=green!20, draw=green!60!black,
                      line width=1.2pt, myshadow},
    rootnode/.style={draw, rounded corners=3pt, minimum height=0.55cm, minimum width=2.0cm,
                     font=\small\bfseries, align=center, thick, fill=red!18, draw=red!60, myshadow},
    depthbar/.style={fill=green!50!black, draw=none},
    arrow/.style={->, thick, draw=gray!60, >=stealth},
    label/.style={font=\tiny, text=gray!60},
]

\node[rootnode] (root) at (0, 0) {$\mathcal{L}^{(d)}_\text{total}$};

\node[lossnode] (aspects) at (-5.5, -1.5) {$\mathcal{L}_\text{aspects}$};
\node[lossnode] (block) at (-3.2, -1.5) {$\mathcal{L}_\text{block}$};
\node[highlight] (curr) at (-0.8, -1.5) {$\lambda_\text{curr}(d)\,\mathcal{L}_\text{curr}$};
\node[lossnode] (depth) at (1.6, -1.5) {$\mathcal{L}_\text{depth}$};
\node[lossnode] (supcon) at (3.5, -1.5) {$\mathcal{L}_\text{SupCon}$};
\node[lossnode] (moeloss) at (5.4, -1.5) {$\mathcal{L}_\text{MoE}$};
\node[lossnode] (reconloss) at (7.2, -1.5) {$\mathcal{L}_\text{recon}$};

\draw[arrow] (root) -- (aspects);
\draw[arrow] (root) -- (block);
\draw[arrow, green!60!black, line width=1.2pt] (root) -- (curr);
\draw[arrow] (root) -- (depth);
\draw[arrow] (root) -- (supcon);
\draw[arrow] (root) -- (moeloss);
\draw[arrow] (root) -- (reconloss);

\node[subloss] (align) at (-7.0, -3.0) {$g_\text{align}$\\$\lambda{=}1.5$};
\node[subloss] (energy) at (-5.8, -3.0) {$g_\text{energy}$\\$\lambda{=}0.2$};
\node[subloss] (attns) at (-4.6, -3.0) {$g_\text{attn}$\\$\lambda{=}0.3$};
\node[subloss] (learned) at (-3.4, -3.0) {$g_\text{learned}$\\$\lambda{=}0.3$};

\draw[arrow, gray!40] (aspects) -- (align);
\draw[arrow, gray!40] (aspects) -- (energy);
\draw[arrow, gray!40] (aspects) -- (attns);
\draw[arrow, gray!40] (aspects) -- (learned);

\node[below=0.05cm of align, font=\tiny, text=gray!50] {margin};
\node[below=0.05cm of energy, font=\tiny, text=gray!50] {rank};
\node[below=0.05cm of attns, font=\tiny, text=gray!50] {rank};
\node[below=0.05cm of learned, font=\tiny, text=gray!50] {margin};

\node[below=0.15cm of block, font=\tiny, text=gray!60, align=center] {cumulative block\\discrimination};

\node[below=0.2cm of curr, font=\tiny, text=green!50!black, align=center] (dslabel)
    {depth-scaled \textbf{(ours)}};

\begin{scope}[shift={(-0.8, -2.65)}]
    \fill[depthbar, opacity=0.4] (-0.6, 0) rectangle (-0.35, 0.25);
    \fill[depthbar, opacity=0.6] (-0.25, 0) rectangle (0.0, 0.5);
    \fill[depthbar, opacity=0.8] (0.1, 0) rectangle (0.35, 0.75);
    \fill[depthbar, opacity=1.0] (0.45, 0) rectangle (0.7, 1.0);
    \node[font=\tiny, text=green!40!black] at (-0.475, -0.15) {.25};
    \node[font=\tiny, text=green!40!black] at (-0.125, -0.15) {.50};
    \node[font=\tiny, text=green!40!black] at (0.225, -0.15) {.75};
    \node[font=\tiny, text=green!40!black] at (0.575, -0.15) {1.0};
    \node[font=\tiny, text=gray!50] at (0.05, -0.35) {blocks $0 \to 3$};
\end{scope}

\node[below=0.15cm of depth, font=\tiny, text=gray!60, align=center] {monotonic\\ordering};

\node[below=0.15cm of supcon, font=\tiny, text=gray!60, align=center] {per-block\\contrastive};

\node[subloss] (bal) at (4.8, -3.0) {$\mathcal{L}_\text{bal}$\\$\lambda{=}0.01$};
\node[subloss] (zloss) at (6.0, -3.0) {$\mathcal{L}_z$\\$\lambda{=}0.001$};
\draw[arrow, gray!40] (moeloss) -- (bal);
\draw[arrow, gray!40] (moeloss) -- (zloss);

\node[below=0.05cm of bal, font=\tiny, text=gray!50] {balance};
\node[below=0.05cm of zloss, font=\tiny, text=gray!50] {router};

\node[below=0.15cm of reconloss, font=\tiny, text=gray!60, align=center] {MSE\\$\lambda{=}0.05$};

\draw[green!60!black, thick, rounded corners=3pt, dashed]
    ([xshift=-0.3cm, yshift=0.15cm]curr.north west) rectangle
    ([xshift=0.3cm, yshift=-1.5cm]curr.south east);

\end{tikzpicture}%
}
\caption{Per-block loss decomposition at depth $d$. The \textbf{depth-scaled current-block discrimination loss} $\mathcal{L}_\text{curr}$ (green, dashed border) is the paper's core contribution: its weight increases linearly from $0.25$ to $1.00$ across blocks $0$--$3$, forcing deeper blocks to discriminate independently rather than free-ride on earlier layers.}
\label{fig:loss_decomposition}
\end{figure}

\begin{figure}[h]
  \centering
  \includegraphics[width=\textwidth]{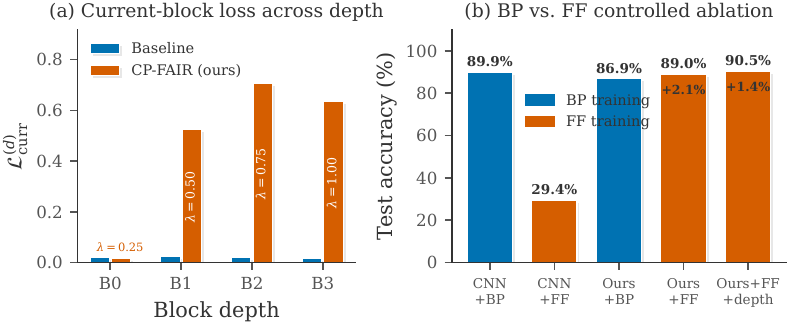}
  \caption{\textbf{(a)}~Current-block discrimination loss at convergence. The baseline's loss is near-zero at all depths (blocks free-ride); with depth-scaled $\lambda_\text{curr}$, deeper blocks maintain substantial loss, forcing independent contribution. \textbf{(b)}~Controlled BP vs.\ FF ablation. \emph{All bars are single-crop test accuracy (no~TTA)} from Tab.~\ref{tab:bp_ff_ablation}. FF fails on plain CNNs ($29.4\%$); on the co-designed FF backbone, FF reaches $89.03\%$ vs.\ weak-aug BP $86.90\%$ ($+2.1\%$, but the recipes are not augmentation-controlled), and BP with the BP-strong recipe reaches $93.90\%$.}
  \label{fig:loss_bpff}
\end{figure}

\newpage

\section{Secondary Analysis}
\label{sec:secondary_analysis}

\begin{figure}[h]
  \centering
  \includegraphics[width=\textwidth]{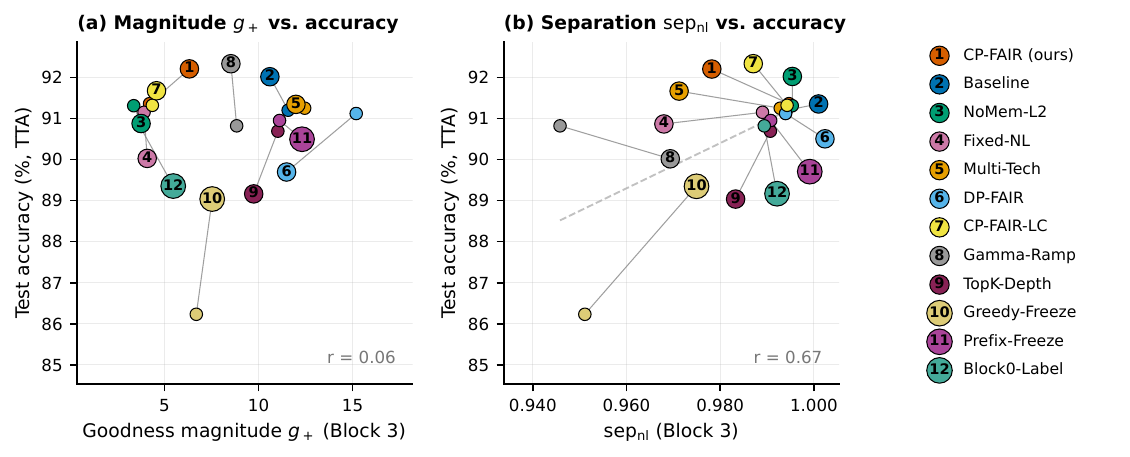}
  \caption{Across all 13 variants: \textbf{(a)}~goodness magnitude $g_+$ at Block~3 shows no correlation with test accuracy ($r{=}0.06$); \textbf{(b)}~wrong-label separation $\text{sep}_\text{nl}$ at Block~3 is a strong predictor ($r{=}0.67$). Separation quality, not magnitude, drives accuracy.}
  \label{fig:magnitude_vs_sep}
\end{figure}

\paragraph{Goodness magnitude vs.\ separation quality.}
A surprising finding is that large goodness magnitudes do not improve
accuracy. \textsc{nomem-l2} (no memory, $L_2$-normalised tokens)
achieves $g_+ \approx 3.4$ at Block~3, versus $g_+ \approx 11.6$ for
\textsc{baseline} (with memory cross-attention)---approximately a
$3.4{\times}$ difference---yet achieves higher test accuracy ($91.31\%$
vs.\ $91.20\%$).
Separation quality ($\text{sep}_\text{nl}$) is the predictive metric,
not magnitude. Across the $\gamma$-ablations, $\text{sep}_\text{nl}$ at
Block~3 converges to $\sim$0.995 for both $\gamma{=}0$ and $\gamma{=}0.7$,
consistent with their similar final accuracies. However,
$\text{sep}^\text{cur}_\text{nl}$ (per-block independence) clearly differentiates
variants: $\gamma{=}0$ achieves $3.28$ at Block~3 vs.\ $0.61$ for
\textsc{cp-fair}. Within the $\gamma$-sweep, higher
$\text{sep}^\text{cur}_\text{nl}$ correlates with higher accuracy and zero
free-riding. However, the hardness-gated experiments
(Section~\ref{sec:adaptive_gamma}) show that artificially boosting
per-block independence via adaptive gating does not improve accuracy,
suggesting that $\text{sep}^\text{cur}_\text{nl}$ is a reliable \emph{diagnostic}
of block health but not a direct causal driver of test performance.

\paragraph{The complexity trap.}
\textsc{multi-tech} synthesises 10 additional techniques (triangle activation,
multi-negative mining, diversity loss, per-block probes, gamma ramp,
warm-start, dual-phase contrastive, temperature annealing, reduced MoE,
wider HNM) on top of the same depth-scaled loss as \textsc{cp-fair}, yet
achieves $91.25\%$ vs.\ $91.36\%$. Caution is required in interpreting this
comparison directly (the two variants differ in architecture: 24 vs.\ 32
experts, presence of triangle activation), but it is consistent with the
general finding that identifying and targeting the root failure mode
(free-riding) is more productive than layering techniques.

\paragraph{Stage~2 is expendable.}
Across all 13 variants, Stage~2 (frozen backbone + attentive head, 20
epochs) contributes an average of $+0.03\%$. Three variants lose accuracy
in Stage~2. We recommend omitting Stage~2 for practitioners prioritising
simplicity.

\paragraph{Batch size affects wrong-image separation.}
$\text{sep}_\text{ni}$ (wrong-image separation) at Block~3 drops from
$>99.6\%$ (batch size 512) to ${\sim}94.7\%$ (batch size 128). This is
because hard negative mining selects incorrect labels from within the batch;
larger batches provide harder, more diverse negatives. We recommend $\geq$512.

\paragraph{Adaptive gating enables early exit.}
Depth-truncation analysis on the hardness-gated experiments reveals that
adaptive configurations achieve near-full accuracy with fewer blocks. At
reduced scale (D128, 180 epochs), the adaptive $\tau{=}3, \kappa{=}0$
variant achieves $87.40\%$ using only Block~0, which \emph{exceeds} its
full 4-block accuracy of $87.06\%$. In contrast, the constant-$\gamma$
baseline requires all four blocks to reach its peak ($87.48\%$ at $d{=}2$).
This suggests that adaptive gating, by forcing each block to be individually
discriminative, enables efficient early-exit inference without accuracy loss.



\begin{thebibliography}{99}

\bibitem[Aghagolzadeh and Ezoji(2025)]{cffm2025}
Aghagolzadeh, H. and Ezoji, M. (2025).
\newblock Contrastive Forward-Forward: A Training Algorithm of Vision
Transformer.
\newblock \emph{arXiv preprint arXiv:2502.00571}.

\bibitem[Chen et~al.(2025)Chen, Liu, Laydevant, and Grollier]{scff2025}
Chen, X., Liu, D., Laydevant, J., and Grollier, J. (2025).
\newblock Self-Contrastive Forward-Forward algorithm.
\newblock \emph{Nature Communications}, 16:5978.

\bibitem[Cubuk et~al.(2020)Cubuk, Zoph, Shlens, and Le]{cubuk2020randaugment}
Cubuk, E.~D., Zoph, B., Shlens, J., and Le, Q.~V. (2020).
\newblock RandAugment: Practical automated data augmentation with a reduced
search space.
\newblock In \emph{Conference on Computer Vision and Pattern Recognition
Workshops (CVPRW)}.

\bibitem[Dellaferrera and Kreiman(2022)]{dellaferrera2022pepita}
Dellaferrera, G. and Kreiman, G. (2022).
\newblock Error-driven Input Modulation: Solving the Credit Assignment Problem
without a Backward Pass.
\newblock In \emph{Proceedings of the 39th International Conference on Machine
Learning (ICML)}.

\bibitem[Devlin et~al.(2019)Devlin, Chang, Lee, and Toutanova]{devlin2019bert}
Devlin, J., Chang, M.-W., Lee, K., and Toutanova, K. (2019).
\newblock BERT: Pre-training of Deep Bidirectional Transformers for Language
Understanding.
\newblock In \emph{Proceedings of the 2019 Conference of the North American
Chapter of the Association for Computational Linguistics (NAACL-HLT)}.

\bibitem[Dooms et~al.(2024)Dooms, Tsang, and Oramas]{trifecta2024}
Dooms, T., Tsang, I.~J., and Oramas, J. (2024).
\newblock The Trifecta: Three simple techniques for training deeper
Forward-Forward networks.
\newblock In \emph{The Twelfth International Conference on Learning
Representations (ICLR)}. arXiv:2311.18130.

\bibitem[Fedus et~al.(2022)Fedus, Zoph, and Shazeer]{fedus2022switch}
Fedus, W., Zoph, B., and Shazeer, N. (2022).
\newblock Switch Transformers: Scaling to Trillion Parameter Models with
Simple and Efficient Sparsity.
\newblock \emph{Journal of Machine Learning Research}, 23(120):1--39.

\bibitem[Foret et~al.(2021)Foret, Kleiner, Mobahi, and Neyshabur]{foret2021sam}
Foret, P., Kleiner, A., Mobahi, H., and Neyshabur, B. (2021).
\newblock Sharpness-Aware Minimization for Efficiently Improving Generalization.
\newblock In \emph{International Conference on Learning Representations (ICLR)}.

\bibitem[Frankle and Carlin(2019)]{frankle2019lottery}
Frankle, J. and Carlin, M. (2019).
\newblock The Lottery Ticket Hypothesis: Finding Sparse, Trainable Neural
Networks.
\newblock In \emph{International Conference on Learning Representations (ICLR)}.

\bibitem[Gandhi et~al.(2023)Gandhi, Gala, Kornberg, and Sridhar]{extendff2023}
Gandhi, S., Gala, R., Kornberg, J., and Sridhar, A. (2023).
\newblock Extending the Forward Forward Algorithm.
\newblock \emph{arXiv preprint arXiv:2307.04205}.

\bibitem[Gong et~al.(2025)Gong, Staszewski, and Xu]{asge2025}
Gong, Q., Staszewski, R.~B., and Xu, K. (2025).
\newblock Adaptive Spatial Goodness Encoding: Advancing and Scaling
Forward-Forward Learning Without Backpropagation.
\newblock \emph{arXiv preprint arXiv:2509.12394}.

\bibitem[Hinton(2022)]{hinton2022forward}
Hinton, G. (2022).
\newblock The Forward-Forward Algorithm: Some Preliminary Investigations.
\newblock \emph{arXiv preprint arXiv:2212.13345}.

\bibitem[Karimi et~al.(2024)Karimi, Kalhor, and Sadeghi Tabrizi]{forwardlayer2024}
Karimi, A., Kalhor, A., and Sadeghi Tabrizi, M. (2024).
\newblock Forward layer-wise learning of convolutional neural networks through
separation index maximizing.
\newblock \emph{Scientific Reports}, 14:8576.

\bibitem[Khosla et~al.(2020)Khosla, Teterwak, Wang, Sarna, Tian, Isola, Maschinot, Liu, and Krishnan]{khosla2020supcon}
Khosla, P., Teterwak, P., Wang, C., Sarna, A., Tian, Y., Isola, P.,
Maschinot, A., Liu, C., and Krishnan, D. (2020).
\newblock Supervised Contrastive Learning.
\newblock In \emph{Advances in Neural Information Processing Systems (NeurIPS)}.

\bibitem[Krizhevsky(2009)]{krizhevsky2009cifar}
Krizhevsky, A. (2009).
\newblock Learning Multiple Layers of Features from Tiny Images.
\newblock Technical Report, University of Toronto.

\bibitem[Krutsylo(2025)]{sff2025}
Krutsylo, A. (2025).
\newblock Scalable Forward-Forward Algorithm.
\newblock \emph{arXiv preprint arXiv:2501.03176}.

\bibitem[Lang(1995)]{lang1995newsweeder}
Lang, K. (1995).
\newblock NewsWeeder: Learning to Filter Netnews.
\newblock In \emph{Proceedings of the 12th International Conference on Machine
Learning (ICML)}.

\bibitem[Le and Yang(2015)]{le2015tinyimagenet}
Le, Y. and Yang, X. (2015).
\newblock Tiny ImageNet Visual Recognition Challenge.
\newblock Technical Report CS 231N, Stanford University.

\bibitem[Lee et~al.(2015a)Lee, Zhang, Fischer, and Bengio]{lee2015difference}
Lee, D.-H., Zhang, S., Fischer, A., and Bengio, Y. (2015a).
\newblock Difference Target Propagation.
\newblock In \emph{Joint European Conference on Machine Learning and Knowledge
Discovery in Databases (ECML PKDD)}.

\bibitem[Lee et~al.(2015b)Lee, Xie, Gallagher, Zhang, and Tu]{lee2015dsn}
Lee, C.-Y., Xie, S., Gallagher, P., Zhang, Z., and Tu, Z. (2015b).
\newblock Deeply-Supervised Nets.
\newblock In \emph{Proceedings of the 18th International Conference on
Artificial Intelligence and Statistics (AISTATS)}.

\bibitem[Lorberbom et~al.(2023)Lorberbom, Gat, Adi, Schwing, and Hazan]{layercollab2023}
Lorberbom, G., Gat, I., Adi, Y., Schwing, A., and Hazan, T. (2023).
\newblock Layer Collaboration in the Forward-Forward Algorithm.
\newblock \emph{arXiv preprint arXiv:2305.12393}.

\bibitem[Loshchilov and Hutter(2019)]{loshchilov2019adamw}
Loshchilov, I. and Hutter, F. (2019).
\newblock Decoupled Weight Decay Regularization.
\newblock In \emph{International Conference on Learning Representations (ICLR)}.

\bibitem[Maas et~al.(2011)Maas, Daly, Pham, Huang, Ng, and Potts]{maas2011imdb}
Maas, A.~L., Daly, R.~E., Pham, P.~T., Huang, D., Ng, A.~Y., and Potts, C.
(2011).
\newblock Learning Word Vectors for Sentiment Analysis.
\newblock In \emph{Proceedings of the 49th Annual Meeting of the Association
for Computational Linguistics (ACL)}.

\bibitem[Papachristodoulou et~al.(2024)Papachristodoulou, Kyrkou, Timotheou, and Theocharides]{cwc2024}
Papachristodoulou, A., Kyrkou, C., Timotheou, S., and Theocharides, T.
(2024).
\newblock Convolutional Channel-wise Competitive Learning for the
Forward-Forward Algorithm.
\newblock In \emph{Proceedings of the AAAI Conference on Artificial
Intelligence (AAAI)}, volume~38. arXiv:2312.12668.

\bibitem[Raghu et~al.(2021)Raghu, Unterthiner, Kornblith, Zhang, and Dosovitskiy]{raghu2021vision}
Raghu, M., Unterthiner, T., Kornblith, S., Zhang, C., and Dosovitskiy, A. (2021).
\newblock Do Vision Transformers See Like Convolutional Neural Networks?
\newblock In \emph{Advances in Neural Information Processing Systems (NeurIPS)}.

\bibitem[Rao and Ballard(1999)]{rao1999predictive}
Rao, R.~P.~N. and Ballard, D.~H. (1999).
\newblock Predictive coding in the visual cortex: A functional interpretation
of some extra-classical receptive-field effects.
\newblock \emph{Nature Neuroscience}, 2(1):79--87.

\bibitem[Sarode et~al.(2026)Sarode, Moser, Folz, Raue, Nauen, Frolov, and Dengel]{hff2026}
Sarode, S., Moser, B., Folz, J., Raue, F., Nauen, T., Frolov, S., and Dengel,
A. (2026).
\newblock Hyperspherical Forward-Forward with Prototypical
Representations.
\newblock \emph{arXiv preprint arXiv:2605.00082}.

\bibitem[Shazeer(2020)]{shazeer2020glu}
Shazeer, N. (2020).
\newblock GLU Variants Improve Transformer.
\newblock \emph{arXiv preprint arXiv:2002.05202}.

\bibitem[Shazeer et~al.(2017)Shazeer, Mirhoseini, Maziarz, Davis, Le, Hinton, and Dean]{shazeer2017outrageously}
Shazeer, N., Mirhoseini, A., Maziarz, K., Davis, A., Le, Q., Hinton, G., and
Dean, J. (2017).
\newblock Outrageously Large Neural Networks: The Sparsely-Gated
Mixture-of-Experts Layer.
\newblock In \emph{International Conference on Learning Representations (ICLR)}.

\bibitem[Su et~al.(2024)Su, Ahmed, Lu, Pan, Bo, and Liu]{su2024roformer}
Su, J., Ahmed, M., Lu, Y., Pan, S., Bo, W., and Liu, Y. (2024).
\newblock RoFormer: Enhanced Transformer with Rotary Position Embedding.
\newblock \emph{Neurocomputing}, 568:127063.

\bibitem[Sun et~al.(2025)Sun, Zhang, He, Wen, Shen, and Xie]{deeperforward2025}
Sun, L., Zhang, Y., He, W., Wen, J., Shen, L., and Xie, W. (2025).
\newblock DeeperForward: Enhanced Forward-Forward Training for Deeper
and Better Performance.
\newblock In \emph{The Thirteenth International Conference on Learning
Representations (ICLR)}.

\bibitem[Wu et~al.(2024)Wu, Xu, Wu, Deng, Xu, Wen, and Li]{distanceforward2024}
Wu, Y., Xu, S., Wu, J., Deng, L., Xu, M., Wen, Q., and Li, G. (2024).
\newblock Distance-Forward Learning: Enhancing the Forward-Forward
Algorithm Towards High-Performance On-Chip Learning.
\newblock \emph{arXiv preprint arXiv:2408.14925}.

\bibitem[Yang et~al.(2024)Yang, Zhang, Song, Zhang, Zhang, Ma, and Yu]{ffcomplete2024}
Yang, L., Zhang, H., Song, Z., Zhang, J., Zhang, W., Ma, J., and Yu, P.~S.
(2024).
\newblock Cyclic Neural Network.
\newblock \emph{arXiv preprint arXiv:2402.03332}.

\bibitem[Yun et~al.(2019)Yun, Han, Oh, Chun, Choe, and Yoo]{yun2019cutmix}
Yun, S., Han, D., Oh, S.~J., Chun, S., Choe, J., and Yoo, Y. (2019).
\newblock CutMix: Regularization Strategy to Train Strong Classifiers
with Localizable Features.
\newblock In \emph{Proceedings of the IEEE/CVF International Conference
on Computer Vision (ICCV)}.

\bibitem[Zhang et~al.(2017)Zhang, Bengio, Hardt, Recht, and Vinyals]{zhang2017understanding}
Zhang, C., Bengio, S., Hardt, M., Recht, B., and Vinyals, O. (2017).
\newblock Understanding Deep Learning Requires Rethinking Generalization.
\newblock In \emph{International Conference on Learning Representations (ICLR)}.

\bibitem[Zhang et~al.(2018)Zhang, Cisse, Dauphin, and Lopez-Paz]{zhang2018mixup}
Zhang, H., Cisse, M., Dauphin, Y.~N., and Lopez-Paz, D. (2018).
\newblock mixup: Beyond Empirical Risk Minimization.
\newblock In \emph{International Conference on Learning Representations (ICLR)}.

\bibitem[Zhang et~al.(2015)Zhang, Zhao, and LeCun]{zhang2015charcnn}
Zhang, X., Zhao, J., and LeCun, Y. (2015).
\newblock Character-level Convolutional Networks for Text Classification.
\newblock In \emph{Advances in Neural Information Processing Systems (NeurIPS)}.

\bibitem[Zoph et~al.(2022)Zoph, Bello, Kumar, Du, Huang, Dean, Shazeer, and Fedus]{zoph2022stmoe}
Zoph, B., Bello, I., Kumar, S., Du, N., Huang, Y., Dean, J., Shazeer, N., and
Fedus, W. (2022).
\newblock ST-MoE: Designing Stable and Transferable Sparse Expert Models.
\newblock \emph{arXiv preprint arXiv:2202.08906}.

\end{thebibliography}
\end{document}